\newcommand*{\addFileDependency}[1]{
  \typeout{(#1)}
  \@addtofilelist{#1}
  \IfFileExists{#1}{}{\typeout{No file #1.}}
}
\newcommand{\mytitle}{Scalable Metropolis--Hastings for Exact Bayesian Inference with Large Datasets}
\icmltitlerunning{\mytitle}
\newtheorem{theorem}{Theorem}[section]
\newtheorem{lemma}{Lemma}[section]
\newtheorem{proposition}{Proposition}[section]
\newtheorem{corollary}{Corollary}[section]
\newcommand{\Z}{\mathbb{Z}}
\newcommand{\R}{\mathbb{R}}
\DeclareMathOperator*{\argmin}{arg\,min}
\newcommand{\set}[1]{\{#1\}}
\newcommand{\abs}[1]{|#1|}
\newcommand{\Abs}[1]{\left|#1\right|}
\newcommand{\norm}[1]{\lVert#1\rVert}
\newcommand{\inner}[2]{\langle#1, #2\rangle}
\newcommand{\E}{\mathbb{E}}
\newcommand{\Var}{\mathrm{Var}}
\newcommand{\var}{\mathrm{var}}
\renewcommand{\P}{\mathbb{P}}
\newcommand{\iid}{\overset{\mathrm{iid}}{\sim}}
\newcommand{\Pto}[1]{\overset{#1}{\to}}
\newcommand{\ind}{\mathbb{I}}
\newcommand{\Normal}{\mathrm{Normal}}
\newcommand{\Bernoulli}{\mathrm{Bernoulli}}
\newcommand{\Poisson}{\mathrm{Poisson}}
\newcommand{\Categorical}{\mathrm{Categorical}}
\newcommand{\Leb}{\mathrm{Leb}}
\newcommand{\Student}{\mathrm{Student}}
\DeclarePairedDelimiterX{\infdivx}[2]{(}{)}{#1\;\delimsize\|\;#2}
\newcommand{\KL}{D_{\mathrm{KL}}\infdivx*}
\newcommand{\opnorm}[1]{\norm{#1}_{\mathrm{op}}}
\newcommand{\TV}[1]{\norm{#1}_{\mathrm{TV}}}
\newcommand{\Pgeneric}{P}
\newcommand{\Pfmh}{\Pgeneric_\mathrm{FMH}}
\newcommand{\Pmh}{\Pgeneric_\mathrm{MH}}
\newcommand{\Ptfmh}{\Pgeneric_\mathrm{TFMH}}
\newcommand{\Ptrue}{{\Pgeneric_0}}
\newcommand{\ptrue}{{p_0}}
\newcommand{\ageneric}{\alpha}
\newcommand{\afmh}{{\ageneric_{\mathrm{FMH}}}}
\newcommand{\amh}{\ageneric_{\mathrm{MH}}}
\newcommand{\atfmh}{\ageneric_{\mathrm{TFMH}}}
\newcommand{\asmh}[1]{\ageneric_{\mathrm{SMH}\text{-}{#1}}}
\newcommand{\smh}[1]{SMH-#1}
\newcommand{\rmh}{r_{\mathrm{MH}}}
\newcommand{\rfmh}{r_{\mathrm{FMH}}}
\newcommand{\rtfmh}{r_{\mathrm{TFMH}}}
\newcommand{\prop}{q}
\newcommand{\target}{\pi}
\newcommand{\approxtarget}{\widehat{\target}}
\newcommand{\basetarget}{\widetilde{\target}}
\newcommand{\pot}{U}
\newcommand{\approxpot}{\widehat{\pot}}
\newcommand{\potgradub}{\overline{\pot}}
\newcommand{\rpropmat}{A}
\newcommand{\rpropvec}{b}
\newcommand{\rpropcov}{C}
\newcommand{\idmat}{I_d}
\newcommand{\state}{\theta}
\newcommand{\nstate}{\state'}
\newcommand{\modestate}{\widehat{\state}}
\newcommand{\mapstate}{\state_{\mathrm{MAP}}}
\newcommand{\mlestate}{\state_{\mathrm{MLE}}}
\newcommand{\statespace}{\mathit{\Theta}}
\newcommand{\truestate}{\state_0}
\newcommand{\convstate}{\state^\ast}
\newcommand{\data}{y}
\newcommand{\Data}{Y}
\newcommand{\dataspace}{\mathcal{Y}}
\newcommand{\intensity}{\lambda}
\newcommand{\bound}{\overline{\intensity}}
\newcommand{\statebound}{\varphi}
\newcommand{\databound}{\psi}
\newcommand{\Databound}{\Psi}
\newcommand{\transrad}{R}
\newcommand{\Lobj}{\mathcal{L}}
\newcommand{\midx}{\beta}
\newcommand{\Gap}{\mathrm{Gap}}
\begin{document}

\twocolumn[
    \icmltitle{\mytitle}



\begin{icmlauthorlist}
\icmlauthor{Rob Cornish}{ox}
\icmlauthor{Paul Vanetti}{ox}
\icmlauthor{Alexandre Bouchard-C\^ot\'e}{ubc}
\icmlauthor{George Deligiannidis}{ox,ati}
\icmlauthor{Arnaud Doucet}{ox,ati}
\end{icmlauthorlist}

\icmlaffiliation{ox}{University of Oxford, Oxford, United Kingdom}
\icmlaffiliation{ubc}{University of British Columbia, Vancouver, Canada}
\icmlaffiliation{ati}{The Alan Turing Institute, London, United Kingdom}
\icmlcorrespondingauthor{Rob Cornish}{rcornish@robots.ox.ac.uk}

\icmlkeywords{Machine Learning, ICML, Markov Chain Monte Carlo, MCMC, Subsampling, Bayesian, Scalable, SMH, FMH}

    \vskip 0.3in
]

\printAffiliationsAndNotice{}

\begin{abstract}
Bayesian inference via standard Markov Chain Monte Carlo (MCMC) methods is too computationally intensive to handle large datasets, since the cost per step usually scales like $\Theta(n)$ in the number of data points $n$. We propose the \emph{Scalable Metropolis--Hastings} (SMH) kernel that exploits Gaussian concentration of the posterior to require processing on average only $O(1)$ or even $O(1/\sqrt{n})$ data points per step. This scheme is based on a combination of factorized acceptance probabilities, procedures for fast simulation of Bernoulli processes, and control variate ideas. Contrary to many MCMC subsampling schemes such as fixed step-size Stochastic Gradient Langevin Dynamics, our approach is exact insofar as the invariant distribution is the true posterior and not an approximation to it. We characterise the performance of our algorithm theoretically, and give realistic and verifiable conditions under which it is geometrically ergodic. This theory is borne out by empirical results that demonstrate overall performance benefits over standard Metropolis--Hastings and various subsampling algorithms.
\end{abstract}
\section{Introduction} \label{sec:introduction}

Bayesian inference is concerned with the posterior distribution $p(\state|\data_{1:n})$, where $\state \in \statespace = \R^d$ denotes parameters of interest and $\data_{1:n} = (\data_1, \cdots, \data_n) \in \dataspace^n$ are observed data. We assume the prior admits a Lebesgue density  $p(\state)$ and that the data are conditionally independent given $\state$ with likelihoods $p(\data_i|\state)$, which means
\[
    p(\state|\data_{1:n}) \propto p(\state) \prod_{i=1}^n p(\data_i|\state).
\]
In most cases of interest, $p(\state|\data_{1:n})$ does not admit a closed-form expression and so we must resort to a Markov Chain Monte Carlo (MCMC) approach. However, standard MCMC schemes can become very computationally expensive for large datasets. For example, the Metropolis--Hastings (MH) algorithm requires computing a likelihood ratio $p(\data_{1:n}|\state')/p(\data_{1:n}|\state)$ at each iteration. A direct implementation of this algorithm thus requires computational cost $\Theta(n)$ per step, which is prohibitive for large $n$.

\begin{figure}\label{fig:Averagenumberlikelihoodevaluations}
\begin{center}
\centerline{\includegraphics[width=\columnwidth]{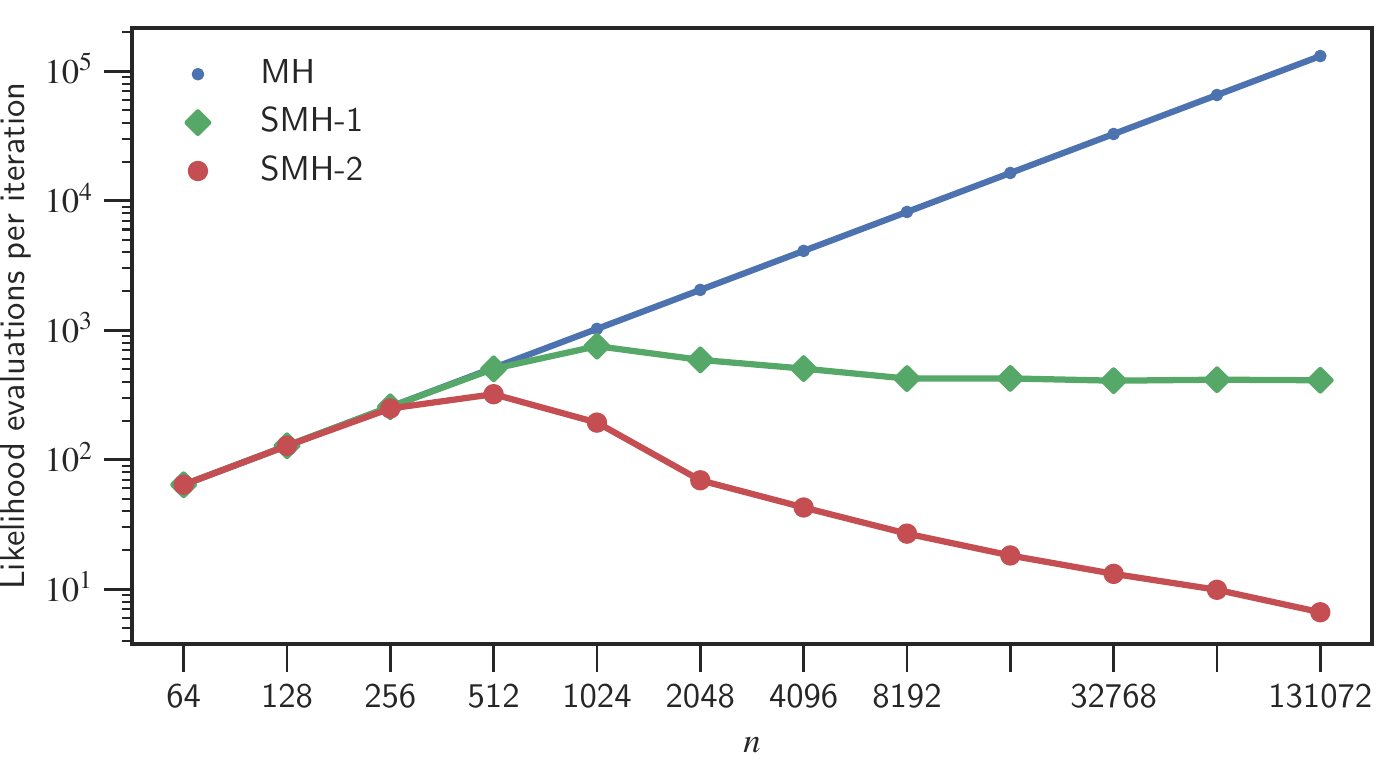}}
\caption{Average number of likelihood evaluations per iteration required by SMH for a 10-dimensional logistic regression posterior as the number of data points $n$ increases. \smh{1} uses a first-order approximation to the target and \smh{2} a second-order one.}
\label{fig:likelihoods_per_iteration}
\end{center}
\end{figure}




Many ideas for mitigating this cost have been suggested; see \citet{bardenet2017markov} for a recent review. Broadly speaking these approaches are distinguished by whether they exactly preserve the true posterior as the invariant distribution of the Markov chain produced. Approximate methods that have been proposed include \emph{divide-and-conquer} schemes, which run parallel MCMC chains on a partition of the data \cite{neiswanger2013asymptotically, scott2016bayes}. Other approaches replace the likelihood ratio in MH with an approximation computed from a subsample of observations. The error introduced can be controlled heuristically using central limit theorem approximations \cite{korattikara2013austerity} or rigorously via concentration inequalities \cite{bardenet2014towards,bardenet2017markov,quiroz2014speeding}. Another popular class of schemes is based on Stochastic Gradient Langevin Dynamics (SGLD) \cite{welling2011bayesian,dubey2016variance,baker2017control,brosse2018promises,chatterji2018theory}, which is a time-discretized Langevin dynamics where the gradient of the log-likelihood is approximated by subsampling. SGLD is usually implemented using a fixed step-size discretization, which does not exactly preserve the posterior distribution.
Finally, \citet{quiroz2016block} and \citet{dang2017hamiltonian} propose schemes that do not preserve the posterior exactly, but yield consistent estimates of posterior expectations after an importance sampling correction.

In addition to these approximate methods, several MCMC methods exist that do preserve the target as invariant distribution while only requiring access to a subset of the data at each iteration. However, various restrictions of these approaches have so far limited their widespread use.
Firefly Monte Carlo \cite{maclaurin2014firefly} considers an extended target that can be evaluated using a subset of the data at each iteration, but requires the user specify global lower bounds to the likelihood factors that can be difficult to derive. It is as yet also unclear what the convergence properties of this scheme are. Delayed acceptance schemes have been proposed based on a factorized version of the MH acceptance probability \cite{banterle2015accelerating} and on a random subsample of the data \cite{payne2018two}. These methods allow rejecting a proposal without computing every likelihood term, but still require evaluating each term in order to accept. \citet{quiroz2018speeding} combine the latter with the approximate subsampling approach of \citet{quiroz2014speeding} to mitigate this problem. Finally, various non-reversible continuous-time MCMC schemes based on Piecewise Deterministic Markov Processes have been proposed which, when applied to large-scale datasets \cite{bouchard2015bouncy,bierkens2016zigzag}, only require evaluating the gradient of the log-likelihood for a subset of the data. However, these schemes can be difficult to understand theoretically, falling outside the scope of existing geometric ergodicity results, and can be challenging to implement.

In this paper we present a novel MH-type subsampling scheme that exactly preserves the posterior as the invariant distribution while still enjoying attractive theoretical properties and being straightforward to implement and tune. We make use of a combination of a factorized MH acceptance probability \cite{ceperley1995path,christen2005markov,banterle2015accelerating,michel2017clock,vanetti2017piecewise} and fast methods for sampling non-homogeneous Bernoulli processes \cite{shanthikumar1985discrete,devroye1986,fukui2009order,michel2017clock,vanetti2017piecewise} to allow iterating without computing every likelihood factor. The combination of these ideas has proven useful for some physics models \cite{michel2017clock}, but a na\"ive application is not efficient for large-scale Bayesian inference. Our contribution here is an MH-style MCMC kernel that realises the potential computational benefits of this method in the Bayesian setting. We refer to this kernel as \emph{Scalable Metropolis-Hastings} (SMH) and, in addition to empirical results, provide a rigorous theoretical analysis of its behaviour under realistic and verifiable assumptions. In particular, we show SMH requires on average only $O(1)$ or even $O(1/\sqrt{n})$ cost per step as illustrated in Figure \ref{fig:Averagenumberlikelihoodevaluations}, has a non-vanishing average acceptance probability in the stationary regime, and is geometrically ergodic under mild conditions.

Key to our approach is the use of \emph{control variate} ideas, which allow us to exploit the concentration around the mode frequently observed for posterior distributions with large datasets. Control variate ideas based on posterior concentration have been used successfully for large-scale Bayesian analysis in numerous recent contributions \cite{dubey2016variance,bardenet2017markov,baker2017control,brosse2018promises,bierkens2016zigzag,chatterji2018theory,quiroz2014speeding}. In our setting, this may be understood as making use of a computationally cheap approximation of the posterior.


The Supplement contains all our proofs as well as a guide to our notation in Section \ref{sec:notation}.
\section{Factorised Metropolis-Hastings} \label{sec:fmh}

We first review the use of a factorised acceptance probability inside an MH-style algorithm. For now we assume a generic target $\target(\state)$ before specialising to the Bayesian setting below.

\subsection{Transition Kernel}
Assume our target $\pi(\state)$ and proposal $q(\state,\state')$ factorise like
\[
  \target(\state) \propto \prod_{i=1}^m \target_i(\state) \quad\quad \prop(\state, \nstate) \propto \prod_{i=1}^m \prop_i(\state, \nstate)
\]
for some $m \geq 1$ and some choice of non-negative functions $\target_i$ and $\prop_i$. These factors are not themselves required to be integrable; for instance, we may take any $\target_i, \prop_i \equiv 1$. Define the \emph{Factorised Metropolis-Hastings} (FMH) kernel
\begin{multline} \label{eq:fmh-kernel}
  \Pfmh(\state, A)
    := \left(1 - \int \prop(\state, \nstate) \afmh(\state, \nstate) d\nstate\right) \ind_A(\state) \\
      + \int_A \prop(\state, \nstate) \afmh(\state, \nstate) d\nstate,
\end{multline}
where $\state \in \statespace$, $A \subseteq \statespace$ is measurable, and the FMH acceptance probability is defined
\begin{equation} \label{eq:fmh_acc_prob}
  \afmh(\state, \nstate) 
  := \prod_{i=1}^m \underbrace{1 \wedge \frac{\target_i(\nstate) \prop_i(\nstate, \state)}{\target_i(\state) \prop_i(\state, \nstate)}}_{=:\afmh_i(\state, \nstate)}.
\end{equation}
It is straightforward and well-known that $\Pfmh$ is $\target$-reversible; see Section \ref{SUPP:fmh-reversible} in the Supplement for a proof. Factorised acceptance probabilities have appeared numerous times in the literature and date back at least to \cite{ceperley1995path}. The MH acceptance probability $\amh$ and kernel $\Pmh$ correspond to $\afmh$ and $\Pfmh$ when $m = 1$.

\subsection{Poisson Subsampling Implementation} \label{sec:subsampling-via-ppp}

The acceptance step of $\Pfmh$ can be implemented by sampling directly $m$ independent Bernoulli trials with success probability $1 - \afmh_{i}$, and returning $\nstate$ if every trial is a failure. Since we can reject $\state'$ as soon as a single success occurs, this allows us potentially to reject $\nstate$ without computing each factor at each iteration \cite{christen2005markov,banterle2015accelerating}.

However, although this can lead to efficiency gains in some contexts, it remains of limited applicability for Bayesian inference with large datasets since we are still forced to compute every factor whenever we accept a proposal. It was realized independently by \citet{michel2017clock} and \citet{vanetti2017piecewise} that if one has access to lower bounds on $\afmh_{i}(\state, \nstate)$, hence to an upper bound on $1-\afmh_{i}(\state, \nstate)$, then techniques for fast simulation of Bernoulli random variables can be used that potentially avoid this problem. One such technique is given by the discrete-time thinning algorithms introduced in \cite{shanthikumar1985discrete}; see also \citep[Chapter~VI~Sections~3.3-3.4]{devroye1986}. This is used in \cite{michel2017clock}.

We use here an original variation of a scheme developed in \cite{fukui2009order}. Denote
\[
  \intensity_i(\state, \nstate) := -\log \afmh_i(\state, \nstate),
\]
and assume we have the bounds
\begin{equation} \label{eq:lambda-bounds}
  \intensity_i(\state, \nstate) \leq \statebound(\state, \nstate) \databound_i
  := \bound_i(\state, \nstate)
\end{equation}
for nonnegative $\statebound, \databound_i$. This condition holds for a variety of statistical models: for instance, if $\target_i$ is log-Lipschitz and $\prop$ is symmetric with (say) $\prop_i = \prop^{1/m}$, then
\begin{equation} \label{eq:lipschitz-bounds}
    \intensity_i(\state, \nstate) \leq K_i \norm{\state - \nstate}.
\end{equation}
This case illustrates that \eqref{eq:lambda-bounds} is usually a \emph{local} constraint on the target and therefore not as strenuous as the global lower-bounds required by Firefly \cite{maclaurin2014firefly}. We exploit this to provide a methodology for producing $\statebound$ and $\databound$ mechanically when we consider Bayesian targets in Section \ref{sec:big-data}. Letting $\bound(\state, \nstate) := \sum_{i=1}^m \bound_i(\state, \nstate)$, it follows that if
\begin{itemize}[label={}, leftmargin=*]
  \item $N \sim \Poisson\left(\bound(\state, \nstate)\right)$
  \item $X_1, \cdots, X_N \iid \Categorical((\bound_i(\state,
    \nstate) / \bound(\state, \nstate))_{1\leq i \leq m})$
  \item $B_j \sim \Bernoulli(\intensity_{X_j}(\state, \nstate) /
   \bound_{X_j}(\state, \nstate))$ independently for $1 \leq j \leq N$
\end{itemize}
then $\P(B = 0) = \afmh(\state, \nstate)$ where $B = \sum_{j=1}^N B_j$ (and $B = 0$ if $N = 0$). See Proposition
\ref{prop:poisson-subsampling} in the Supplement for a proof. These steps may be interpreted as sampling a discrete Poisson point process with intensity $\intensity_i(\state, \nstate)$ on $i \in \set{1, \cdots, m}$ via thinning \cite{devroye1986}. Thus, to perform the FMH acceptance step, we can simulate these $B_j$ and check whether each is $0$.

We may exploit \eqref{eq:lambda-bounds} to sample each $X_j$ and $B_j$ in $O(1)$ time per MCMC step as $m \to \infty$ after paying some once-off setup costs. Note that
\begin{equation} \label{eq:upper-bounds-sum}
  \bound(\state, \nstate) = \statebound(\state, \nstate) \sum_{i=1}^m \databound_i, 
\end{equation}
so that we may compute $\bound(\state, \nstate)$ in $O(1)$ time per iteration by simply
evaluating $\statebound(\state, \nstate)$ if we pre-compute $\sum_{i=1}^m \databound_i$ ahead of
our run. This incurs a one-time cost of $\Theta(m)$, but assuming our run is long enough
this will be negligible overall. Similarly, note that
\[
  \frac{\bound_i(\state, \nstate)}{\bound(\state, \nstate)}
    = \frac{\databound_i}{\sum_{j=1}^m \databound_j},
\]
so that $\Categorical((\bound_i(\state, \nstate) / 
\bound(\state, \nstate))_{1\leq i \leq m})$ does not depend on
$\state, \nstate$. Thus, we can sample each $X_i$ in $O(1)$ time using Walker's alias
method \cite{walker1977efficient,kronmal1979alias} having paid another once-off $\Theta(m)$ cost.

Algorithm \ref{alg:poisson-sampling} shows how to implement $\Pfmh$ using this approach. Observe that if $N < m$ we are guaranteed not to evaluate every target factor even if we accept the proposal $\nstate$. Of course, since $N$ is random, in general it is not obvious that $N \ll m$ will necessarily hold on average, and indeed this will not be so for a na\"ive factorisation. We show in Section \ref{sec:big-data} how to use Algorithm \ref{alg:poisson-sampling} as the basis of an efficient subsampling method for Bayesian inference.

In many cases we will not have bounds of the form \eqref{eq:lambda-bounds} for every factor. However, Algorithm \ref{alg:poisson-sampling} can still be useful provided the computational cost for
computing these extra factors is $O(1)$. In this case we can directly simulate a Bernoulli trial for each additional factor, which by assumption does not change the asymptotic complexity of this method.

\begin{algorithm}[tb]
    \caption{Efficient implementation of the FMH kernel. $\mathrm{Setup()}$ is called once
    prior to starting the MCMC run.}
   \label{alg:poisson-sampling}
\begin{algorithmic}
  \FUNCTION{$\mathrm{Setup()}$}
    \STATE $\Databound \gets \sum_{i=1}^m \databound_i$
    \STATE $\tau \gets \mathrm{AliasTable}((\databound_i/\Databound)_{1 \leq i \leq m})$
  \ENDFUNCTION
  \STATE
  \FUNCTION{$\mathrm{FmhKernel}(\state)$}
    \STATE $\nstate \sim \prop(\state, \cdot)$
    \STATE $N \sim \Poisson(\statebound(\state, \nstate) \Databound)$
    \FOR{$j \in 1,...,N$}
      \STATE $X_j \sim \tau$
      \STATE $B_j \sim \Bernoulli(\intensity_{X_j}(\state, \nstate) / \bound_{X_j}(\state, \nstate))$
      \IF{$B_j = 1$}
      \STATE {\bf return} $\state$
      \ENDIF
    \ENDFOR
    \STATE {\bf return} $\nstate$
  \ENDFUNCTION
\end{algorithmic}
\end{algorithm}

\subsection{Geometric Ergodicity} \label{sec:geom-erg}

We consider now the theoretical implications of using $\Pfmh$ rather than $\Pmh$. We refer the reader to Section \ref{SUPP:sec:ergodic-props} in the Supplement for a review of the relevant definitions and theory of Markov chains. It is straightforward to show and well-known that the following holds.
\begin{proposition} \label{prop:fmh-lowers-accprob}
    For all $\state, \nstate \in \statespace$, $\afmh(\state, \nstate) \leq \amh(\state, \nstate)$.
\end{proposition}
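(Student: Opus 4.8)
The statement is that $\afmh(\state,\nstate) \le \amh(\state,\nstate)$ for all $\state,\nstate$. My plan is to compare the two acceptance probabilities directly using the identity $\amh(\state,\nstate) = 1 \wedge \prod_{i=1}^m \frac{\target_i(\nstate)\prop_i(\nstate,\state)}{\target_i(\state)\prop_i(\state,\nstate)}$, which holds because $\target \propto \prod_i \target_i$ and $\prop \propto \prod_i \prop_i$, so the full MH ratio $\rmh(\state,\nstate)$ factorises as $\prod_i r_i(\state,\nstate)$ where $r_i(\state,\nstate) := \frac{\target_i(\nstate)\prop_i(\nstate,\state)}{\target_i(\state)\prop_i(\state,\nstate)}$. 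Thus I need to show $\prod_{i=1}^m (1 \wedge r_i) \le 1 \wedge \prod_{i=1}^m r_i$.

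The key elementary fact is the following: for any nonnegative reals $a_1,\dots,a_m$, one has $\prod_{i=1}^m (1 \wedge a_i) \le 1 \wedge \prod_{i=1}^m a_i$. This is proved by a two-line argument: the left-hand side is clearly $\le 1$ since each factor is $\le 1$, and it is clearly $\le \prod_i a_i$ since $1 \wedge a_i \le a_i$ for each $i$ (all factors nonnegative, so the product of the smaller terms is no larger); taking the minimum of these two upper bounds gives the claim. An easy induction on $m$ also works if one prefers: $(1 \wedge a_1)(1 \wedge a_2) \le (1 \wedge a_1) \wedge (1 \wedge a_1) a_2 \le 1 \wedge a_1 a_2$, etc. Either way this handles the degenerate cases ($r_i = 0$ or $+\infty$, i.e. vanishing denominators or numerators) harmlessly since we only manipulate inequalities between nonnegative extended reals.

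Applying this with $a_i = r_i(\state,\nstate)$ and observing $\amh(\state,\nstate) = 1 \wedge \prod_i r_i(\state,\nstate)$ while $\afmh(\state,\nstate) = \prod_i (1 \wedge r_i(\state,\nstate))$ by definition \eqref{eq:fmh_acc_prob}, the result follows immediately. There is no real obstacle here; the only point requiring a line of care is justifying the factorisation of the MH ratio, which is just substituting the product forms of $\target$ and $\prop$ and noting the unknown normalising constants cancel in the ratio, together with a remark that the inequality is interpreted in $[0,\infty]$ so that the edge cases where some $r_i$ is $0$ or undefined cause no difficulty. This matches the paper's own characterisation of the result as "straightforward and well-known."
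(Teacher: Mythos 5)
Your proof is correct, but it takes a different route from the paper's. You establish the elementary inequality $\prod_{i=1}^m (1 \wedge a_i) \leq 1 \wedge \prod_{i=1}^m a_i$ directly (the left side is at most $1$ because each factor is, and at most $\prod_i a_i$ because $1 \wedge a_i \leq a_i$), then apply it to the factorised MH ratio. The paper instead derives the result as a corollary of the exact identity $\afmh(\state, \nstate) = \amh(\state, \nstate)\bigl(\afmh(\state, \nstate) \vee \afmh(\nstate, \state)\bigr)$ (Proposition \ref{prop:afmh-amh-relation}), from which the inequality follows since the second factor is at most $1$. Your argument is shorter and more self-contained for this one statement; the paper's detour buys an exact multiplicative relationship that is reused elsewhere (it quantifies the loss $\afmh/\amh$, underlies the reversibility proof of $\Pfmh$, and supplies the constant $\delta$ in the geometric ergodicity theorem), so the authors get the corollary essentially for free. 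Your handling of the degenerate cases is also fine: with the paper's convention (Section \ref{sec:acc-probs}) that both acceptance probabilities equal $1$ when some factor of $\target(\state)\prop(\state,\nstate)$ vanishes, the inequality holds trivially there, and your extended-real reading covers the remaining cases where a numerator vanishes.
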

See Section \ref{sec:acc-probs} in the Supplement for a proof. As such, we do not expect FMH to enjoy better convergence properties than MH. Indeed, Proposition \ref{prop:fmh-lowers-accprob} immediately entails that FMH produces ergodic averages of higher asymptotic variance than standard MH \cite{peskun1973optimum,tierney1998note}. Moreover $\Pfmh$ can fail to be geometrically ergodic even when $\Pmh$ is, as noticed by \citet{banterle2015accelerating}. Geometric ergodicity is a desirable property of MCMC algorithms because it ensures the central limit theorem holds for some ergodic averages \citep[Corollary 2.1]{roberts1997geometric}. The central limit theorem in turn is the foundation of  principled stopping criteria based on Monte Carlo standard errors \cite{jones_honest_2001}.

To address the fact that $\Pfmh$ might not be geometrically ergodic, we introduce the \emph{Truncated FMH} (TFMH) kernel $\Ptfmh$ which is obtained by simply replacing in \eqref{eq:fmh-kernel} the term $\afmh(\state, \nstate)$ with the acceptance probability
\begin{equation} \label{eq:tfmh-acc-prob}
  \atfmh(\state, \nstate) := \begin{cases}
    \afmh(\state, \nstate), & \bound(\state, \nstate) < \transrad \\
    \amh(\state, \nstate), & \text{otherwise},
  \end{cases}
\end{equation}
for some choice of $\transrad \in [0, \infty]$. Observe that FMH is a special case of TFMH with $\transrad = \infty$. When $\bound(\state, \nstate)$ is symmetric in $\state$ and $\nstate$, Proposition \ref{prop:ptfmh-reversible} in the Supplement shows that $\Ptfmh$ is still $\target$-reversible. The following theorem shows that under mild conditions TFMH inherits the desirable convergence properties of MH.

\begin{theorem}
  If $\Pmh$ is $\varphi$-irreducible, aperiodic, and geometrically ergodic, then $\Ptfmh$ is too if
  \begin{equation} \label{eq:geom-erg-cond}
    \delta := \inf_{\bound(\state, \nstate) < \transrad} \afmh(\state, \nstate) \vee \afmh(\nstate, \state) > 0.
  \end{equation}
  In this case, $\mathrm{Gap}(\Pfmh) \geq \delta \mathrm{Gap}(\Pmh)$, and for $f \in L^2(\target)$
  \[
    \var(f, \Ptfmh) \leq (\delta^{-1} - 1) \var(f, \target) + \delta^{-1} \var(f, \Pmh).
  \]
\end{theorem}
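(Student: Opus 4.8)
The plan is to show first that
\[
  \atfmh(\state,\nstate) \ \geq\ \delta\,\amh(\state,\nstate)
  \qquad\text{for all } \state,\nstate\in\statespace .
\]
The one genuinely non-obvious ingredient is the identity
\(
  \afmh(\state,\nstate)\vee\afmh(\nstate,\state)=\afmh(\state,\nstate)/\amh(\state,\nstate).
\)
To get it, write $r_i(\state,\nstate):=\target_i(\nstate)\prop_i(\nstate,\state)/(\target_i(\state)\prop_i(\state,\nstate))$, so $r_i(\nstate,\state)=r_i(\state,\nstate)^{-1}$ and $\afmh(\state,\nstate)=\prod_i(1\wedge r_i(\state,\nstate))$. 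Since $\frac{1\wedge x}{1\wedge x^{-1}}=x$ for every $x>0$, one obtains $\afmh(\state,\nstate)/\afmh(\nstate,\state)=\prod_i r_i(\state,\nstate)=r(\state,\nstate)=\amh(\state,\nstate)/\amh(\nstate,\state)$; hence $\gamma(\state,\nstate):=\afmh(\state,\nstate)/\amh(\state,\nstate)$ is symmetric, and since $\amh(\state,\nstate)\vee\amh(\nstate,\state)=1$ always, $\afmh(\state,\nstate)\vee\afmh(\nstate,\state)=\gamma(\state,\nstate)\,[\amh(\state,\nstate)\vee\amh(\nstate,\state)]=\gamma(\state,\nstate)$. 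Thus $\delta=\inf_{\bound(\state,\nstate)<\transrad}\gamma(\state,\nstate)$, with $\gamma\in[0,1]$ by Proposition~\ref{prop:fmh-lowers-accprob}. On $\{\bound<\transrad\}$ we then have $\atfmh=\afmh=\gamma\,\amh\geq\delta\,\amh$; on $\{\bound\geq\transrad\}$ we have $\atfmh=\amh\geq\delta\,\amh$ since $\delta\leq1$ (assuming, as throughout this section, that $\bound$ is symmetric so that $\Ptfmh$ is $\target$-reversible, and that $\{\bound<\transrad\}$ is nonempty). Combined with Proposition~\ref{prop:fmh-lowers-accprob} this gives $\delta\,\amh\leq\atfmh\leq\amh$ pointwise.

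\textbf{Step 2: kernel decomposition.} From $\delta\,\amh\leq\atfmh\leq\amh$ I would deduce that $Q:=(1-\delta)^{-1}(\Ptfmh-\delta\Pmh)$ is a genuine $\target$-reversible Markov kernel (when $\delta=1$, $\atfmh=\amh$ and $\Ptfmh=\Pmh$ already). Nonnegativity splits into the off-diagonal part, where $\atfmh(\state,\nstate)\prop(\state,\nstate)\geq\delta\,\amh(\state,\nstate)\prop(\state,\nstate)$ by Step 1, and the rejection mass, where one checks $\int(\atfmh-\delta\amh)\prop\leq(1-\delta)\int\amh\prop\leq1-\delta$ using $\atfmh\leq\amh$; total mass is $(1-\delta)^{-1}(1-\delta)=1$; and reversibility is inherited as a linear combination of the $\target$-reversible kernels $\Ptfmh$ and $\Pmh$. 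In particular $\Ptfmh(\state,\cdot)\geq\delta\,\Pmh(\state,\cdot)$ as measures, which iterates to $\Ptfmh^n\geq\delta^n\Pmh^n$, so $\varphi$-irreducibility transfers from $\Pmh$ to $\Ptfmh$; aperiodicity transfers by a standard argument (the truncated chain has holding probability $\geq$ that of $\Pmh$, by $\atfmh\leq\amh$, so no nontrivial cyclic decomposition can survive).

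\textbf{Step 3: spectral gap, geometric ergodicity, and the variance bound.} View $\Pmh,Q,\Ptfmh$ as self-adjoint contractions on $L^2_0(\target)$. From $\Ptfmh=\delta\Pmh+(1-\delta)Q$ and $\norm{Q}\leq1$ we get $\sup\mathrm{spec}(\Ptfmh)\leq\delta\sup\mathrm{spec}(\Pmh)+(1-\delta)$ and $\norm{\Ptfmh}\leq\delta\norm{\Pmh}+(1-\delta)<1$, where $\norm{\Pmh}<1$ because a $\varphi$-irreducible aperiodic reversible chain is geometrically ergodic iff it has an $L^2$ spectral gap \citep{roberts1997geometric}. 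The first inequality gives $\Gap(\Ptfmh)\geq\delta\,\Gap(\Pmh)>0$, and the second together with Step 2 yields geometric ergodicity of $\Ptfmh$ (again via \citet{roberts1997geometric}). For the asymptotic variance, since $I-\Ptfmh=\delta(I-\Pmh)+(1-\delta)(I-Q)\succeq\delta(I-\Pmh)\succ0$, operator-monotonicity of the inverse gives $(I-\Ptfmh)^{-1}\preceq\delta^{-1}(I-\Pmh)^{-1}$ on $L^2_0(\target)$. Plugging into the spectral identity $\var(f,P)=2\langle f-\target f,(I-P)^{-1}(f-\target f)\rangle_\target-\var(f,\target)$ (valid for $\target$-reversible $P$ with $1\notin\mathrm{spec}(P|_{L^2_0})$) and using $2\langle g,(I-\Pmh)^{-1}g\rangle=\var(f,\Pmh)+\var(f,\target)$ for $g=f-\target f$, one obtains exactly $\var(f,\Ptfmh)\leq\delta^{-1}\var(f,\Pmh)+(\delta^{-1}-1)\var(f,\target)$.

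\textbf{Main obstacle.} The only step with any subtlety is Step 1 — recognising that $\afmh(\state,\nstate)\vee\afmh(\nstate,\state)$ is precisely the symmetric ``factorisation penalty'' ratio $\afmh/\amh$, so that the hypothesis $\delta>0$ is exactly the statement that this penalty is bounded away from $0$ on the truncation region and hence $\atfmh\geq\delta\,\amh$ everywhere. Once this pointwise domination is in hand, Steps 2--3 are the standard ``lazification''/Peskun-type comparison of reversible kernels; the only things to be careful about there are verifying that $Q$ is a legitimate Markov kernel (this is where Proposition~\ref{prop:fmh-lowers-accprob}, i.e.\ $\afmh\leq\amh$, is actually used) and invoking the correct reversible-chain equivalence between an $L^2$ spectral gap and geometric ergodicity, under the standing symmetry assumption on $\bound$ that makes $\Ptfmh$ reversible.
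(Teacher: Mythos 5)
Your proof is correct, and its first half is essentially the paper's: the identity $\afmh(\state,\nstate)\vee\afmh(\nstate,\state)=\afmh(\state,\nstate)/\amh(\state,\nstate)$ is exactly Proposition~\ref{prop:afmh-amh-relation} (you derive it via the symmetry of $\gamma=\afmh/\amh$ rather than the paper's direct $(1\wedge c)^{-1}=1\vee c^{-1}$ manipulation, but it is the same fact), and the resulting pointwise domination $\atfmh\geq\delta\,\amh$, hence $\Ptfmh(\state,\cdot)\geq\delta\,\Pmh(\state,\cdot)$, is the paper's central inequality. Where you diverge is in how you cash this in. The paper handles $\varphi$-irreducibility and aperiodicity by a separate induction (Proposition~\ref{prop:fmh-irr-aper-when-mh-is}) and then delegates everything else to two citations: \citet{jones2014convergence} (Theorem~1) for the transfer of geometric ergodicity under a kernel domination, and \citep[Lemma~32]{andrieu2013uniform} for the spectral gap and asymptotic variance bounds. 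You instead make the argument self-contained: the decomposition $\Ptfmh=\delta\Pmh+(1-\delta)Q$ with $Q$ a genuine reversible kernel, irreducibility via $\Ptfmh^n\geq\delta^n\Pmh^n$ (which is arguably cleaner than the paper's induction), and the gap and variance bounds via the $L^2_0(\target)$ spectral calculus and operator monotonicity of the inverse --- in effect reproving the cited lemmas. Both routes are sound; yours buys transparency at the cost of having to verify that $Q$ is a legitimate kernel and invoking the reversible-chain equivalence of geometric ergodicity with $\norm{P}_{L^2_0}<1$ from \citet{roberts1997geometric}, which you do correctly (note you need the full operator norm, not just the right gap, and you handle this). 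Two minor points to tighten: your aperiodicity sentence is stated loosely --- the precise argument is that $\amh\geq\atfmh$ gives off-diagonal domination $\Pmh(\state,A)\geq\Ptfmh(\state,A)$ for $\state\notin A$, so any $k$-cycle for $\Ptfmh$ is a $k$-cycle for $\Pmh$ --- and the identity in Step~1 needs the paper's convention $\ageneric=1$ on the degenerate set where some $\target_i(\state)\prop_i(\state,\nstate)=0$, which your algebraic derivation ($x>0$) silently excludes.
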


Here $\Gap(\Pgeneric)$ denotes the spectral gap and $\var(f, \Pgeneric)$ the asymptotic variance of the ergodic averages of $f$. See Section \ref{SUPP:sec:ergodic-props} in the Supplement for full definitions and a proof. Proposition \ref{prop:afmh-amh-relation} in the Supplement shows that $\afmh(\state, \nstate) \vee \afmh(\nstate, \state) = \afmh(\state, \nstate) / \amh(\state,\nstate)$, and hence \eqref{eq:geom-erg-cond} quantifies the worst-case cost we pay for using the FMH acceptance probability rather than the MH one. The condition \eqref{eq:geom-erg-cond} is easily seen to hold in the common case that each $\target_i$ is bounded away from $0$ and $\infty$ on $\set{\state, \nstate \in \statespace \mid \bound(\state, \nstate) < \transrad}$, which is a fairly weak requirement when $R<\infty$.

Recall from the previous section that $\Pfmh$ requires computing $N \sim \Poisson(\bound(\state, \nstate))$ factors for a given $\state, \nstate$. In this way, TFMH yields the additional benefit of controlling the maximum expected number of factors we will need to compute via the choice of $\transrad$. An obvious choice is to take $\transrad = m$, which ensures we will not compute more factors for FMH than for MH on average. Thus, overall, TFMH yields the computational benefits of $\afmh$ when our bounds \eqref{eq:lambda-bounds} are tight (usually near the mode), and otherwise falls back to MH as a default (usually in the tails).

\section{FMH for Bayesian Big Data} \label{sec:big-data}

We now consider the specific application of FMH to the problem of Bayesian inference for large datasets, where $\target(\state) \propto p(\state) \prod_{i=1}^n p(\data_i|\state)$. It is frequently observed that such targets concentrate at a rate $1/\sqrt{n}$ around the mode as $n \to \infty$, in what is sometimes referred to as the Bernstein-von Mises phenomenon. We describe here how to leverage this phenomenon to devise an effective subsampling algorithm based on Algorithm \ref{alg:poisson-sampling}. Our approach is based on control variate ideas similar to \citet{dubey2016variance,bardenet2017markov,bierkens2016zigzag,baker2017control,chatterji2018theory,quiroz2014speeding}. We emphasise that all these techniques also rely on a posterior concentration assumption but none of them only requires processing $O(1/\sqrt{n})$ data points per iteration as we do.

To see why this approach is needed, observe that the most natural factorisations of the posterior have $m \asymp n$. This introduces a major pitfall: each new factor introduced can only lower the value of $\afmh(\state, \nstate)$, which in the aggregate can therefore mean $\afmh(\state, \nstate) \to 0$ as $n \to \infty$. 

Consider heuristically a na\"ive application of Algorithm \ref{alg:poisson-sampling} to $\pi$. Assuming a flat prior for simplicity, the obvious factorisation takes $m = n$ and each $\target_i(\state) = p(\data_i|\state)$. Suppose the likelihoods are log-Lipschitz and that we use the bounds \eqref{eq:lipschitz-bounds} derived above. For smooth likelihoods, if the Lipschitz constants $K_i$ are chosen minimally, these bounds will be tight in the limit as $\norm{\state - \nstate} \to 0$. Consequently, if we scale $\norm{\state - \nstate}$ as $1/\sqrt{n}$ to match the concentration of the target, then $\afmh(\state, \nstate) \asymp \exp(-\bound(\state, \nstate)) \to 0$ since
\[
    \bound(\state, \nstate) = \underbrace{\norm{\state - \nstate}}_{=\Theta(1/\sqrt{n})} \underbrace{\sum_{i=1}^n K_i}_{=\Theta(n)} = \Theta(\sqrt{n}).
\]
Recall that Algorithm \ref{alg:poisson-sampling} requires the computation of at most $N \sim \Poisson(\bound(\state, \nstate))$ factors, and hence in this case we do obtain a reduced expected cost per iteration of $\Theta(\sqrt{n})$ as opposed to $\Theta(n)$. Nevertheless, we found empirically that the increased asymptotic variance produced by the decaying acceptance probability entails an overall loss of performance compared with standard MH. We could consider using a smaller stepsize such as $\norm{\state - \nstate} = O(1/n)$ which would give a stable acceptance probability, but then our proposal would not match the $1/\sqrt{n}$ concentration of the posterior. We again found this increases the asymptotic variance to the extent that it negates the benefits of subsampling overall.

\subsection{Scalable Metropolis--Hastings} \label{sec:smh}

Our approach is based on controlling $\bound(\state, \nstate)$, which ensures both a low computational cost and a large acceptance probability. We assume an initial factorisation
\begin{equation} \label{eq:base-factorisation}
    \target(\state) \propto p(\state) \prod_{i=1}^n p(\data_i|\state) \propto \prod_{i=1}^{m} \widetilde{\target}_i(\state)
\end{equation}
for some $m$ (not necessarily equal to $n$) and $\widetilde{\target}_i$ (e.g. using directly the factorisation of prior and likelihoods). Let
\[
  \pot_i(\state) := -\log \widetilde{\target}_i(\state) \quad \quad U(\state) := \sum_{i=1}^{m} \pot_i(\state).
\]
We choose some fixed $\modestate \in \statespace$ not depending on $i$ that is near the mode of $\target$ like \citet{dubey2016variance,bardenet2017markov, bierkens2016zigzag, baker2017control, chatterji2018theory,quiroz2014speeding}. Assuming sufficient differentiability, we then approximate $\pot_i$ with a $k$-th order Taylor expansion around $\modestate$, which we denote by
\[
  \approxpot_{k,i}(\state) \approx \pot_i(\state).
\]
We also define
\[
  \approxtarget_{k,i}(\state) := \exp(-\approxpot_{k,i}(\state)) \approx \widetilde{\target}_i(\state).
\]
In practice we are exclusively interested in the cases $k = 1$ and $k = 2$, which correspond to first and second-order approximations respectively. Explicitly, in these cases
\begin{eqnarray*}
    \approxpot_{1,i}(\state) &=& \pot(\modestate) + \nabla \pot_i(\modestate)^\top (\state - \modestate), \\
    \approxpot_{2,i}(\state) &=& \approxpot_{1,i}(\state) + \frac{1}{2}  (\state - \modestate)^\top \nabla^2 \pot_i(\modestate) (\state - \modestate),
\end{eqnarray*}
where $\nabla$ denotes the gradient and $\nabla^2$ the Hessian. Letting
\[
    \approxpot_k(\state) := \sum_{i=1}^m \approxpot_{k,i}(\state)
    \quad\quad
    \approxtarget_k(\state) := \exp(-\approxpot_k(\state)),
\]
additivity of the Taylor expansion further yields
\begin{eqnarray}
    \approxpot_{1}(\state) &=& \pot(\modestate) + \nabla \pot(\modestate)^\top (\state - \modestate) \notag \\
    \approxpot_{2}(\state) &=& \approxpot_{1}(\state) + \frac{1}{2}  (\state - \modestate)^\top \nabla^2 \pot(\modestate) (\state - \modestate). \label{eq:2nd-order-approx}
\end{eqnarray}
Thus when $\nabla^2 \pot(\modestate) \succ 0$ (i.e. symmetric positive-definite), $\approxtarget_2(\state)$ is seen to be a Gaussian approximation to $\target$ around the (approximate) mode $\modestate$.

We use the $\approxtarget_{k,i}$ to define the \emph{Scalable Metropolis-Hastings} (SMH or \smh{$k$}) acceptance probability
\begin{equation} \label{eq:afmh}
    \asmh{k}(\state, \nstate) := \left(1 \wedge \frac{\approxtarget_k(\nstate) \prop(\nstate, \state)}{\approxtarget_k(\state) \prop(\state, \nstate)} \right) \prod_{i=1}^{m} 1 \wedge \frac{\widetilde{\target}_i(\nstate) \approxtarget_{k,i}(\state)}{\approxtarget_{k,i}(\nstate) \widetilde{\target}_i(\state)}.
\end{equation}
Note that \smh{$k$} is a special case of FMH with $m+1$ factors given by
\begin{equation} \label{eq:cv-factorisation}
  \target = 
  \underbrace{\approxtarget_k}_{= \target_{m+1}} \prod_{i=1}^m \underbrace{\frac{\widetilde{\target}_i}{\approxtarget_{k,i}}}_{= \target_i}
  \quad\quad
  \prop = \underbrace{\prop}_{= \prop_{m+1}} \prod_{i=1}^m \underbrace{1}_{= \prop_i}
\end{equation}
and hence defines a valid acceptance probability. (Note that $\approxtarget_1$ is not integrable, but recall this is not required of FMH factors.) We could consider any factorisation of $\prop$, but we will not make use of this generality.

$\approxtarget_k(\state)$ can be computed in constant time after precomputing the relevant partial derivatives at $\modestate$ before our MCMC run. This allows us to deal with $1 \wedge \approxtarget_k(\nstate) \prop(\nstate, \state)/\approxtarget_k(\state) \prop(\state, \nstate)$ by directly simulating a Bernoulli trial with this value as its success probability. For the remaining factors we have
\begin{eqnarray*}
  \intensity_i(\state, \nstate)
    &=& -\log \left(1 \wedge \frac{\widetilde \target_i(\nstate) \approxtarget_{k,i}(\state)}{\widetilde \target_i(\state) \approxtarget_{k,i}(\nstate)}\right).
\end{eqnarray*}
We can obtain a bound of the form \eqref{eq:lambda-bounds} provided $\pot_i$ is $(k+1)$-times
continuously differentiable. In this case, if we can find constants
\begin{equation} \label{eq:grad-upper-bound}
  \potgradub_{k+1, i}
    \geq \sup_{\substack{\state \in \statespace \\ \abs{\midx} = k+1}}
      \abs{\partial^\midx \pot_i(\state)},
\end{equation}
(here $\midx$ is multi-index notation; see Section \ref{sec:notation} of the Supplement) it follows that
\begin{equation} \label{eq:smh-bound}
    \bound(\state, \nstate) := \underbrace{(\norm{\state - \modestate}_1^{k+1} + \norm{\nstate - \modestate}_1^{k+1})}_{= \statebound(\state, \nstate)} \sum_{i=1}^m \underbrace{\frac{\potgradub_{k+1, i}}{(k+1)!}}_{= \databound_i}
\end{equation}
defines an upper bound of the required form \eqref{eq:upper-bounds-sum}. See Proposition \ref{SUPP:prop:upper-bounds} in the Supplement for a derivation. Observe this is symmetric in $\state$ and $\nstate$ and therefore can be used to define a truncated version of SMH as described in Section \ref{sec:geom-erg}.

Although we concentrate on Taylor expansions here, other choices of $\approxtarget_i$ may be useful. For instance, it may be possible to make $\basetarget_i/\approxtarget_i$ log-Lipschitz or log-concave and obtain better bounds. However, Taylor expansions have the advantage of generality and \eqref{eq:smh-bound} is sufficiently tight for us.

Heuristically, if the posterior concentrates like $1/\sqrt{n}$, if we scale our proposal like $1/\sqrt{n}$, and if $\modestate$ is not too far (specifically $O(1/\sqrt{n})$) from the mode, then both $\norm{\state - \modestate}$ and $\norm{\nstate - \modestate}$ will be $O(1/\sqrt{n})$, and $\statebound(\state, \nstate)$ will be $O(n^{-(k+1)/2})$. If moreover $m \asymp n$, then the summation will be $O(n)$ and hence overall $\bound(\state, \nstate) = O(n^{(1-k)/2})$. When $k=1$ this is $O(1)$ and when $k=2$ this is $O(1/\sqrt{n})$, which entails a substantial improvement over the na\"ive approach. In particular, we expect stable acceptance probabilities in both cases, constant expected cost in $n$ for $k = 1$, and indeed $O(1/\sqrt{n})$ \emph{decreasing} cost for $k = 2$. We make this argument rigorous in Theorem \ref{thm:bound-asymptotics} below.

Beyond what is already needed for MH, $\potgradub_{k+1,i}$ and $\modestate$ are all the user must provide for our method. In practice neither of these seems problematic in typical settings. We have found deriving $\potgradub_{k+1,i}$ to be a fairly mechanical procedure, and give examples for two models in Section \ref{sec:experiments}. Likewise, while computing $\modestate$ does entail some cost, we have found that standard gradient descent finds an adequate result in time negligible compared with the full MCMC run.

\subsection{Choice of Proposal} \label{sec:prop-choice}

We now consider the choice of proposal $\prop$ and its implications for the acceptance probability. As mentioned, it is necessary to ensure that, roughly speaking, $\norm{\state - \nstate} = O(n^{-1/2})$ to match the concentration of the target. In this section we describe heuristically how to ensure this. Theorem \ref{thm:bound-asymptotics} below and Section \ref{sec:prop-scale} in the Supplement give precise statements of what is required.

Two main classes of $\prop$ are of interest to us. When $\prop$ is symmetric, \eqref{eq:afmh} simplifies to
\begin{equation} \label{eq:afmh-symmetric}
 \asmh{k}(\state, \nstate)
    = \left(1 \wedge \frac{\approxtarget_k(\nstate)}{\approxtarget_k(\state)} \right)
        \prod_{i=1}^{m} 1 \wedge \frac{\widetilde{\target}_i(\nstate)
          \approxtarget_{k,i}(\state)}{\widetilde{\target}_i(\state) \approxtarget_{k,i}(\nstate)}.
\end{equation}
We can realise this with the correct scaling with for example
\begin{equation} \label{eq:scaled-random-walk-prop}
    \prop(\state, \nstate) = \Normal(\nstate \mid \state, \frac{\sigma^2}{n} \idmat),
\end{equation}
where $\sigma > 0$ is fixed in $n$. Alternatively, we can more closely match the covariance of our proposal to the covariance of our target with
\begin{equation}\label{proposalpreconditionedRW}
    \prop(\state, \nstate) = \Normal(\nstate \mid \state, \sigma^2 [\nabla^2 \pot(\modestate)]^{-1}).
\end{equation}
Under usual circumstances $[\nabla^2 \pot(\modestate)]^{-1}$ is approximately (since in general this will include a non-flat prior term) proportional to the inverse observed information matrix, and hence the correct $O(n^{-1/2})$ scaling is achieved automatically. See Section \ref{sec:prop-scale} in the Supplement for more details.

We can improve somewhat on a symmetric proposal if we choose $\prop$ to be \emph{$\approxtarget_k$-reversible} in the sense that
\[
    \approxtarget_k(\state) \prop(\state, \nstate) = \approxtarget_k(\nstate) \prop(\nstate, \state)
\]
for all $\state, \nstate$; see, e.g., \cite{tierney1994,neal1999regression,kamatani2014efficient}. In this case we obtain
\[
   \asmh{k}(\state, \nstate) =
        \prod_{i=1}^{m} 1 \wedge \frac{\widetilde{\target}_i(\nstate)
          \approxtarget_{k,i}(\state)}{\widetilde{\target}_i(\state) \approxtarget_{k,i}(\nstate)}.
\]
Note that using a $\approxtarget_k$-reversible proposal allows us to drop the first term in \eqref{eq:afmh-symmetric}, and hence obtain a higher acceptance probability for the same $\state, \nstate$. Moreover, when $k = 2$, we see from \eqref{eq:2nd-order-approx} that a $\approxtarget_k$-reversible proposal corresponds to an MCMC kernel that targets a Gaussian approximation to $\target$, and may therefore be more suited to the geometry of $\target$ than a symmetric one.

We now consider how to produce $\approxtarget_k$-reversible proposals. For $\prop$ of the form
\[
    \prop(\state, \nstate) = \Normal(\nstate \mid \rpropmat \state + \rpropvec, \rpropcov)
\]
where $\rpropmat, \rpropcov \in \R^{d\times d}$ with $\rpropcov \succ 0$ and $\rpropvec \in \R^d$, Theorem \ref{SUPP:prop:reversible-prop-conds} in the Supplement gives necessary and sufficient conditions for $\approxtarget_1$ and $\approxtarget_2$-reversibility. Specific useful choices that satisfy these conditions and ensure the correct scaling are then as follows. For $\approxtarget_1$ we can use for example
\begin{equation} \label{eq:scaled-first-order-prop}
  \rpropmat = \idmat
  \quad\quad
  \rpropvec = - \frac{\sigma}{2n} \nabla U(\modestate)
  \quad\quad
  \rpropcov = \frac{\sigma}{n} \idmat
\end{equation}
for some $\sigma > 0$, where $\idmat \in \R^{d\times d}$ is the identity matrix. For $\approxtarget_2$, assuming $\nabla^2 \pot(\modestate) \succ 0$ (which will hold if $\modestate$ is sufficiently close to the mode), we can use a variation of the \textit{preconditioned-Crank Nicholson} proposal (pCN) \cite{neal1999regression} defined by taking
\begin{eqnarray*}
  \rpropmat = \sqrt{\rho} \idmat
  \quad\quad
  \rpropcov = (1-\rho) [\nabla^2 \pot(\modestate)]^{-1} \\
  \rpropvec = (1 - \sqrt{\rho}) (\modestate - [\nabla^2 \pot(\modestate)]^{-1} \nabla \pot(\modestate))
\end{eqnarray*}
where $\rho \in [0, 1)$. When $\rho = 0$ this corresponds to an independent Gaussian proposal: $\nstate \sim \approxtarget_2$. Note that this can be re-interpreted as the exact discretization 
of an Hamiltonian dynamics for the Gaussian $\approxtarget_2$.
\subsection{Performance}

We now show rigorously that SMH addresses the issues of a naive approach and entails an overall performance benefit. In our setup we assume some unknown data-generating distribution $\Ptrue$, with data $\Data_i \iid \Ptrue$. We denote the (random) targets by $\target^{(n)}(\state) := p(\state|\Data_{1:n})$, for which we assume a factorisation \eqref{eq:base-factorisation} involving $m^{(n)}$ terms. We denote the mode of $\target^{(n)}$ by $\mapstate^{(n)}$, and our estimate of the mode by $\modestate^{(n)}$. Observe that $\mapstate^{(n)} \equiv \mapstate^{(n)}(\Data_{1:n})$ is a deterministic function of the data, and we assume this holds for $\modestate^{(n)} \equiv \modestate^{(n)}(\Data_{1:n})$ also. In general $\modestate^{(n)}$ may depend on additional randomness, say $W_{1:n}$, if for instance it is the output of a stochastic gradient descent algorithm. In that case, our statements should involve conditioning on $W_{1:n}$ but are otherwise unchanged.

Given $n$ data points we denote the proposal by $\prop^{(n)}$, and model the behaviour of our chain at stationarity by considering $\state^{(n)} \sim \target^{(n)}$ and $\nstate^{(n)} \sim \prop^{(n)}(\state^{(n)}, \cdot)$ sampled independently of all other randomness given $\Data_{1:n}$. The following theorem allows us to show that both the computational cost and the acceptance probability of SMH remain stable as $n \to \infty$. See Section \ref{sec:performance-gains} in the Supplement for a proof.

\begin{theorem}  \label{thm:bound-asymptotics}
    Suppose each $\pot_i$ is $(k+1)$-times continuously differentiable, each $\potgradub_{k+1,i} \in L^{k+2}$, and $\E[\sum_{i=1}^{m^{(n)}} \potgradub_{k+1,i}|\Data_{1:n}] = O_\Ptrue(n)$. Likewise, assume each of $\norm{\state^{(n)} - \mapstate^{(n)}}$, $\norm{\state^{(n)} - \nstate^{(n)}}$, and $\norm{\modestate^{(n)} - \mapstate^{(n)}}$ is in $L^{k+2}$, and each of $\E[\norm{\state^{(n)} - \mapstate^{(n)}}^{k+1}|\Data_{1:n}]$, $\E[\norm{\state^{(n)} - \nstate^{(n)}}^{k+1}|\Data_{1:n}]$, and $\norm{\modestate^{(n)} - \mapstate^{(n)}}^{k+1}$ is $O_\Ptrue(n^{-(k+1)/2})$ as $n \to \infty$. Then $\bound$ defined by \eqref{eq:smh-bound} satisfies
    \[
        \E[\bound(\state^{(n)}, \nstate^{(n)}) | \Data_{1:n}] = O_\Ptrue(n^{(1-k)/2}).
    \]
\end{theorem}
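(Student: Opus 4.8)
The plan is to bound $\bound(\state^{(n)}, \nstate^{(n)})$ directly using its explicit form \eqref{eq:smh-bound}, namely $\bound = \statebound(\state^{(n)}, \nstate^{(n)}) \sum_{i=1}^{m^{(n)}} \potgradub_{k+1,i}/(k+1)!$, and then take a conditional expectation. The factor $(k+1)!$ is a harmless constant, so the task reduces to controlling $\E[\statebound(\state^{(n)}, \nstate^{(n)}) \sum_i \potgradub_{k+1,i} \mid \Data_{1:n}]$. Here the key difficulty is that $\statebound$ and the $\potgradub_{k+1,i}$ are both functions of the data $\Data_{1:n}$ (through $\modestate^{(n)}$, $\mapstate^{(n)}$, and the random variables $\state^{(n)}, \nstate^{(n)}$ whose laws depend on the data), so they are \emph{not} independent and I cannot simply factor the expectation. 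The route around this is a Cauchy--Schwarz (or more generally Hölder) split, which is exactly why the hypotheses demand $L^{k+2}$ membership and the slightly-better-than-needed rates.

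First I would handle $\statebound(\state^{(n)}, \nstate^{(n)}) = \norm{\state^{(n)} - \modestate^{(n)}}_1^{k+1} + \norm{\nstate^{(n)} - \modestate^{(n)}}_1^{k+1}$. Using equivalence of norms on $\R^d$ (a dimension-dependent constant, absorbed into the $O_\Ptrue$), it suffices to treat the Euclidean versions. By the triangle inequality $\norm{\state^{(n)} - \modestate^{(n)}} \leq \norm{\state^{(n)} - \mapstate^{(n)}} + \norm{\modestate^{(n)} - \mapstate^{(n)}}$, and similarly $\norm{\nstate^{(n)} - \modestate^{(n)}} \leq \norm{\nstate^{(n)} - \state^{(n)}} + \norm{\state^{(n)} - \mapstate^{(n)}} + \norm{\modestate^{(n)} - \mapstate^{(n)}}$, so after applying the elementary inequality $(a+b+c)^{k+1} \leq 3^k(a^{k+1}+b^{k+1}+c^{k+1})$ I reduce $\statebound$ to a constant times a sum of $(k+1)$-th powers of the three basic quantities appearing in the hypotheses. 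Each of those, in conditional expectation, is $O_\Ptrue(n^{-(k+1)/2})$ by assumption (the third term is deterministic given the data, so its conditional expectation is itself). Hence $\E[\statebound(\state^{(n)}, \nstate^{(n)}) \mid \Data_{1:n}] = O_\Ptrue(n^{-(k+1)/2})$. The $L^{k+2}$ hypothesis on these three quantities also gives, via Jensen or Lyapunov, that $\E[\statebound^{(k+2)/(k+1)} \mid \Data_{1:n}]^{(k+1)/(k+2)}$ is again $O_\Ptrue(n^{-(k+1)/2})$ — this is the moment I will feed into Cauchy--Schwarz below.

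Next, for the data-sum, set $S_n := \sum_{i=1}^{m^{(n)}} \potgradub_{k+1,i}$. This is a deterministic function of $\Data_{1:n}$, so $\E[S_n \mid \Data_{1:n}] = S_n$, and by hypothesis $S_n = O_\Ptrue(n)$; moreover each $\potgradub_{k+1,i} \in L^{k+2}$, which (with $m^{(n)} \asymp n$, implicit in the $O(n)$ scaling) will let me control a higher moment of $S_n$ if needed. Now I combine: conditionally on $\Data_{1:n}$, apply Hölder with exponents $p = k+2$ and $p' = (k+2)/(k+1)$ to get
\[
  \E[\statebound\, S_n \mid \Data_{1:n}]
   \leq \E\big[S_n^{k+2} \mid \Data_{1:n}\big]^{\tfrac{1}{k+2}}
     \;\E\big[\statebound^{\tfrac{k+2}{k+1}} \mid \Data_{1:n}\big]^{\tfrac{k+1}{k+2}}.
\]
But $S_n$ is deterministic given the data, so the first factor is just $S_n = O_\Ptrue(n)$, and the second factor is $O_\Ptrue(n^{-(k+1)/2})$ by the previous paragraph; multiplying gives $O_\Ptrue(n^{1-(k+1)/2}) = O_\Ptrue(n^{(1-k)/2})$, as claimed. (Because $S_n$ is $\Data_{1:n}$-measurable the Hölder step is in fact overkill and one can just pull $S_n$ out directly; I would present the clean version and note the $L^{k+2}$ hypotheses are there to ensure all the quantities are genuinely integrable so the conditional expectations are well-defined and the $O_\Ptrue$ bookkeeping is legitimate.) The one place that needs a little care — and which I expect to be the main obstacle — is the interaction between the two sources of randomness: making sure that "$O_\Ptrue$" statements that hold for conditional expectations can be multiplied, i.e. that a product of two quantities each $O_\Ptrue(\cdot)$ is $O_\Ptrue(\cdot)$ of the product. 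This is true because $O_\Ptrue$ is just boundedness in probability of the rescaled sequences, and a product of two tight sequences is tight; I would state this as a short lemma (or cite the standard fact) and then the proof is essentially the triangle-inequality expansion of $\statebound$ plus this multiplicativity.
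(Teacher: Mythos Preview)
Your approach is essentially the same as the paper's: expand $\statebound$ via the triangle inequality and norm equivalence into $(k+1)$-th powers of the three basic quantities, note that $\sum_i \databound_i$ is (conditionally on $\Data_{1:n}$) separable from $\statebound$, and multiply the two $O_\Ptrue$ rates. The paper justifies the factoring step by the assumption that $(\state^{(n)}, \nstate^{(n)})$ is conditionally independent of all other randomness given $\Data_{1:n}$ (which covers the case where $\sum_i \potgradub_{k+1,i}$ carries extra randomness beyond $\Data_{1:n}$), whereas you assume $S_n$ is $\Data_{1:n}$-measurable; either works, but the paper's phrasing matches the generality of the theorem statement (which writes $\E[\sum_i \potgradub_{k+1,i} \mid \Data_{1:n}]$ rather than just $\sum_i \potgradub_{k+1,i}$). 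The H\"older inequality is used in the paper only to certify $\bound \in L^1$, not to obtain the rate; your parenthetical already identifies this as the clean route. One caution: your intermediate claim that $\E[\statebound^{(k+2)/(k+1)} \mid \Data_{1:n}]^{(k+1)/(k+2)} = O_\Ptrue(n^{-(k+1)/2})$ does \emph{not} follow from the hypotheses (Jensen goes the wrong way, and the $L^{k+2}$ assumption gives only finiteness, not a rate), but since you discard this step anyway it does not affect the final argument.
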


For given $\state^{(n)}$ and $\nstate^{(n)}$, recall that the method described in Section \ref{sec:subsampling-via-ppp} requires the computation of at most $N^{(n)} \sim \Poisson(\bound(\state^{(n)}, \nstate^{(n)}))$ factors. Under the conditions of Theorem \ref{thm:bound-asymptotics}, we therefore have
\begin{eqnarray*}
    \E[N^{(n)}|\Data_{1:n}] &=& \E[\E[N^{(n)}|\state^{(n)}, \nstate^{(n)}, \Data_{1:n}]|\Data_{1:n}] \\
    &=& \E[\bound(\state^{(n)}, \nstate^{(n)})|\Data_{1:n}] \\
    &=& O_\Ptrue(n^{(1-k)/2}).
\end{eqnarray*}
In other words, with arbitrarily high probability with respect to the data-generating distribution, SMH requires processing on average only $O(1)$ data points per step for a first-order approximation, and $O(1/\sqrt{n})$ for a second-order one.

This result also ensures that the acceptance probability for SMH does not vanish as $n \to \infty$. Denoting by $\approxtarget^{(n)}_k$ our approximation in the case of $n$ data points, observe that
\begin{multline*}
    0 \leq \E[-\log \afmh(\state^{(n)}, \nstate^{(n)})|\Data_{1:n}] \\
    \leq \E[-\log (1 \wedge \frac{\approxtarget^{(n)}_k(\nstate^{(n)}) \prop^{(n)}(\nstate^{(n)}, \state^{(n)})}{\approxtarget^{(n)}_k(\state^{(n)}) \prop^{(n)}(\state^{(n)}, \nstate^{(n)})})|\Data_{1:n}] \\
    + \E[\bound(\state^{(n)}, \nstate^{(n)})|\Data_{1:n}]
\end{multline*}
Here the second right-hand side term is $O_\Ptrue(n^{(1-k)/2})$ by Theorem \ref{thm:bound-asymptotics}. For a $\approxtarget_k$-reversible proposal the first term is simply $0$, while for a symmetric proposal Theorem \ref{SUPP:thm:solitary-term-asymptotics} in the Supplement shows it is $O_\Ptrue(1)$. In either case, we see that the acceptance probability is stable in the limit of large $n$. In the case of a $\approxtarget_2$-reversible proposal, we in fact have $\E[\afmh(\state^{(n)}, \nstate^{(n)})|\Data_{1:n}] \Pto{\Ptrue} 1$.

Note that both these implications also apply if we use a truncated version of SMH as per Section \ref{sec:geom-erg}. This holds since in general TFMH ensures both that the expected number of factor evaluations is not greater than for FMH, and that the acceptance probability is not less than for FMH.

The conditions of Theorem \ref{thm:bound-asymptotics} hold in realistic scenarios. The integrability assumptions are mild and mainly technical. We will see in Section \ref{sec:experiments} that in practice $\potgradub_{k+1,i} \equiv \potgradub_{k+1}(\Data_i)$ is usually a function of $\Data_i$, in which case
\[
    \E[\sum_{i=1}^{m^{(n)}} \potgradub_{k,i}|\Data_{1:n}] = \sum_{i=1}^n \potgradub_{k+1}(\Data_i) = O_\Ptrue(n)
\]
by the law of large numbers. In general, we might also have one $\potgradub_{k,i}$ for the prior also, but the addition of this term still gives the same asymptotic behaviour.

The condition $\E[\norm{\state^{(n)} - \mapstate^{(n)}}^{k+1}|\Data_{1:n}] = O_\Ptrue(n^{-(k+1)/2})$ essentially states that the posterior must concentrate at rate $O(1/\sqrt{n})$ around the mode. This is a consequence of standard, widely-applicable assumptions that are used to prove Bernstein-von Mises. See Section  \ref{sec:concentration-around-mode} of the Supplement for more details. Note in particular that we do not require our model to be well-specified (i.e. we do not need $\Ptrue = p(\data|\truestate)$ for some $\truestate \in \statespace$). The remaining two $O_\Ptrue$ conditions correspond to the heuristic conditions given in Section \ref{sec:smh}. In particular, the proposal should scale like $1/\sqrt{n}$. We show Section \ref{sec:prop-scale} of the Supplement that this condition holds for the proposals described in Section \ref{sec:prop-choice}. Likewise, $\modestate$ should be distance $O(1/\sqrt{n})$ from the mode. When the posterior is log-concave it can be shown this holds for instance for stochastic gradient descent after performing a single pass through the data \citep[Section 3.4]{baker2017control}. In practice, we interpret this condition to mean that $\modestate$ should be as close as possible to $\mapstate$, but that some small margin for error is acceptable.
\section{Experimental Results} \label{sec:experiments} 

In this section we apply SMH to Bayesian logistic regression. A full description of the model and upper bounds \eqref{eq:grad-upper-bound} we used is given in Section \ref{sec:robust-regression-bounds} of the Supplement. We also provide there an additional application our method to robust linear regression. We chose these models due to the availability of lower bounds on the likelihoods required by Firefly.


In our experiments we took $d = 10$. For both \smh{1} and \smh{2} we used truncation as described in Section \ref{sec:geom-erg}, with $\transrad=n$. Our estimate of the mode $\modestate$ was computed using stochastic gradient descent. We compare our algorithms to standard MH, Firefly, and Zig-Zag \cite{bierkens2016zigzag}, which all have the exact posterior as the invariant distribution. We used the MAP-tuned variant of Firefly (which also makes use of $\modestate$) with implicit sampling (this uses an algorithmic parameter $q_{d\rightarrow b} = 10^{-3}$; the optimal choice of $q_{d\rightarrow b}$ is an open question) and the lower bounds specified in Section 3.1 of \citet{maclaurin2014firefly}.

Figure \ref{fig:Averagenumberlikelihoodevaluations} (in Section \ref{sec:introduction}) shows the average number of likelihood evaluations per step and confirms the predictions of Theorem \ref{thm:bound-asymptotics}. Figure \ref{fig:nonreversible_iact} displays the effective sample sizes (ESS) for the posterior mean estimate of one regression coefficient, rescaled by execution time. For large $n$, \smh{2} significantly outperforms competing techniques. For all methods except Zig-Zag we used the proposal \eqref{proposalpreconditionedRW} with $\sigma = 1$, which automatically scales according to the concentration of the target.

We also separately considered the performance of the pCN proposal. Figure \ref{fig:reversible_iact} shows the effect of varying $\rho$. As the target concentrates, the Gaussian approximation of the target improves and an independent proposal ($\rho=0$) becomes optimal. Finally, we also illustrate the average acceptance rate when varying $\rho$ in Figure \ref{fig:acceptance_rate}.

Since \smh{2} makes use of a Gaussian approximation to the posterior $\approxtarget_2$, we finally consider the benefit that our method yields over simply using $\approxtarget_2$ directly. Observe in Figure \ref{fig:acceptance_rate} that the acceptance probability of MH with the independent proposal differs non-negligibly from $1$ for reasonably large values of $n$, which indicates that our method yields a non-trivial increase in accuracy. For very large $n$, the discrepancy vanishes as expected and SMH and other subsampling methods based on control variates become less useful in practice. See Section \ref{Supp:Applications} of the Supplement for further results along these lines. We believe however that our approach could form the basis of subsampling methods in more general and interesting settings such as random effect models and leave this as an important piece of future work.

Code to reproduce our experiments is available at \url{github.com/pjcv/smh}.

\begin{figure}[ht]
\vskip 0.2in
\begin{center}
\centerline{\includegraphics[width=.85\columnwidth]{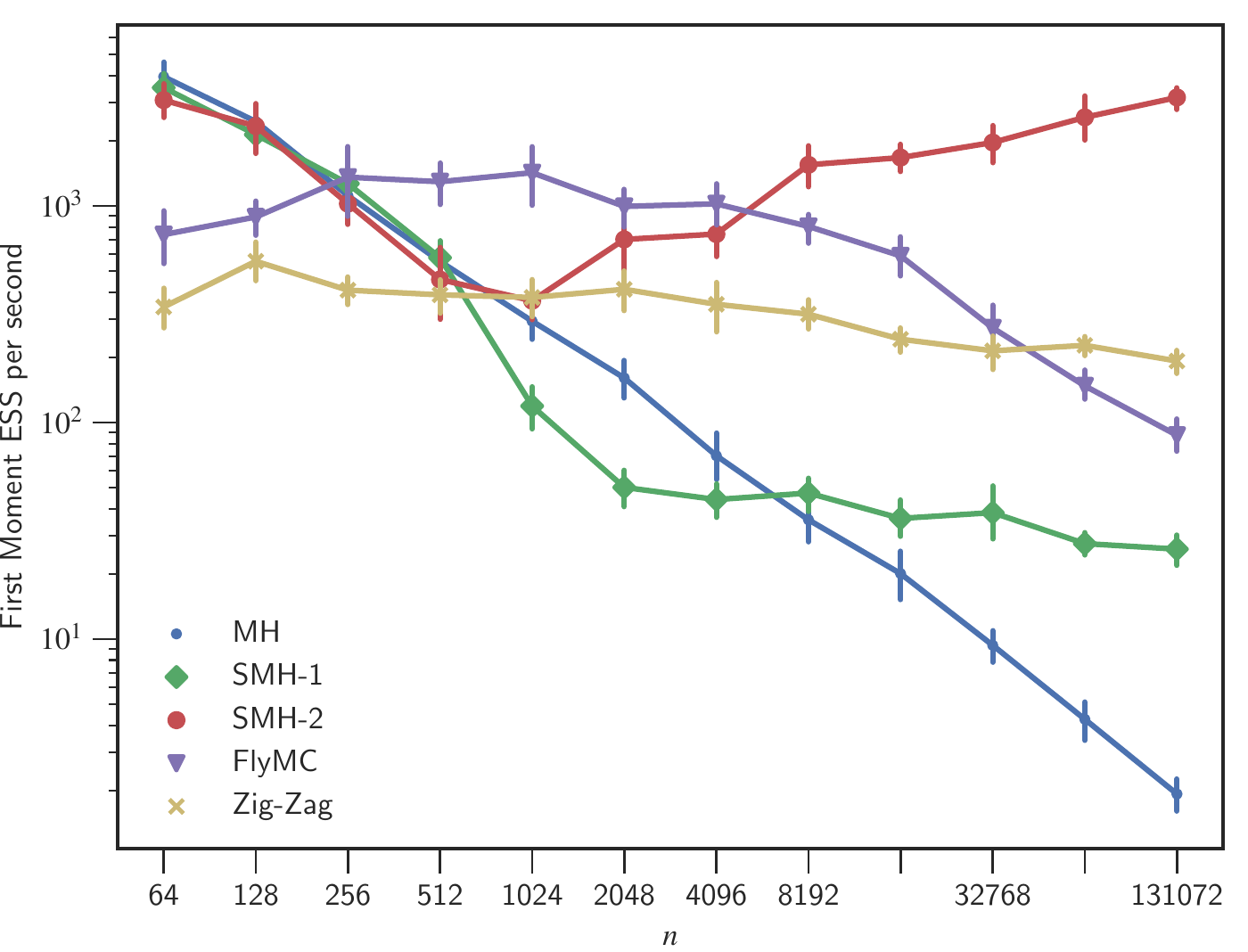}}
\caption{ESS for first regression coefficient, scaled by execution time (higher is better).}
\label{fig:nonreversible_iact}
\end{center}
\vskip -0.2in
\end{figure}

\begin{figure}[ht!]
\vskip 0.2in
\begin{center}
\centerline{\includegraphics[width=.85\columnwidth]{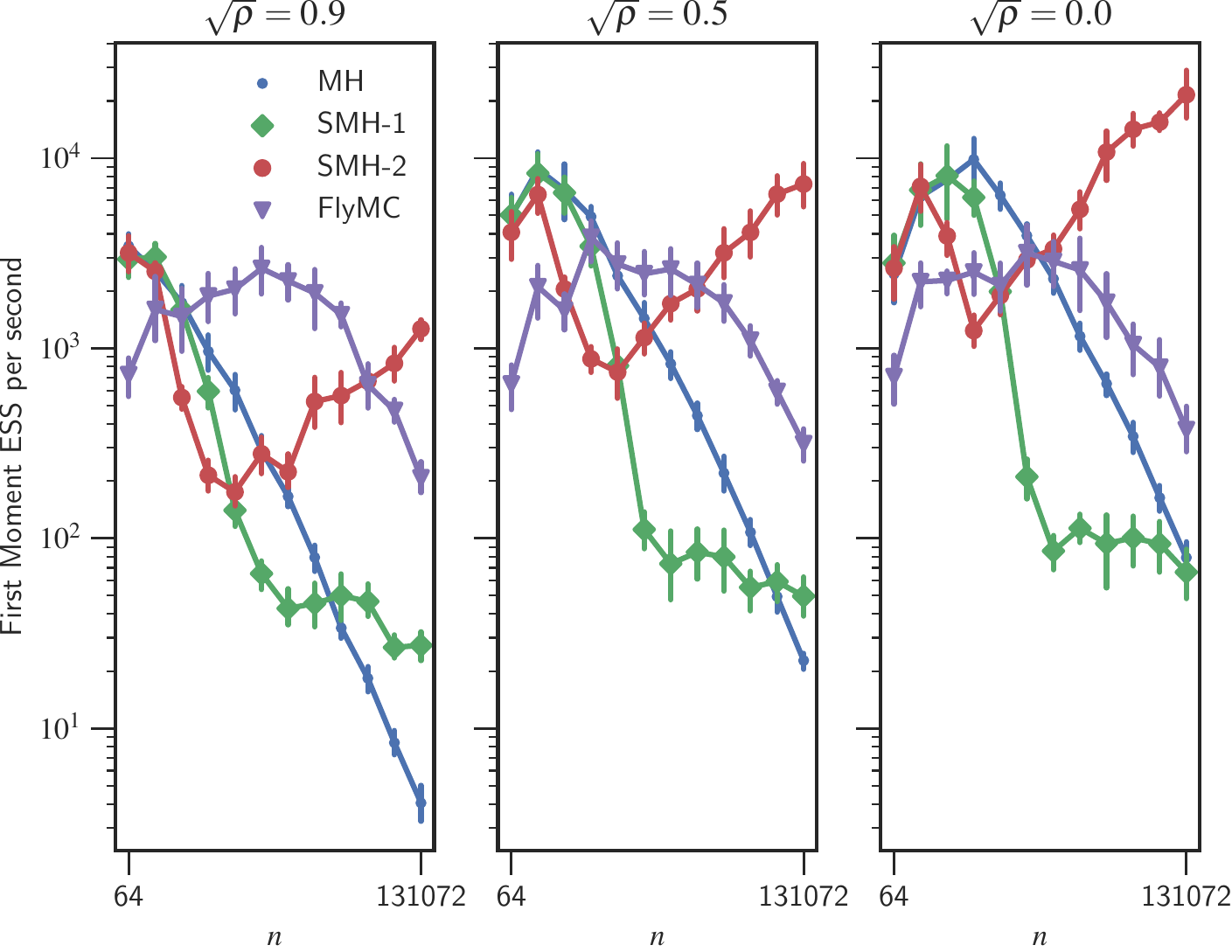}}
\caption{Effect of $\rho$ on ESS. ESS for first regression coefficient, scaled by execution time (higher is better).}
\label{fig:reversible_iact}
\end{center}
\vskip -0.2in
\end{figure}

\begin{figure}[ht!]
\vskip 0.2in
\begin{center}
\centerline{\includegraphics[width=.85\columnwidth]{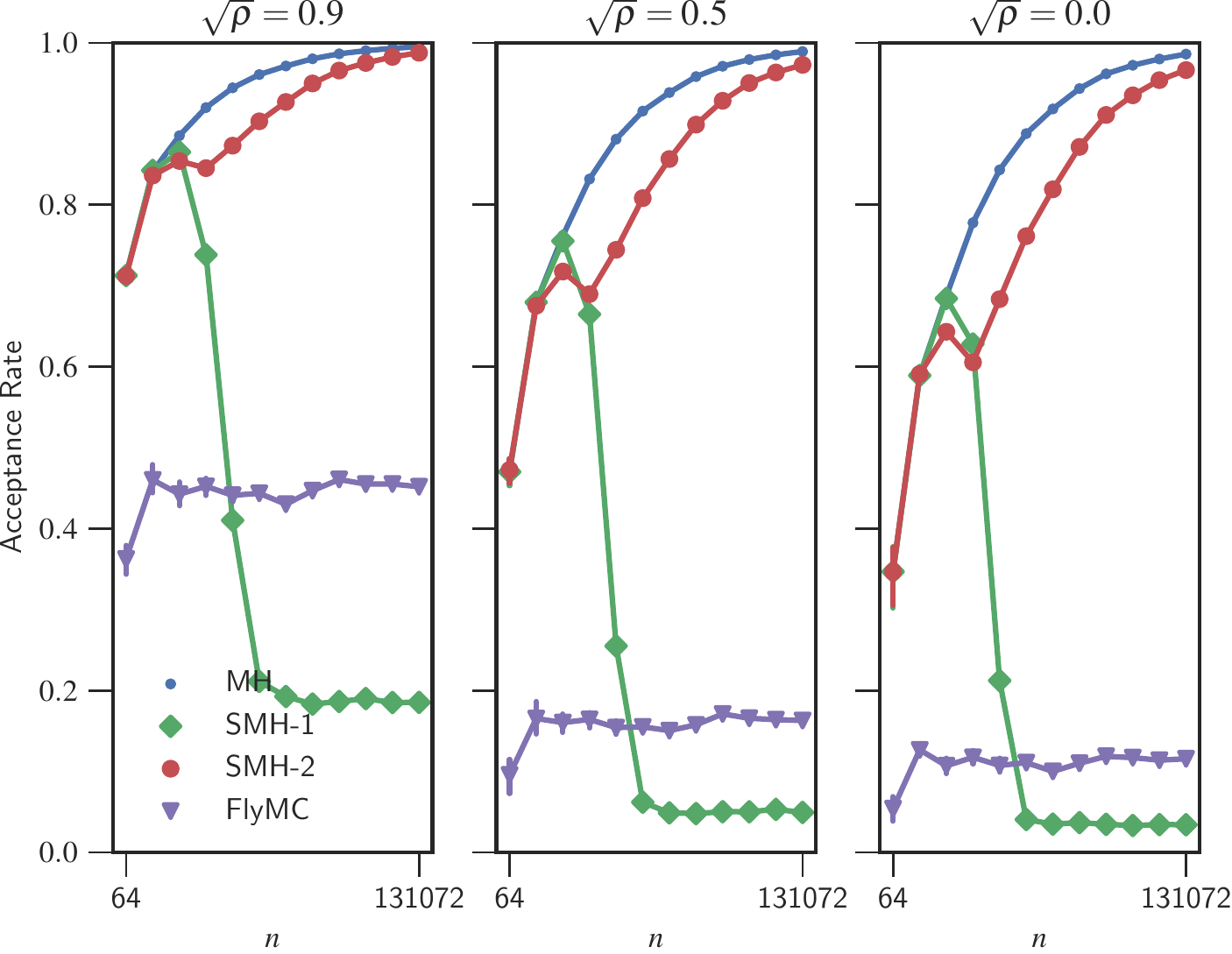}}
\caption{Acceptance rates for pCN proposals.}
\label{fig:acceptance_rate}
\end{center}
\vskip -0.2in
\end{figure}

\newpage


\bibliography{main}
\bibliographystyle{icml2019}

\newpage

\onecolumn
\icmltitle{\mytitle: Supplementary Material}
\appendix
\numberwithin{equation}{section}

\section{Guide to Notation} \label{sec:notation}

\begin{table}[h]
  \begin{tabularx}{\linewidth}{l X}
    $a \wedge b$ & $\min\set{a, b}$ \\
    $a \vee b$ & $\max\set{a, b}$ \\
    $B(x, K)$ & Euclidean ball centered at $x$ of radius $K$ \\
    $\ind_A$ & Indicator function of the set $A$ \\
    $\partial_j F(x)$ & $j$-th partial derivative of $F$ at $x$, i.e. $\partial F(x) / \partial x_j$ \\
    $\nabla F(x)$ & Gradient of $F$ at $x$ \\
    $\nabla^2 F(x)$ & Hessian of $F$ at $x$ \\
    $\norm{\cdot}$ & The $\ell^2$ norm \\
    $\norm{\cdot}_1$ & The $\ell^1$ norm \\
    $\norm{\cdot}_\infty$ & The supremum norm \\
    $\opnorm{\cdot}$ & The operator norm with respect to $\norm{\cdot}$ on the domain and range \\
    $\idmat$ & Identity matrix \\
    $A \prec B$ & $B - A$ is symmetric positive-definite \\
    $A \preceq B$ & $B - A$ is symmetric nonnegative-definite \\
    $a(x) \asymp b(x)$ as $x \to x_0$ & $\lim_{x \to x_0} a(x)/b(x) = 1$ \\
    $a(x) = O(b(x))$ as $x \to x_0$ & $\limsup_{x \to x_0} \abs{a(x)/b(x)} < \infty$ \\
    $a(x) = \Theta(b(x))$ as $x \to x_0$ & $a(x) = O(b(x))$ as $x \to x_0$ and $\liminf_{x \to x_0} \abs{a(x) / b(x)} > 0$. (Note that similar notation is used for our state space $\statespace$, but the meaning will always be clear from context.)  \\
    $x \ll y$ & (Informal) $x$ is much smaller than $y$ \\
    $x \approx y$ & (Informal) $x$ is approximately equal to $y$ \\
    $\Leb$ & The Lebesgue measure \\
    a.s. & Almost surely \\
    i.i.d. & Independent and identically distributed \\
    $X_n \Pto{\P} X$ & $X_n$ converges to $X$ in $\P$-probability \\
    $X_n = O_\P(a_n)$ & $X_n / a_n$ is $\P$-tight, i.e. for all $\epsilon > 0$ there exists $c>0$ such that $\P(\abs{X_n/a_n} < c) > 1 - \epsilon$ for all $n$  \\
    $X_n = o_\P(a_n)$ & $X_n / a_n \Pto{\P} 0$ \\
    $\E[X]$ & Expectation of a random variable $X$ \\
    $\E[X;A]$ & $\E[X \ind_A]$ \\
    $L^p$ & The space of random variables $X$ such that $\E[\abs{X}^p] < \infty$ \\
    $L^p(\mu)$ & The space of real-valued test functions $f$ such that $f(X) \in L^p$ where $X \sim \mu$ \\
  \end{tabularx}
\end{table}

We also use multi-index notation to express higher-order derivatives succinctly. Specifically, for $\midx = (\midx_1, \cdots, \midx_d) \in \Z_{\geq 0}^d$ and $\state = (\state_1, \cdots, \state_d) \in \statespace$, we define
\[
  \abs{\midx} := \sum_{i=1}^d \midx_i \quad\quad
  \midx! := \prod_{i=1}^d \midx_i! \quad\quad
  \state^\midx := \prod_{i=1}^d \state_i^{\midx_i} \quad\quad
  \partial^\midx := \frac{\partial^{\abs{\midx}}}{\partial^{\midx_1} \cdots
  \partial^{\midx_d}}.
\]

\section{Factorised Metropolis--Hastings} \label{sec:acc-probs}

Note that the definition \eqref{eq:fmh_acc_prob} of $\afmh(\state, \nstate)$ technically does not apply when
$\target(\state) \prop(\state, \nstate) = 0$. For concreteness, like
\citet{hastings1970monte}, we therefore define explicitly
\[
  \afmh(\state, \nstate) := \begin{cases}
    \prod_{i=1}^m 1 \wedge \frac{\target_i(\nstate) \prop_i(\nstate, \state)}{\target_i(\state) \prop_i(\state, \nstate)} & \text{if each $\target_i(\state) \prop_i(\state, \nstate) \neq 0$} \\
    1 & \text{otherwise},
  \end{cases}
\]
and take $\amh(\state, \nstate)$ to be the case when $m = 1$. We still take
$\atfmh(\state, \nstate)$ to be defined by \eqref{eq:tfmh-acc-prob}. We first establish a useful preliminary Proposition.

\begin{proposition} \label{prop:afmh-amh-relation}
  For all $\state, \nstate \in \statespace$, $\afmh(\state, \nstate) = \amh(\state,
  \nstate)(\afmh(\state, \nstate) \vee \afmh(\nstate, \state))$.
\end{proposition}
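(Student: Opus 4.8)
The plan is to reduce everything to two elementary scalar identities, after first clearing away the boundary cases governed by the explicit conventions of Section~\ref{sec:acc-probs}. Observe at the outset that $\afmh(\state,\nstate)\vee\afmh(\nstate,\state)\in[0,1]$ and $\amh(\state,\nstate)\le 1$, so both sides lie in $[0,1]$ and it suffices to verify the identity pointwise in $\state,\nstate$.

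First I would dispose of the degenerate cases. Using $\target\propto\prod_i\target_i$ and $\prop\propto\prod_i\prop_i$, the condition ``each $\target_i(\state)\prop_i(\state,\nstate)\neq 0$'' is equivalent to $\target(\state)\prop(\state,\nstate)\neq 0$, so the conventions for $\afmh(\state,\nstate)$ and $\amh(\state,\nstate)$ kick in together. If $\target(\state)\prop(\state,\nstate)=0$ then $\afmh(\state,\nstate)=\amh(\state,\nstate)=1$ by convention, $\afmh(\state,\nstate)\vee\afmh(\nstate,\state)=1$, and the identity reads $1=1\cdot 1$. If instead $\target(\state)\prop(\state,\nstate)\neq 0$ but $\target(\nstate)\prop(\nstate,\state)=0$, then $\afmh(\nstate,\state)=1$ by convention, while some numerator in the (now valid) product formula for $\afmh(\state,\nstate)$ vanishes, forcing $\afmh(\state,\nstate)=0$ and likewise $\amh(\state,\nstate)=0$; the identity reads $0=0\cdot 1$.

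For the main case, suppose $\target(\state)\prop(\state,\nstate)\neq 0$ and $\target(\nstate)\prop(\nstate,\state)\neq 0$, so each ratio $r_i := \target_i(\nstate)\prop_i(\nstate,\state)/\bigl(\target_i(\state)\prop_i(\state,\nstate)\bigr)$ lies in $(0,\infty)$ and $r := \prod_{i=1}^m r_i = \target(\nstate)\prop(\nstate,\state)/\bigl(\target(\state)\prop(\state,\nstate)\bigr)$. The engine of the argument is the elementary identity $1\wedge r_i = r_i\,(1\wedge r_i^{-1})$, checked by splitting on $r_i\le 1$ versus $r_i>1$. Taking the product over $i$ gives $\afmh(\state,\nstate) = r\,\afmh(\nstate,\state)$, and since $\afmh(\nstate,\state)\ge 0$ this yields $\afmh(\state,\nstate)\vee\afmh(\nstate,\state) = \afmh(\nstate,\state)\,(r\vee 1)$. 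Combining with $\amh(\state,\nstate)=1\wedge r$ and the scalar fact $(1\wedge r)(r\vee 1)=r$, I obtain
\[
  \amh(\state,\nstate)\bigl(\afmh(\state,\nstate)\vee\afmh(\nstate,\state)\bigr) = (1\wedge r)\,(r\vee 1)\,\afmh(\nstate,\state) = r\,\afmh(\nstate,\state) = \afmh(\state,\nstate),
\]
which is the claim.

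The main obstacle is not analytic but bookkeeping: the substantive content is merely the per-factor identity $1\wedge r_i = r_i(1\wedge r_i^{-1})$ together with $(1\wedge r)(1\vee r)=r$, so the only thing requiring care is making sure the three cases above are exhaustive and that the conventions defining $\afmh(\state,\nstate)$, $\afmh(\nstate,\state)$ and $\amh(\state,\nstate)$ are applied consistently whenever a likelihood or proposal factor vanishes.
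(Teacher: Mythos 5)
Your proof is correct and follows essentially the same route as the paper's: both rest on the per-factor identity $1\wedge r_i = r_i\,(1\wedge r_i^{-1})$ (equivalently $(1\wedge c)^{-1}=1\vee c^{-1}$), differing only in that you multiply up by $r$ and factor out $\afmh(\nstate,\state)$ where the paper divides by $\amh(\state,\nstate)$ and distributes the maximum. Your explicit treatment of the two degenerate cases is slightly more thorough than the paper's one-line dismissal, and is accurate.
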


\begin{proof}
  The cases where $\target_i(\state) \prop_i(\state, \nstate) = 0$ or $\target_i(\nstate)
  \prop_i(\nstate, \state) = 0$ for some $i$ are immediate from the definition above.
  Otherwise, since $(1 \wedge c)^{-1} = 1 \vee c^{-1}$ for all $c > 0$,
  \begin{eqnarray*}
    \amh(\state, \nstate)^{-1}
    &=& \left(1 \wedge \prod_{i=1}^m \frac{\target_i(\nstate) \prop_i(\nstate, \state)}{\target_i(\state) \prop_i(\state, \nstate)}\right)^{-1} \\
    &=& 1 \vee \prod_{i=1}^m \frac{\target_i(\state) \prop_i(\state, \nstate)}{\target_i(\nstate) \prop_i(\nstate, \state)},
  \end{eqnarray*}
  and hence
  \begin{eqnarray*}
    \frac{\afmh(\state, \nstate)}{\amh(\state, \nstate)}
    &=& \afmh(\state, \nstate) \vee \left(\afmh(\state, \nstate) \prod_{i=1}^m \frac{\target_i(\state) \prop_i(\state, \nstate)}{\target_i(\nstate) \prop_i(\nstate, \state)}\right) \\
    &=& \afmh(\state, \nstate) \vee \left(\prod_{i=1}^m (1 \wedge \frac{\target_i(\nstate) \prop_i(\nstate, \state)}{\target_i(\state) \prop_i(\state, \nstate)}) \prod_{i=1}^m \frac{\target_i(\state) \prop_i(\state, \nstate)}{\target_i(\nstate) \prop_i(\nstate, \state)}\right) \\
    &=& \afmh(\state, \nstate) \vee \left(\prod_{i=1}^m 1 \wedge \frac{\target_i(\state) \prop_i(\state, \nstate)}{\target_i(\nstate) \prop_i(\nstate, \state)}\right) \\
    &=& \afmh(\state, \nstate) \vee \afmh(\nstate, \state)
  \end{eqnarray*}
  which gives the result.
\end{proof}

\begin{corollary}
    For all $\state, \nstate \in \statespace$, $\afmh(\state, \nstate) \leq \amh(\state, \nstate)$.
\end{corollary}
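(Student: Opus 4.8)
The plan is to read the Corollary off Proposition \ref{prop:afmh-amh-relation} in essentially one line. That identity states $\afmh(\state, \nstate) = \amh(\state, \nstate)\bigl(\afmh(\state, \nstate) \vee \afmh(\nstate, \state)\bigr)$, so it suffices to observe that $\afmh(\state, \nstate) \vee \afmh(\nstate, \state) \leq 1$ together with $\amh(\state, \nstate) \geq 0$. The bound $\afmh(\cdot, \cdot) \leq 1$ is immediate from the definition of $\afmh$: in the degenerate case $\target_i(\state)\prop_i(\state, \nstate) = 0$ it is set to $1$, and otherwise it is a finite product of factors of the form $1 \wedge c$ with $c \geq 0$, each lying in $[0, 1]$. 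Bounding the right-hand side of the displayed identity then gives $\afmh(\state, \nstate) \leq \amh(\state, \nstate)$.

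Alternatively I would give a self-contained argument not relying on the Proposition. In the non-degenerate case write $c_i := \target_i(\nstate)\prop_i(\nstate, \state) / \bigl(\target_i(\state)\prop_i(\state, \nstate)\bigr) \geq 0$, so that $\afmh(\state, \nstate) = \prod_{i=1}^m (1 \wedge c_i)$ while $\amh(\state, \nstate) = 1 \wedge \prod_{i=1}^m c_i$ (the proportionality constants in the factorisations of $\target$ and $\prop$ cancel in the MH ratio). Since $1 \wedge c_i \leq c_i$ and $1 \wedge c_i \leq 1$ for every $i$, and all these factors are nonnegative, the product $\prod_i (1 \wedge c_i)$ is at most $\prod_i c_i$ and at most $1$, hence at most $1 \wedge \prod_i c_i = \amh(\state, \nstate)$; the degenerate case is handled by the convention that makes both quantities equal to $1$.

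There is no real obstacle here — the only thing requiring a moment's care is the boundary convention when $\target_i(\state)\prop_i(\state, \nstate) = 0$, and since that convention is already incorporated into Proposition \ref{prop:afmh-amh-relation}, invoking that result makes the Corollary immediate.
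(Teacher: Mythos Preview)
Your proposal is correct and your primary argument is exactly the paper's intended approach: the paper states the Corollary immediately after Proposition~\ref{prop:afmh-amh-relation} without further proof, relying on the identity $\afmh(\state, \nstate) = \amh(\state, \nstate)\bigl(\afmh(\state, \nstate) \vee \afmh(\nstate, \state)\bigr)$ together with the trivial bound $\afmh(\cdot,\cdot) \leq 1$. Your alternative self-contained argument via $\prod_i(1\wedge c_i) \leq 1 \wedge \prod_i c_i$ is also valid but is not the route the paper takes.
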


\subsection{Reversibility} \label{SUPP:fmh-reversible}

To show reversibility for $\Pfmh$ and $\Ptfmh$, we will use the standard result (see e.g.
\citep[Lemma 3.4]{geyer1998markov}) that a kernel of the form
\[
  \Pgeneric(\state, A)
  = \left(1 - \int \prop(\state, \nstate) \ageneric(\state, \nstate) d\nstate\right) \ind_A(\state)
    + \int_A \prop(\state, \nstate) \ageneric(\state, \nstate) d\nstate
\]
is reversible if $\target(\state) \prop(\state, \nstate) \ageneric(\state, \nstate)$ is
symmetric in $\state$ and $\nstate$. It is straightforward to show for instance that
\begin{equation} \label{eq:mh-symm}
  \target(\state) \prop(\state, \nstate) \amh(\state, \nstate)
    = \target(\nstate) \prop(\nstate, \state) \amh(\nstate, \state),
\end{equation}
which is immediate if either $\target(\state) = 0$ or $\target(\nstate) = 0$, and
otherwise
\begin{eqnarray*}
  \target(\state) \prop(\state, \nstate) \amh(\state, \nstate)
    &=& \target(\state) \prop(\state, \nstate) \left(1 \wedge \frac{\target(\nstate) \prop(\nstate, \state)}{\target(\state) \prop(\state, \nstate)}\right) \\
    &=& \target(\state) \prop(\state, \nstate) \wedge \target(\nstate) \prop(\nstate, \state).
\end{eqnarray*}

We use this result to establish reversiblity of $\Pfmh$. This result is standard but we include it here for completeness.
\begin{proposition} \label{prop:pfmh-reversible}
  $\Pfmh$ is $\target$-reversible.
\end{proposition}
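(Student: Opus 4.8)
The plan is to invoke the standard reversibility criterion quoted just above (from \citet[Lemma 3.4]{geyer1998markov}): a kernel of the form
\[
  \Pgeneric(\state, A)
  = \left(1 - \int \prop(\state, \nstate) \ageneric(\state, \nstate) d\nstate\right) \ind_A(\state)
    + \int_A \prop(\state, \nstate) \ageneric(\state, \nstate) d\nstate
\]
is $\target$-reversible whenever $\target(\state) \prop(\state, \nstate) \ageneric(\state, \nstate)$ is symmetric in $\state$ and $\nstate$. So the entire task reduces to checking that
\[
  \target(\state) \prop(\state, \nstate) \afmh(\state, \nstate)
\]
is symmetric, and the kernel definition \eqref{eq:fmh-kernel} is exactly of the required form with $\ageneric = \afmh$.

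First I would use the factorisations $\target(\state) \propto \prod_{i=1}^m \target_i(\state)$ and $\prop(\state, \nstate) \propto \prod_{i=1}^m \prop_i(\state, \nstate)$, noting that the two proportionality constants do not depend on $\state$ or $\nstate$ and hence do not affect symmetry (they simply multiply both sides of the putative identity equally). Combined with the definition \eqref{eq:fmh_acc_prob} of $\afmh$ as $\prod_{i=1}^m \afmh_i$, this gives
\[
  \target(\state) \prop(\state, \nstate) \afmh(\state, \nstate)
    \propto \prod_{i=1}^m \target_i(\state) \prop_i(\state, \nstate) \afmh_i(\state, \nstate).
\]
It therefore suffices to show each factor $\target_i(\state) \prop_i(\state, \nstate) \afmh_i(\state, \nstate)$ is symmetric in $\state, \nstate$. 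When $\target_i(\state) \prop_i(\state, \nstate) \neq 0$ and $\target_i(\nstate) \prop_i(\nstate, \state) \neq 0$, the same one-line computation used above for \eqref{eq:mh-symm} applies factor by factor:
\[
  \target_i(\state) \prop_i(\state, \nstate) \afmh_i(\state, \nstate)
    = \target_i(\state) \prop_i(\state, \nstate) \wedge \target_i(\nstate) \prop_i(\nstate, \state),
\]
which is manifestly symmetric.

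The only thing requiring care is the degenerate cases, which is where I would spend the remaining effort. Using the explicit definition of $\afmh$ recorded at the start of this section: if $\target_i(\state) \prop_i(\state, \nstate) = 0$ for some $i$ then $\target(\state) \prop(\state, \nstate) = 0$, so the left-hand side vanishes; by the convention $\afmh(\nstate, \state) = 1$ in that configuration one checks the right-hand side vanishes too (it still carries the factor $\target(\state)\prop(\state,\nstate) = 0$), and symmetrically if $\target_i(\nstate) \prop_i(\nstate, \state) = 0$ for some $i$. Hence $\target(\state) \prop(\state, \nstate) \afmh(\state, \nstate)$ is symmetric in all cases, and the cited lemma gives $\target$-reversibility of $\Pfmh$. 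I expect the handling of these zero cases to be the only mild subtlety; the rest is a routine product-of-symmetric-factors argument.
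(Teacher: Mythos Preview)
Your approach is correct in spirit but differs from the paper's. The paper does not factor through the individual $\target_i \prop_i \afmh_i$ terms; instead it applies Proposition~\ref{prop:afmh-amh-relation} to write $\afmh(\state,\nstate) = \amh(\state,\nstate)\bigl(\afmh(\state,\nstate) \vee \afmh(\nstate,\state)\bigr)$, so that
\[
  \target(\state)\prop(\state,\nstate)\afmh(\state,\nstate)
  = \underbrace{\target(\state)\prop(\state,\nstate)\amh(\state,\nstate)}_{\text{symmetric by \eqref{eq:mh-symm}}}
    \cdot \underbrace{\bigl(\afmh(\state,\nstate) \vee \afmh(\nstate,\state)\bigr)}_{\text{manifestly symmetric}},
\]
and the degenerate cases are already absorbed into the proof of Proposition~\ref{prop:afmh-amh-relation}. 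Your factor-by-factor argument is more elementary and self-contained (it does not need Proposition~\ref{prop:afmh-amh-relation}), while the paper's route is shorter because it reuses that earlier lemma.

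One slip in your degenerate case: you write that when some $\target_i(\state)\prop_i(\state,\nstate)=0$, the right-hand side $\target(\nstate)\prop(\nstate,\state)\afmh(\nstate,\state)$ ``still carries the factor $\target(\state)\prop(\state,\nstate)=0$''---it does not, and the convention $\afmh(\nstate,\state)=1$ is triggered by a zero on the $(\nstate,\state)$ side, not the $(\state,\nstate)$ side. The correct argument is: if additionally some $\target_j(\nstate)\prop_j(\nstate,\state)=0$ then $\target(\nstate)\prop(\nstate,\state)=0$; otherwise all such factors are nonzero, so $\afmh(\nstate,\state)=\prod_j \afmh_j(\nstate,\state)$ and the $i$-th factor equals $1 \wedge \tfrac{\target_i(\state)\prop_i(\state,\nstate)}{\target_i(\nstate)\prop_i(\nstate,\state)} = 0$. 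Either way the right-hand side vanishes. With this fix your proof is complete.
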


\begin{proof}
  By Proposition \ref{prop:afmh-amh-relation}
  \[
    \target(\state) \prop(\state, \nstate) \afmh(\state, \nstate)
      = \target(\state) \prop(\state, \nstate) \amh(\state, \nstate)(\afmh(\state,
        \nstate) \vee \afmh(\nstate, \state)),
  \]
  which is symmetric in $\state$ and $\nstate$ by \eqref{eq:mh-symm}.
\end{proof}

\begin{proposition} \label{prop:ptfmh-reversible}
  If $\bound(\state, \nstate)$ is symmetric in $\state$ and $\nstate$, then $\Ptfmh$ is $\target$-reversible.
\end{proposition}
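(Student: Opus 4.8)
The plan is to invoke exactly the same sufficient condition for reversibility used to establish Proposition~\ref{prop:pfmh-reversible}: a kernel of the form appearing in \eqref{eq:fmh-kernel} (with $\afmh$ replaced by any acceptance function $\ageneric$) is $\target$-reversible whenever $\target(\state)\prop(\state,\nstate)\ageneric(\state,\nstate)$ is symmetric in $\state$ and $\nstate$. So it suffices to verify that $\target(\state)\prop(\state,\nstate)\atfmh(\state,\nstate)$ is symmetric.

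First I would use the hypothesis that $\bound(\state,\nstate)$ is symmetric to note that the condition $\bound(\state,\nstate) < \transrad$ is itself symmetric in $\state$ and $\nstate$; hence the partition of $\statespace\times\statespace$ into the region where $\bound(\state,\nstate) < \transrad$ and its complement is invariant under swapping $\state$ and $\nstate$. Consequently it is enough to check the symmetry of $\target(\state)\prop(\state,\nstate)\atfmh(\state,\nstate)$ on each of these two regions separately, using on each the corresponding branch of the definition \eqref{eq:tfmh-acc-prob}.

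On the region $\bound(\state,\nstate) < \transrad$ we have $\atfmh(\state,\nstate) = \afmh(\state,\nstate)$, and $\target(\state)\prop(\state,\nstate)\afmh(\state,\nstate)$ is symmetric by the argument already given for Proposition~\ref{prop:pfmh-reversible}, namely by writing it as $\target(\state)\prop(\state,\nstate)\amh(\state,\nstate)\bigl(\afmh(\state,\nstate)\vee\afmh(\nstate,\state)\bigr)$ via Proposition~\ref{prop:afmh-amh-relation} and then invoking \eqref{eq:mh-symm}. On the complementary region we have $\atfmh(\state,\nstate) = \amh(\state,\nstate)$, and $\target(\state)\prop(\state,\nstate)\amh(\state,\nstate)$ is symmetric directly by \eqref{eq:mh-symm}. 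Combining the two cases gives symmetry everywhere, and hence $\target$-reversibility of $\Ptfmh$.

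The only point that genuinely requires care — and the sole place the hypothesis is used — is ensuring that the case split defining $\atfmh$ is symmetric, so that one does not end up comparing $\afmh(\state,\nstate)$ against $\amh(\nstate,\state)$; the assumption that $\bound$ is symmetric is precisely what rules this out. Once that observation is in place, everything else follows immediately from results established earlier in the excerpt, so I do not anticipate any substantive obstacle.
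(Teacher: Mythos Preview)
Your proposal is correct and is essentially the same argument as the paper's. The paper writes $\atfmh(\state,\nstate) = \ind(\bound(\state,\nstate) < \transrad)\,\afmh(\state,\nstate) + \ind(\bound(\state,\nstate) \geq \transrad)\,\amh(\state,\nstate)$ and then appeals to the symmetry of the indicator functions together with \eqref{eq:mh-symm} and the proof of Proposition~\ref{prop:pfmh-reversible}; your case split is just a verbal version of the same decomposition, and you correctly identify that the symmetry of $\bound$ is exactly what makes the case split well-behaved.
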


\begin{proof}
    Simply write
    \[
        \atfmh(\state, \nstate) = \ind(\bound(\state, \nstate) < R) \afmh(\state, \nstate) + \ind(\bound(\state, \nstate) \geq R) \amh(\state, \nstate).
    \]
    The result then follows from the symmetry of the indicator functions, \eqref{eq:mh-symm}, and the proof of Proposition \ref{prop:pfmh-reversible}.
\end{proof}

\subsection{Ergodic Properties} \label{SUPP:sec:ergodic-props}

We provide a brief background to the theory of $\varphi$-irreducible Markov Chains. See \cite{meyn2009markov} for a comprehensive treatment.

For a transition kernel $\Pgeneric$, we inductively define the transition kernel $\Pgeneric^k$ for $k \geq 1$ by setting $\Pgeneric^1 := \Pgeneric$ and
\[
  \Pgeneric^k(\state, A) := \int \Pgeneric(\state, d\nstate) \Pgeneric^{k-1}(\nstate, A) d\nstate
\]
for $k > 1$, where $\state \in \statespace$ and $A \subseteq \statespace$ is measurable.
Given a nontrivial measure $\varphi$ on $\statespace$, we say $\Pgeneric$ is
\emph{$\varphi$-irreducible} if $\varphi(A) > 0$ implies $\Pgeneric^k(\state, A) > 0$ for
some $k \geq 1$.  For $\varphi$-irreducible $\Pgeneric$, we define a \emph{$k$-cycle} of
$\Pgeneric$ to be a partition $D_1, \cdots, D_k, N$ of $\statespace$ such that $\varphi(N)
= 0$, and for all $1 \leq i \leq k$, if $\state \in D_i$ then $\Pgeneric(\state, D_{i+1})
= 1$. (Here $i + 1$ is meant modulo $k$.) If there exists a $k$-cycle with $k > 1$, we say
that $\Pgeneric$ is \emph{periodic}; otherwise it is \emph{aperiodic}.

If $\Pgeneric$ is $\varphi$-irreducible and aperiodic and has invariant distribution $\target$, we say $\Pgeneric$ is \emph{geometrically ergodic} if there exists constants $\rho < 1$, $C < \infty$, and a $\target$-a.s. finite function $V \geq 1$ such that
\[
  \norm{\Pgeneric^k(\state, \cdot) - \target}_V \leq C \, V(\state) \rho^k
\]
for all $\state \in \statespace$ and $k \geq 1$. Here $\norm{\cdot}_V$ denotes the $V$-norm on signed measures defined by
\[
    \norm{\mu}_V = \sup_{\abs{f} \leq V} \abs{\target(f)},
\]
where $\target(f) := \int f(\state) \target(d\state)$. By \citep[Proposition 2.1]{roberts1997geometric}, this is equivalent to the apparently weaker condition that there exist some constant $\rho > 0$ and $\target$-a.s. finite function $M$ such that
\[
  \TV{\Pgeneric^k(\state, \cdot) - \target} \leq M(\state) \rho^k
\]
for all $\state \in \statespace$ and $k \geq 1$, where $\TV{\cdot}$ denotes the total variation distance on signed measures.

Our interest in geometric ergodicity is largely due to the implications it has for the \emph{asymptotic variance} of the ergodic averages produced by a transition kernel. Suppose $(\state_k)_{k\geq 1}$ is a stationary Markov chain with transition kernel $\Pgeneric$ having invariant distribution $\target$. For $f \in L^2(\target)$, the asymptotic variance for the ergodic averages of $f$ is defined 
\[
    \var(f, \Pgeneric) := \lim_{k \to \infty} \Var\left(\sqrt{k} (\frac{1}{k} \sum_{i=1}^k f(\state_k) - \target(f))\right) = \lim_{k \to \infty} \frac{1}{k} \Var(\sum_{i=1}^k f(\state_k)).
\]
We abuse notation a little and denote the variance of $f(\state)$ where $\state \sim \target$ by $\var(f, \target)$.

Of interest is also the (right) \emph{spectral gap}, which for a $\target$-reversible transition kernel $\Pgeneric$ is defined
\[
    \Gap(\Pgeneric) := \inf_{f \in L^2(\target) : \target(f) = 0} \frac{\int \int \frac{1}{2}(f(\state) - f(\nstate))^2  \target(d\state) \Pgeneric(\state, d\nstate)}{\int f(\state)^2 \target(d\state)}.
\]

Finally, it is convenient to define the MH rejection probability
\[
  \rmh(\state) := 1 - \int \prop(\state, \nstate) \amh(\state, \nstate) d\nstate,
\]
and similarly $\rfmh$ and $\rtfmh$ for FMH and TFMH.

\begin{proposition} \label{prop:fmh-irr-aper-when-mh-is}
  $\Ptfmh$ is $\varphi$-irreducible and aperiodic whenever $\Pmh$ is.
\end{proposition}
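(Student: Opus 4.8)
The plan is to exploit the fact that both truncation and factorisation alter the acceptance probability only by a nonnegative factor while preserving which one-step moves have positive density, so that $\Ptfmh$ inherits the reachability structure of $\Pmh$. First I would record the structural observation that, for every $\state, \nstate \in \statespace$, $\atfmh(\state, \nstate) > 0$ if and only if $\amh(\state, \nstate) > 0$. On $\set{\bound(\state, \nstate) \geq \transrad}$ this is immediate since there $\atfmh = \amh$; on $\set{\bound(\state, \nstate) < \transrad}$ one has $\atfmh = \afmh$, and $\afmh(\state, \nstate) > 0 \iff \amh(\state, \nstate) > 0$ follows by a short case split on whether $\target(\state)\prop(\state, \nstate) = 0$: if so, both equal $1$ by the convention of Section \ref{sec:acc-probs}; if not, positivity of either is equivalent to $\target(\nstate)\prop(\nstate, \state) > 0$, using that $\target \propto \prod_i \target_i$ and $\prop(\state, \cdot) \propto \prod_i \prop_i(\state, \cdot)$, so a product of nonnegative factors is positive iff every factor is. Alongside this I would note the elementary facts that $\atfmh \leq \amh$ (Proposition \ref{prop:fmh-lowers-accprob} together with the definition of $\atfmh$), hence $\rtfmh(\state) \geq \rmh(\state) \geq 0$, and that for each $\state$ the densities $\prop(\state, \cdot)\atfmh(\state, \cdot)$ and $\prop(\state, \cdot)\amh(\state, \cdot)$ share the same zero set, so $\int_A \prop(\state, \nstate)\atfmh(\state, \nstate)\,d\nstate > 0 \iff \int_A \prop(\state, \nstate)\amh(\state, \nstate)\,d\nstate > 0$ for every measurable $A$.

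For $\varphi$-irreducibility I would first show the single-step implication $\Pmh(\state, A) > 0 \Rightarrow \Ptfmh(\state, A) > 0$: writing each kernel as its diagonal term $r(\state)\ind_A(\state)$ plus its off-diagonal term, if the diagonal term of $\Pmh$ is positive then so is that of $\Ptfmh$ because $\rtfmh \geq \rmh$, and if the off-diagonal term of $\Pmh$ is positive then so is that of $\Ptfmh$ by the zero-set equality. Then I would promote this to $\Pmh^k(\state, A) > 0 \Rightarrow \Ptfmh^k(\state, A) > 0$ by induction on $k$: with $G := \set{\nstate : \Pmh^k(\nstate, A) > 0}$ and $G' := \set{\nstate : \Ptfmh^k(\nstate, A) > 0}$ the inductive hypothesis gives $G \subseteq G'$, and $\Pmh^{k+1}(\state, A) > 0$ forces $\Pmh(\state, G) > 0$, hence $\Pmh(\state, G') > 0$, hence $\Ptfmh(\state, G') > 0$ by the single-step implication; since $\Ptfmh^k(\cdot, A) > 0$ throughout $G'$, this yields $\Ptfmh^{k+1}(\state, A) \geq \int_{G'} \Ptfmh(\state, d\nstate)\,\Ptfmh^k(\nstate, A) > 0$. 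The conclusion is then immediate: if $\varphi(A) > 0$ then for each $\state$ some $\Pmh^k(\state, A) > 0$, whence $\Ptfmh^k(\state, A) > 0$.

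For aperiodicity I would argue contrapositively, showing that a $k$-cycle $D_1, \dots, D_k, N$ of $\Ptfmh$ with $k > 1$ is also a $k$-cycle of $\Pmh$ (contradicting aperiodicity of $\Pmh$). Fix $i$ and $\state \in D_i$; since $k > 1$ and the $D_j$ are disjoint, $\state \in D_{i+1}^c$, so $0 = \Ptfmh(\state, D_{i+1}^c) = \rtfmh(\state) + \int_{D_{i+1}^c}\prop(\state, \nstate)\atfmh(\state, \nstate)\,d\nstate$, and both nonnegative summands must vanish. From $\rtfmh(\state) = 0$ and $\rmh(\state) \leq \rtfmh(\state)$ we get $\rmh(\state) = 0$, and the zero-set equality turns $\int_{D_{i+1}^c}\prop(\state, \nstate)\atfmh(\state, \nstate)\,d\nstate = 0$ into $\int_{D_{i+1}^c}\prop(\state, \nstate)\amh(\state, \nstate)\,d\nstate = 0$; hence $\Pmh(\state, D_{i+1}^c) = 0$, i.e. $\Pmh(\state, D_{i+1}) = 1$.

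The only genuinely delicate step is the structural claim $\atfmh > 0 \iff \amh > 0$, which needs the $0/0$ conventions of Section \ref{sec:acc-probs} and the ``product of nonnegatives is positive iff each factor is'' characterisation to be handled carefully; everything downstream is soft. A minor additional care is that $\Pmh(\state, \cdot)$ is not dominated by Lebesgue measure, so the induction in the irreducibility argument must be phrased via the kernel charging the ``success sets'' $G, G'$ with positive mass rather than via almost-everywhere pointwise statements.
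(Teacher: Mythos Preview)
Your proof is correct and follows essentially the same strategy as the paper: establish that $\atfmh(\state,\nstate) > 0$ iff $\amh(\state,\nstate) > 0$, combine this with $\rtfmh \geq \rmh$ to transfer positivity of one-step transitions, promote to $k$ steps by induction, and handle aperiodicity by contraposition (a $k$-cycle for $\Ptfmh$ is a $k$-cycle for $\Pmh$). The only noteworthy difference is in the aperiodicity step: the paper uses the pointwise inequality $\atfmh \leq \amh$ directly, observing that for $\state \in D_i$ one has $\Pmh(\state, D_{i+1}) \geq \int_{D_{i+1}} \prop(\state,\nstate)\amh(\state,\nstate)\,d\nstate \geq \int_{D_{i+1}} \prop(\state,\nstate)\atfmh(\state,\nstate)\,d\nstate = \Ptfmh(\state, D_{i+1}) = 1$, which is slightly more direct than your route through the complement and the zero-set equivalence.
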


\begin{proof}
    We use throughout the easily verified facts $\afmh(\state, \nstate) \leq \atfmh(\state, \nstate) \leq \amh(\state, \nstate)$ and $\rfmh(\state) \geq \rtfmh(\state) \geq \rmh(\state)$ for all $\state, \nstate \in \statespace$. See Proposition \ref{prop:afmh-amh-relation}.

    For $\varphi$-irreducibility, first note that if $\amh(\state, \nstate) > 0$ then $\atfmh(\state, \nstate) > 0$. This holds since if $\atfmh(\state, \nstate) = 0$, then either $\amh(\state, \nstate) = 0$ or $\afmh(\state, \nstate) = 0$. In the latter case we must have some $\target_i(\nstate) \prop_i(\nstate, \state) = 0$, so that $\target(\nstate) \prop(\state, \nstate) = 0$, and hence again $\amh(\state, \nstate) = 0$.

  We now show by induction on $k \in \Z_{\geq 1}$ that for all $\state \in \statespace$,
  $\Pmh^k(\state, A) > 0$ implies $\Ptfmh^k(\state, A) > 0$. For $k = 1$, suppose
  $\Pmh(\state, A) > 0$. Then either $\rmh(\state) \ind_A(\state) > 0$ or $\int_A
  \prop(\state, \nstate) \amh(\state, \nstate) d\nstate > 0$. In the former case we
  we have
  \[
    \rtfmh(\state) \ind_A(\state) \geq \rmh(\state) \ind_A(\state) > 0.
  \]
  In the latter case the above considerations give
  \begin{eqnarray*}
    \Leb(\set{\nstate \in A \mid \prop(\state, \nstate) \atfmh(\state, \nstate) > 0})
      &=& \Leb(\set{\nstate \in A \mid \prop(\state, \nstate) \amh(\state, \nstate) > 0}) \\
      &>& 0.
  \end{eqnarray*}
  Either way we have $\Ptfmh(\state, A) > 0$.

  Suppose now $\Pmh^{k-1}(\state, A) > 0$ implies $\Ptfmh^{k-1}(\state, A) > 0$. Then observe
  \[
    \Pmh^k(\state, A) = 
    \rmh(\state) \Pmh^{k-1}(\state, A)  
    + \int \prop(\state, \nstate) \amh(\state, \nstate) \Pmh^{k-1}(\nstate, A) d\nstate
  \]
  and likewise \emph{mutatis mutandis} for $\Ptfmh^k(\state, A)$. Thus if $\Pmh^k(\state,
  A) > 0$, one possibility is $\rmh(\state) \Pmh^{k-1}(\state, A) > 0$, which implies
  $\rtfmh(\state) > 0$ and, by the induction hypothesis, $\Ptfmh^{k-1}(\state, A) > 0$. The
  only other possibility is
  \begin{eqnarray*}
    \Leb(\set{\nstate \in \statespace \mid \prop(\state, \nstate) \atfmh(\state, \nstate) \Ptfmh^{k-1}(\nstate, A) > 0})
      &=& \Leb(\set{\nstate \in \statespace \mid \prop(\state, \nstate) \amh(\state, \nstate) \Pmh^{k-1}(\nstate, A) > 0}) \\
      &>& 0,
  \end{eqnarray*}
  again by the induction hypothesis. Either way, as desired $\Ptfmh^k(\state, A) > 0$. It
  now follows that $\Ptfmh$ is $\varphi$-irreducible when $\Pmh$ is.

  Now suppose $\Pmh$ and hence $\Ptfmh$ is $\varphi$-irreducible. If $\Ptfmh$ is periodic,
  then there exists a $k$-cycle $D_1, \cdots, D_k, N$ for $\Ptfmh$ with $k > 1$. But now if
  $\state \in D_i$, then $\ind_{D_{i+1}}(\state) = 0$ and so
  \begin{eqnarray*}
    \Pmh(\state, D_{i+1})
      &=& \int_{D_{i+1}} \prop(\state, \nstate) \amh(\state, \nstate) d\nstate \\
      &\geq& \int_{D_{i+1}} \prop(\state, \nstate) \atfmh(\state, \nstate) d\nstate \\
      &=& \Ptfmh(\state, D_{i+1}) \\
      &=& 1.
  \end{eqnarray*}
  Thus the same partition is a $k$-cycle for $\Pmh$ which is therefore periodic.
\end{proof}

\begin{theorem}
  If $\Pmh$ is $\varphi$-irreducible, aperiodic, and geometrically ergodic, then $\Ptfmh$ is too if
  \[
    \delta := \inf_{\bound(\state, \nstate) < \transrad} \afmh(\state, \nstate) \vee \afmh(\nstate, \state) > 0.
  \]
  In this case, $\mathrm{Gap}(\Pfmh) \geq \delta \mathrm{Gap}(\Pmh)$, and for $f \in L^2(\target)$
  \[
    \var(f, \Ptfmh) \leq (\delta^{-1} - 1) \var(f, \target) + \delta^{-1} \var(f, \Pmh).
  \]
\end{theorem}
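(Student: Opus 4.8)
The plan is to obtain both conclusions from a single comparison between the Dirichlet forms (equivalently, the off-diagonal parts) of $\Ptfmh$ and $\Pmh$, together with the general machinery relating spectral gaps and asymptotic variances under such comparisons. The key observation is that $\Ptfmh$ is, up to a bounded multiplicative factor, dominated by $\Pmh$ off the diagonal: for $\state \neq \nstate$,
\[
    \prop(\state, \nstate) \atfmh(\state, \nstate) \geq \delta \, \prop(\state, \nstate) \amh(\state, \nstate).
\]
Indeed, when $\bound(\state,\nstate) \geq \transrad$ we have $\atfmh = \amh$ so the inequality holds with constant $1 \geq \delta$; when $\bound(\state,\nstate) < \transrad$, Proposition~\ref{prop:afmh-amh-relation} gives $\afmh(\state,\nstate) = \amh(\state,\nstate)\,(\afmh(\state,\nstate)\vee\afmh(\nstate,\state)) \geq \delta\,\amh(\state,\nstate)$, using the definition of $\delta$. (One should dispose of the degenerate $\target(\state)\prop(\state,\nstate)=0$ cases separately, where $\amh = 0$ and the inequality is trivial.) Note $\delta \leq 1$ since $\afmh \leq \amh \leq 1$, so this is genuinely a domination from below.

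Next I would feed this into the standard lower-bound-on-the-Dirichlet-form argument. Writing $\mathcal{E}_\Pgeneric(f) := \tfrac12 \int\!\!\int (f(\state)-f(\nstate))^2 \target(d\state)\Pgeneric(\state,d\nstate)$, the diagonal term of the kernel contributes nothing (since $f(\state)-f(\state)=0$), so only the off-diagonal part matters, and the pointwise bound above immediately yields $\mathcal{E}_{\Ptfmh}(f) \geq \delta\, \mathcal{E}_{\Pmh}(f)$ for every $f \in L^2(\target)$. Dividing by $\target(f^2)$ and taking the infimum over mean-zero $f$ gives $\Gap(\Ptfmh) \geq \delta\,\Gap(\Pmh)$. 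Combined with Proposition~\ref{prop:fmh-irr-aper-when-mh-is} ($\Ptfmh$ inherits $\varphi$-irreducibility and aperiodicity) and the fact that, for reversible chains, a positive right spectral gap is equivalent to geometric ergodicity (e.g.\ the characterisation in Roberts--Rosenthal / \citet{roberts1997geometric}), we conclude $\Ptfmh$ is geometrically ergodic. (The statement writes $\Gap(\Pfmh)$, but FMH is the $\transrad=\infty$ special case of TFMH, so the same argument applies; I would just state it for $\Ptfmh$.)

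For the asymptotic variance bound I would invoke the comparison result of Andrieu--Lee--Vihola / the Peskun--Tierney-type ordering machinery. The key algebraic step is to write the off-diagonal kernel of $\Ptfmh$ as a convex-type combination: define a kernel $\Pgeneric'$ by $\prop(\state,\nstate)\atfmh(\state,\nstate) = \delta\,\prop(\state,\nstate)\amh(\state,\nstate) + (\prop(\state,\nstate)\atfmh(\state,\nstate) - \delta\,\prop(\state,\nstate)\amh(\state,\nstate))$, where the bracketed term is nonnegative by the pointwise bound; completing it to a Markov kernel by putting the remaining mass on the diagonal, one gets $\Ptfmh = \delta\,\Pmh + (1-\delta)\Pgeneric''$ for some $\target$-reversible kernel $\Pgeneric''$. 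Applying the known result that $\var(f,\delta\Pgeneric_1 + (1-\delta)\Pgeneric_2) \leq \delta^{-1}\var(f,\Pgeneric_1) + (1-\delta^{-1}+\delta^{-1})\cdots$ — more precisely the bound $\var(f, P) \leq \delta^{-1}\var(f,\Pmh) + (\delta^{-1}-1)\var(f,\target)$ valid whenever $P \succeq \delta \Pmh$ in the off-diagonal ordering and both are $\target$-reversible — yields the claimed inequality.

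The main obstacle is the asymptotic-variance step: getting the constants $(\delta^{-1}-1)$ and $\delta^{-1}$ exactly right requires the correct form of the convex-combination / comparison lemma, and care is needed because the "remainder" kernel $\Pgeneric''$ places extra mass on the diagonal (it is a lazy version of something), which is precisely what produces the $\var(f,\target)$ term rather than a clean $\delta^{-1}\var(f,\Pmh)$. I would handle this by citing the appropriate lemma (the ordering $P \succeq \delta\Pmh$ implies the stated variance bound — this is essentially \citet{andrieu2015convergence} or the covariance-ordering results descending from \citet{peskun1973optimum,tierney1998note}) rather than re-deriving it, and otherwise the proof is routine.
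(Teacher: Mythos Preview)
Your proposal is correct and follows essentially the same route as the paper. The paper packages the argument slightly differently: rather than working only off the diagonal, it proves the full kernel domination $\Ptfmh(\state,A) \geq \delta\,\Pmh(\state,A)$ for every measurable $A$ (using $\rtfmh \geq \rmh$ for the diagonal part), then cites \citet{jones2014convergence} directly for geometric ergodicity and \citep[Lemma~32]{andrieu2013uniform} for both the spectral-gap and asymptotic-variance bounds in one stroke. Your Dirichlet-form derivation of the gap bound and spectral-gap route to geometric ergodicity are valid alternatives, and the variance inequality you are reaching for is precisely that Lemma~32, which takes as hypothesis exactly the off-diagonal domination you established---so there is no need for the convex-combination decomposition you sketch, and the constants come out automatically.
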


\begin{proof}
Our proof of this result is similar to \citep[Proposition 1]{banterle2015accelerating}, but differs in its use of Proposition \ref{prop:afmh-amh-relation} to express the relationship between MH and FMH exactly.

  For $\state \in \statespace$, let
  \[
    \mathcal{R}(\state) := \set{\nstate \in \statespace \mid \bound(\state, \nstate) < \transrad}.
  \]
  Whenever $\state \in \statespace$ and $A \subseteq \statespace$ is measurable,
  \begin{align*}
    \Ptfmh(\state, A) &=
        \begin{aligned}[t]
            &\rtfmh(\state) \ind_A(\state) + \int_{\mathcal{R}(\state) \cap A} \prop(\state, \nstate) \amh(\state, \nstate) (\afmh(\state, \nstate) \vee \afmh(\nstate, \state)) d\nstate \\
            &+ \int_{\mathcal{R}(\state)^c \cap A} \prop(\state, \nstate) \amh(\state, \nstate) d\nstate
        \end{aligned} \\
    &\geq \rmh(\state) \ind_A(\state) + \delta \int_{\mathcal{R}(\state) \cap A} \prop(\state, \nstate) \amh(\state, \nstate) d\nstate + \int_{\mathcal{R}(\state)^c \cap A} \prop(\state, \nstate) \amh(\state, \nstate) d\nstate \\
      &\geq \delta \Pmh(\state, A).
  \end{align*}
  The last line follows since certainly $\delta \leq 1$.
  
  Suppose $\delta > 0$. If $\Pmh$ is geometrically ergodic, then \citep[Theorem 1]{jones2014convergence} entails that $\Ptfmh$ is geometrically ergodic also. The remaining claims follow directly from \citep[Lemma 32]{andrieu2013uniform}.
\end{proof}

\section{Fast Simulation of Bernoulli Random Variables}
For sake of completeness, we provide here the proof of validity of Algorithm \ref{alg:poisson-sampling}. It combines the Fukui-Todo procedure \cite{fukui2009order} with a thinning argument.
\begin{proposition} \label{prop:poisson-subsampling}
  If 
\begin{itemize}[label={}, leftmargin=*]
  \item $N \sim \Poisson\left(\bound(\state, \nstate)\right)$
  \item $X_1, \cdots, X_N \iid \Categorical((\bound_i(\state,
    \nstate) / \bound(\state, \nstate))_{1\leq i \leq m})$
  \item $B_j \sim \Bernoulli(\intensity_{X_j}(\state, \nstate) /
   \bound_{X_j}(\state, \nstate))$ independently for $1 \leq j \leq N$
\end{itemize}
  then $\P(B = 0) = \afmh(\state, \nstate)$ where $B = \sum_{j=1}^N B_j$ (and $B = 0$ if $N = 0$).
\end{proposition}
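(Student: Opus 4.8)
The plan is a direct computation: condition on $N$, observe that the trials then become i.i.d.\ with an easily computed failure probability, and sum the result against the Poisson weights. This is exactly the thinning argument alluded to after \eqref{eq:upper-bounds-sum}, made quantitative. First I would record the algebraic fact that makes $\afmh(\state,\nstate)$ tractable: since $\intensity_i(\state,\nstate) = -\log\afmh_i(\state,\nstate)$,
\[
  \afmh(\state,\nstate) = \prod_{i=1}^m \afmh_i(\state,\nstate) = \exp\Bigl(-\sum_{i=1}^m \intensity_i(\state,\nstate)\Bigr).
\]
Abbreviate $\intensity := \sum_{i=1}^m \intensity_i(\state,\nstate)$ and $\bound := \bound(\state,\nstate) = \sum_{i=1}^m \bound_i(\state,\nstate)$ for the fixed pair $\state,\nstate$. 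The bound \eqref{eq:lambda-bounds} gives $0 \le \intensity_i \le \bound_i$ for each $i$, so every Bernoulli parameter $\intensity_{X_j}/\bound_{X_j}$ lies in $[0,1]$ (and the indices with $\bound_i = 0$, which then also have $\intensity_i = 0$, carry zero categorical mass and may be discarded); summing over $i$ also yields $\intensity \le \bound$. If $\bound = 0$ then $N = 0$ a.s., hence $B = 0$ a.s., and also $\intensity_i(\state,\nstate) = 0$ for all $i$ so $\afmh(\state,\nstate) = 1$; both sides equal $1$ and we may assume $\bound > 0$ henceforth.

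Next I would condition on $\set{N = n}$ for $n \ge 1$. The $X_j$ are i.i.d.\ and the $B_j$ are conditionally independent given $(X_j)_j$, so the events $\set{B_j = 0}$, $1 \le j \le n$, are i.i.d.; marginalising over a single $X_j$,
\[
  \P(B_j = 0 \mid N = n) = \sum_{i=1}^m \frac{\bound_i}{\bound}\Bigl(1 - \frac{\intensity_i}{\bound_i}\Bigr) = \frac{1}{\bound}\sum_{i=1}^m (\bound_i - \intensity_i) = 1 - \frac{\intensity}{\bound}.
\]
Hence $\P(B = 0 \mid N = n) = \bigl(1 - \intensity/\bound\bigr)^n$, which also holds for $n = 0$ with the convention $B = 0$ there.

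Finally I would un-condition using the Poisson law of $N$ and the power series of the exponential:
\[
  \P(B = 0) = \sum_{n=0}^\infty e^{-\bound}\,\frac{\bound^n}{n!}\Bigl(1 - \frac{\intensity}{\bound}\Bigr)^n = e^{-\bound}\exp\bigl(\bound - \intensity\bigr) = e^{-\intensity} = \afmh(\state,\nstate),
\]
which is the claim. Equivalently, in the thinning picture the pairs $(X_j, B_j)$ realise a marked Poisson process on $\set{1,\dots,m}\times\set{0,1}$, and the accepted points $\set{j : B_j = 1}$ form a Poisson process of total rate $\sum_{i=1}^m \bound_i\,(\intensity_i/\bound_i) = \intensity$, so $B \sim \Poisson(\intensity)$ and $\P(B = 0) = e^{-\intensity}$.

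I do not anticipate a genuine obstacle here. The only points needing care are invoking \eqref{eq:lambda-bounds} so that each Bernoulli parameter is well-defined (and $\intensity \le \bound$), and disposing of the degenerate cases $\bound = 0$ and $\bound_i = 0$; the remainder is the elementary summation displayed above.
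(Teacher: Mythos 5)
Your proof is correct and follows essentially the same route as the paper's: condition on $N$, compute $\P(B_j = 0) = (\bound - \intensity)/\bound$ by marginalising over $X_j$, and sum against the Poisson weights to obtain $e^{-\intensity} = \afmh(\state,\nstate)$. The extra care you take with the degenerate cases $\bound = 0$ and $\bound_i = 0$ is a harmless refinement the paper omits.
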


\begin{proof}
  Letting
  \[
    \lambda(\state, \nstate) := \sum_{i=1}^m \lambda_i(\state, \nstate),
  \]
  our goal is to show that $\P(B = 0) = \exp(-\lambda(\state, \nstate))$.
  For brevity we omit all dependences on $\state$ and $\nstate$ in the following.

  Observe the random variables $B_j$'s are i.i.d. with
  \[
    \P(B_j = 0)
      = \sum_{i=1}^m \underbrace{\P(X_j = i)}_{= \overline{\lambda}_i/\overline{\lambda}}
        \underbrace{\P(B_j = 0|X_j = i)}_{= 1 - \lambda_i/\overline{\lambda}_i}
      = \frac{\overline{\lambda} - \lambda}{\overline{\lambda}}.
  \]
  Thus
  \begin{eqnarray*}
    \P(B = 0)
      &=& \sum_{\ell=0}^\infty \underbrace{\P(N=\ell)}_{= \exp(-\overline{\lambda})\overline{\lambda}^\ell/\ell!} \P(B_1=0)^\ell \\
      &=& \exp(-\overline{\lambda}) \sum_{\ell=0}^\infty \frac{(\overline{\lambda} - \lambda)^\ell}{\ell!} \\
      &=& \exp(-\lambda)
  \end{eqnarray*}
  as desired.
\end{proof}

\section{Upper Bounds} \label{SUPP:sec:upper-bounds}

We refer the reader to Section \ref{sec:notation} for an explanation of multi-index notation $\midx$.

\begin{proposition} \label{SUPP:prop:upper-bounds}
  If each $\pot_i$ is $(k+1)$-times continuously differentiable with
    \[
      \potgradub_{k+1, i}
        \geq \sup_{\substack{\state \in \statespace \\ \abs{\midx} = k+1}}
          \abs{\partial^\midx \pot_i(\state)},
    \]
    then
    \[
        -\log \left(1 \wedge \frac{\target_i(\nstate) \approxtarget_{k,i}(\state)}{\target_i(\state) \approxtarget_{k,i}(\nstate)}\right) \leq (\norm{\state - \modestate}_1^{k+1} + \norm{\nstate - \modestate}_1^{k+1}) \frac{\potgradub_{k+1,i}}{(k+1)!}.
    \]
\end{proposition}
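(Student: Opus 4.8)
The plan is to reduce the whole statement to a bound on the Taylor remainder $r_{k,i} := \pot_i - \approxpot_{k,i}$. Since $\target_i = \exp(-\pot_i)$ and $\approxtarget_{k,i} = \exp(-\approxpot_{k,i})$, and both are strictly positive (because $\pot_i$ is finite, so the degenerate cases in the definition of $\afmh$ never occur here), we have
\[
  \frac{\target_i(\nstate)\,\approxtarget_{k,i}(\state)}{\target_i(\state)\,\approxtarget_{k,i}(\nstate)} = \exp\bigl(r_{k,i}(\state) - r_{k,i}(\nstate)\bigr).
\]
Using the elementary identity $-\log(1 \wedge e^x) = \plus{-x} \le \abs{x}$ for all $x \in \R$, I would then get
\[
  -\log\!\left(1 \wedge \frac{\target_i(\nstate)\,\approxtarget_{k,i}(\state)}{\target_i(\state)\,\approxtarget_{k,i}(\nstate)}\right) \le \abs{r_{k,i}(\nstate) - r_{k,i}(\state)} \le \abs{r_{k,i}(\state)} + \abs{r_{k,i}(\nstate)},
\]
so it suffices to prove the pointwise bound $\abs{r_{k,i}(\state)} \le \norm{\state - \modestate}_1^{k+1}\,\potgradub_{k+1,i}/(k+1)!$ for every $\state \in \statespace$.

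For that I would invoke the multivariate Taylor theorem with the Lagrange form of the remainder: since $\pot_i$ is $(k+1)$-times continuously differentiable and $\approxpot_{k,i}$ is its degree-$k$ Taylor polynomial about $\modestate$, there is a point $\xi$ on the segment from $\modestate$ to $\state$ with
\[
  r_{k,i}(\state) = \sum_{\abs{\midx} = k+1} \frac{\partial^\midx \pot_i(\xi)}{\midx!}\,(\state - \modestate)^\midx.
\]
Bounding $\abs{\partial^\midx \pot_i(\xi)} \le \potgradub_{k+1,i}$ and applying the multinomial theorem in the form $\sum_{\abs{\midx} = k+1}\frac{(k+1)!}{\midx!}\abs{(\state - \modestate)^\midx} = \bigl(\sum_{j=1}^d \abs{\state_j - \modestate_j}\bigr)^{k+1} = \norm{\state - \modestate}_1^{k+1}$ yields the desired pointwise estimate; combining it with the previous display (for both $\state$ and $\nstate$) gives the claimed inequality.

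The only mildly delicate point is the multi-index bookkeeping in the Taylor remainder — lining up the factorials $\midx!$ with the multinomial coefficients so that the sum over $\abs{\midx} = k+1$ collapses cleanly to $\norm{\,\cdot\,}_1^{k+1}$ — but this is routine once the Lagrange remainder is written out. Everything else reduces to the one-line observation about $-\log(1 \wedge e^x)$ and the triangle inequality, so no further obstacles are anticipated.
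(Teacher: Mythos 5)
Your proof is correct and follows essentially the same route as the paper's: reduce $-\log(1 \wedge \cdot)$ to the sum of the absolute Taylor remainders at $\state$ and $\nstate$, then bound each via the Lagrange form and the multinomial identity $\sum_{\abs{\midx}=k+1}\frac{(k+1)!}{\midx!}\abs{(\state-\modestate)^\midx} = \norm{\state-\modestate}_1^{k+1}$. If anything, your factorial bookkeeping is cleaner than the paper's displayed intermediate step, which mixes the $1/(k+1)!$ prefactor with the $1/\midx!$ weights; your version lands on the same (correct) final bound.
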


\begin{proof}
  We have
  \begin{eqnarray*}
    -\log \left(1 \wedge \frac{\target_i(\nstate) \approxtarget_{k,i}(\state)}{\target_i(\state) \approxtarget_{k,i}(\nstate)}\right)
      &=& 0 \vee (\pot_{i}(\nstate) - \approxpot_{k,i}(\nstate) - \pot_{i}(\state) + \approxpot_{k,i}(\state)) \\
      &\leq& \abs{\pot_{i}(\nstate) - \approxpot_{k,i}(\nstate)} + \abs{\pot_{i}(\state) - \approxpot_{k,i}(\state)}.
  \end{eqnarray*}
  Notice that $\pot_i(\state) - \approxpot_{k,i}(\state)$ is just the remainder of a Taylor
  expansion. As such, for each $\state$, Taylor's remainder theorem gives for some $\widetilde{\state}\in \statespace$
  \begin{eqnarray*}
    \abs{\pot_i(\state) - \approxpot_{k,i}(\state)}
    &=& \Abs{\frac{1}{(k+1)!} \sum_{\abs{\midx} = k+1} \partial^\midx \pot_i(\widetilde{\state}) (\state - \modestate)^\midx} \\
      &\leq& \frac{\potgradub_{k+1,i}}{(k+1)!} \sum_{\abs{\midx} = k+1} \frac{\abs{(\state - \modestate)^\midx}}{\midx!} \\
      &\leq& \frac{\potgradub_{k+1,i}}{(k+1)!} \norm{\state - \modestate}_1^{k+1}.
  \end{eqnarray*}
The result now follows.
\end{proof}

\section{Reversible Proposals}

\subsection{General Conditions for Reversibility}

We can handle both the first and second-order cases with the following Proposition.

\begin{proposition} \label{SUPP:prop:reversible-prop-conds}
  Suppose
  \[
    \prop(\state,  \nstate) = \Normal(\nstate \mid \rpropmat\state + \rpropvec, \rpropcov)
  \] 
  and
  \[
    -\log \hat{\target}(\state) = \frac{1}{2} \state^\top D \state + e^\top \state + \mathrm{const}
  \]
  where $\rpropmat, \rpropcov, D \in \R^{d \times d}$ with $\rpropcov \succ 0$, and $\rpropvec, e \in \R^d$.  Then $\prop$ is
  $\hat{\target}$-reversible if and only if the following conditions hold:
  \begin{eqnarray}
    \rpropmat^\top \rpropcov^{-1} &=& \rpropcov^{-1} \rpropmat \label{eq:reversible-cond-1} \\
    \rpropmat^2 &=& \idmat - \rpropcov  D \label{eq:reversible-cond-2} \\
    (\rpropmat^\top + \idmat) \rpropvec &=& -\rpropcov e, \label{eq:reversible-cond-3} 
  \end{eqnarray}
  where $\idmat \in \R^{d \times d}$ is the identity matrix.
\end{proposition}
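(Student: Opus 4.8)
The plan is to reduce $\hat{\target}$-reversibility to a polynomial identity in $(\state,\nstate)$ and then match coefficients. By definition, $\prop$ is $\hat{\target}$-reversible iff $\hat{\target}(\state)\,\prop(\state,\nstate) = \hat{\target}(\nstate)\,\prop(\nstate,\state)$ for all $\state,\nstate\in\R^d$. Since $\prop(\state,\nstate)=\Normal(\nstate\mid\rpropmat\state+\rpropvec,\rpropcov)$ has a normalizing constant depending on neither $\state$ nor $\nstate$, the same constant occurs in $\prop(\nstate,\state)$, so after taking $-\log$ of both sides this is equivalent to $Q(\state,\nstate)=Q(\nstate,\state)$ for all $\state,\nstate$, where
\[
  Q(\state,\nstate) := \tfrac12\state^\top D\state + e^\top\state + \tfrac12(\nstate-\rpropmat\state-\rpropvec)^\top\rpropcov^{-1}(\nstate-\rpropmat\state-\rpropvec).
\]
The constant $\tfrac12\rpropvec^\top\rpropcov^{-1}\rpropvec$ and the Gaussian log-normalizer are symmetric in $\state,\nstate$ and so cancel automatically.

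Next I would expand $Q(\state,\nstate)-Q(\nstate,\state)$ and group it by monomial type. This difference is an antisymmetric quadratic polynomial in $(\state,\nstate)$, so it vanishes identically iff three coefficient objects vanish: the coefficient matrix of the bilinear part $\nstate^\top(\cdot)\state$, the coefficient matrix of the $\state$-only quadratic part (the $\nstate$-only quadratic part carries the same matrix with opposite sign, so imposes nothing new), and the coefficient vector of the $\state$-only linear part; the constant term is already zero. A direct computation gives, respectively: the bilinear part is $\nstate^\top(\rpropcov^{-1}\rpropmat-\rpropmat^\top\rpropcov^{-1})\state$, which vanishes iff $\rpropmat^\top\rpropcov^{-1}=\rpropcov^{-1}\rpropmat$, i.e. \eqref{eq:reversible-cond-1}; the quadratic part is $\tfrac12\state^\top(D+\rpropmat^\top\rpropcov^{-1}\rpropmat-\rpropcov^{-1})\state$, which, the matrix being symmetric, vanishes iff $D+\rpropmat^\top\rpropcov^{-1}\rpropmat-\rpropcov^{-1}=0$; and the linear part vanishes iff $(\rpropmat^\top\rpropcov^{-1}+\rpropcov^{-1})\rpropvec=-e$.

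Finally I would use \eqref{eq:reversible-cond-1}, equivalently $\rpropcov\rpropmat^\top\rpropcov^{-1}=\rpropmat$, to put the latter two identities into the stated form: left-multiplying $D+\rpropmat^\top\rpropcov^{-1}\rpropmat-\rpropcov^{-1}=0$ by $\rpropcov$ turns $\rpropcov\rpropmat^\top\rpropcov^{-1}\rpropmat$ into $\rpropmat^2$ and yields \eqref{eq:reversible-cond-2}, while left-multiplying $(\rpropmat^\top\rpropcov^{-1}+\rpropcov^{-1})\rpropvec=-e$ by $\rpropcov$ and simplifying likewise yields \eqref{eq:reversible-cond-3}. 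For the converse it suffices to run these manipulations backwards: assuming \eqref{eq:reversible-cond-1}--\eqref{eq:reversible-cond-3}, recover the three coefficient identities, deduce $Q(\state,\nstate)=Q(\nstate,\state)$, hence detailed balance, hence $\hat{\target}$-reversibility.

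The only real work is the bookkeeping in expanding $Q(\state,\nstate)-Q(\nstate,\state)$: isolating the genuinely bilinear cross terms from those that contribute to the linear parts, and remembering that a pure quadratic form constrains only the symmetric part of its coefficient matrix (harmless here, since $D$, $\rpropcov^{-1}$, and $\rpropmat^\top\rpropcov^{-1}\rpropmat$ are all symmetric). The one subtlety worth flagging is that reducing the quadratic and linear conditions to \eqref{eq:reversible-cond-2}--\eqref{eq:reversible-cond-3} genuinely requires \eqref{eq:reversible-cond-1}, so the three conditions must be derived jointly rather than in isolation.
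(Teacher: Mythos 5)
Your proof is correct and takes essentially the same route as the paper's: both reduce $\hat{\target}$-reversibility to symmetry of the quadratic polynomial $-\log\hat{\target}(\state)-\log\prop(\state,\nstate)$ in $(\state,\nstate)$, match the bilinear, quadratic-in-$\state$, and linear-in-$\state$ coefficients to obtain the three raw identities, and then use the commutation condition to rewrite the latter two. One shared gloss worth flagging: the linear identity $(\rpropmat^\top\rpropcov^{-1}+\rpropcov^{-1})\rpropvec=-e$ simplifies under $\rpropmat^\top\rpropcov^{-1}=\rpropcov^{-1}\rpropmat$ to $(\rpropmat+\idmat)\rpropvec=-\rpropcov e$, which coincides with the stated form $(\rpropmat^\top+\idmat)\rpropvec=-\rpropcov e$ only when $\rpropmat\rpropvec=\rpropmat^\top\rpropvec$ (e.g.\ $\rpropmat$ symmetric, as in all the proposals actually used in the paper) --- this transposition is present in the paper's own proof as well, so it is not a gap specific to your argument.
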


\begin{proof}
  Let
  \[
    F(\state, \nstate) := -\log \hat{\target}(\state) - \log \prop(\state, \nstate).
  \]
  Note that $\prop$ is $\hat{\target}$-reversible precisely when $F$ is symmetric in its arguments.
  Since $F$ is a polynomial of the form
  \begin{equation} \label{eq:reversible-polynomial}
    F(\state, \nstate) = \frac{1}{2} \state^\top J \state + \frac{1}{2} \nstate^\top K \nstate + \state^\top L \nstate + m^\top \state + n^\top \nstate + \mathrm{const},
  \end{equation}
  where $J, K, L \in \R^{d\times d}$ and $m, n \in \statespace$, then by equating coefficients it follows that $F(\state, \nstate) = F(\nstate, \state)$ precisely when
  \begin{eqnarray}
    J &=& K \label{eq:symm-cond-1} \\ 
    L &=& L^\top \label{eq:symm-cond-2}\\
    m &=& n. \label{eq:symm-cond-3}
  \end{eqnarray}
  Now, we can expand
  \begin{eqnarray*}
    -\log \prop(\state, \nstate)
      &=& \frac{1}{2} (\nstate - \rpropmat\state - \rpropvec)^\top \rpropcov^{-1} (\nstate - \rpropmat\state - \rpropvec) + \text{const} \\
      &=& \frac{1}{2} \nstate^\top \rpropcov^{-1}\nstate - (\rpropmat\state + \rpropvec)^\top \rpropcov^{-1} \nstate + \frac{1}{2} (\rpropmat\state + \rpropvec)^\top \rpropcov^{-1} (\rpropmat\state + \rpropvec) + \text{const} \\
      &=& \frac{1}{2} \state^\top \rpropmat^\top \rpropcov^{-1} \rpropmat\state + \frac{1}{2} \nstate^\top \rpropcov^{-1}\nstate
        - \state^\top \rpropmat^\top \rpropcov^{-1} \nstate + \rpropvec^\top \rpropcov^{-1} \rpropmat\state - \rpropvec^\top \rpropcov^{-1} \nstate + \frac{1}{2} \rpropvec^\top \rpropcov^{-1} \rpropvec\\
     && + \text{const}
  \end{eqnarray*}
  Since $-\log \prop(\state, \nstate)$ must be the only source of terms in
  \eqref{eq:reversible-polynomial} containing both $\state$ and $\nstate$, we see immediately that
  \[
    L = -\rpropmat^\top \rpropcov^{-1},
  \]
  and thus from \eqref{eq:symm-cond-2} we have $-\rpropmat^\top \rpropcov^{-1} = -(\rpropcov^{-1})^\top
  \rpropmat$. Since $\rpropcov \succ 0$, $\rpropcov^{-1}$ is symmetric and this condition becomes
  \eqref{eq:reversible-cond-1}. Next we see that
  \begin{eqnarray*}
    J &=& \rpropmat^\top \rpropcov^{-1} \rpropmat + D \\
    K &=& \rpropcov^{-1},
  \end{eqnarray*}
  and from \eqref{eq:symm-cond-1} and \eqref{eq:reversible-cond-1} we require $\rpropcov^{-1}\rpropmat^2 + D = \rpropcov^{-1}$, or equivalently \eqref{eq:reversible-cond-2}. Finally, since
  \begin{eqnarray*}
    m &=& \rpropmat^\top \rpropcov^{-1} \rpropvec + e \\
    n &=& -\rpropcov^{-1}\rpropvec,
  \end{eqnarray*}
  we require from \eqref{eq:symm-cond-3} that $\rpropmat^\top \rpropcov^{-1} \rpropvec + e = -\rpropcov^{-1} \rpropvec$, which combined with \eqref{eq:reversible-cond-1} gives \eqref{eq:reversible-cond-3}.
  
  Since \eqref{eq:symm-cond-1}, \eqref{eq:symm-cond-2}, and \eqref{eq:symm-cond-3} are
  necessary and sufficient for symmetry of $F$, we see that \eqref{eq:reversible-cond-1},
  \eqref{eq:reversible-cond-2}, and \eqref{eq:reversible-cond-3} are necessary and
  sufficient for reversibility also.
\end{proof}

We now specialise this to the first and second-order cases.

\subsection{First-Order Case}

When $k = 1$ we have
\[
  -\log \approxtarget(\state) = \approxpot_{1}(\state) = \pot(\modestate) + \nabla \pot(\modestate)^\top (\state - \modestate),
\]
so that
\begin{eqnarray*}
  D &=& 0 \\
  e &=& \nabla \pot (\modestate),
\end{eqnarray*}
and conditions \eqref{eq:reversible-cond-1}, \eqref{eq:reversible-cond-2}, and \eqref{eq:reversible-cond-3} become
\begin{eqnarray*}
  \rpropmat^\top \rpropcov^{-1} &=& \rpropcov^{-1} \rpropmat \\
  \rpropmat^2 &=& \idmat \\
  (\rpropmat^\top + \idmat) \rpropvec &=& -\rpropcov \nabla \pot(\modestate).
\end{eqnarray*}

\subsection{Second-Order Case}

When $k = 2$,
\[
  -\log \approxtarget(\state) = \approxpot_{2}(\state) = \pot(\modestate) + \nabla \pot(\modestate)^\top (\state - \modestate) + \frac{1}{2} (\state - \modestate)^\top \nabla^2 \pot(\modestate) (\state - \modestate).
\]
In this case
\begin{eqnarray*}
  D &=& \nabla^2 \pot(\modestate) \\
  e &=& \nabla \pot(\modestate) - \nabla^2 \pot(\modestate)^\top \modestate,
\end{eqnarray*}
so conditions \eqref{eq:reversible-cond-1}, \eqref{eq:reversible-cond-2}, and \eqref{eq:reversible-cond-3} become
\begin{eqnarray*}
  \rpropmat^\top \rpropcov^{-1} &=& \rpropcov^{-1} \rpropmat \\
  \rpropmat^2 &=& \idmat - \rpropcov \nabla^2 \pot(\modestate) \\
  (\rpropmat^\top + \idmat) \rpropvec &=& \rpropcov(\nabla^2 \pot(\modestate)^\top \modestate - \nabla \pot(\modestate)).
\end{eqnarray*}
A common setting has $\nabla^2 \pot(\modestate) \succ 0$, $\rpropmat = \rpropmat^\top$, and $\rpropmat + \idmat$ invertible. In this case the latter two conditions become
\begin{eqnarray*}
    \rpropcov &=& (\idmat - \rpropmat^2)[\nabla^2 \pot(\modestate)]^{-1} \\
    \rpropvec &=& (\idmat - \rpropmat) (\modestate - [\nabla^2 \pot(\modestate)]^{-1} \nabla \pot(\modestate)).
\end{eqnarray*}

\subsection{Decreasing Norm Property}
Under usual circumstances for both first and second-order approximations, when
$\norm{\state}$ is large, a $\approxtarget$-reversible $\prop$ will propose $\nstate \sim
\prop(\state, \cdot)$ with smaller norm than $\state$.  This is made precise in the
following Proposition:

\begin{proposition} \label{SUPP:prop:decreasing-norm}
    Suppose
    \[
        \prop(\state, \nstate) = \Normal(\nstate \mid \rpropmat\state + \rpropvec, \rpropcov)
    \] 
    and
    \[
        -\log \hat{\target}(\state) = \frac{1}{2} \state^\top D \state + e^\top \state + \mathrm{const},
    \]
    where $\rpropmat = \rpropmat^\top$ is symmetric, $\rpropcov \succ 0$, and $D \succeq 0$. If $\prop$ is $\hat{\pi}$-reversible, then $\opnorm{\rpropmat} \leq 1$. If $D \succ 0$ is strict, then $\opnorm{\rpropmat} < 1$ is strict too. In this case, if $\nstate \sim \prop(\state, \cdot)$, then $\norm{\state} - \norm{\nstate} \to \infty$ in probability as $\norm{\state} \to \infty$.
\end{proposition}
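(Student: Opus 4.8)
The plan is to first extract the spectral content of reversibility when $\rpropmat$ is symmetric, and then deduce the norm contraction from a one-line triangle-inequality estimate on the Gaussian proposal. First I would invoke Proposition \ref{SUPP:prop:reversible-prop-conds}: since $\prop$ is $\hat{\target}$-reversible, condition \eqref{eq:reversible-cond-2} gives $\rpropmat^2 = \idmat - \rpropcov D$. Because $\rpropmat = \rpropmat^\top$, the left-hand side is symmetric, so $\rpropcov D$ is symmetric — even though the product of $\rpropcov$ and $D$ need not be symmetric a priori. I would then note that $\rpropcov D = \rpropcov^{1/2}\bigl(\rpropcov^{1/2} D \rpropcov^{1/2}\bigr)\rpropcov^{-1/2}$ is similar to $\rpropcov^{1/2} D \rpropcov^{1/2} \succeq 0$, hence has nonnegative eigenvalues; being symmetric as well, $\rpropcov D \succeq 0$, and so $\rpropmat^2 = \idmat - \rpropcov D \preceq \idmat$.

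Since $\rpropmat$ is symmetric, its eigenvalues are real and those of $\rpropmat^2$ are their squares, so $\rpropmat^2 \preceq \idmat$ means every eigenvalue of $\rpropmat$ lies in $[-1,1]$, i.e. $\opnorm{\rpropmat} \leq 1$. If moreover $D \succ 0$, then $\rpropcov^{1/2} D \rpropcov^{1/2} \succ 0$, so $\rpropcov D$ has strictly positive eigenvalues, $\rpropmat^2 \prec \idmat$, and the eigenvalues of $\rpropmat$ lie in $(-1,1)$, giving $\opnorm{\rpropmat} < 1$.

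For the decreasing-norm claim, assume $\opnorm{\rpropmat} < 1$ and write $\nstate = \rpropmat\state + \rpropvec + \rpropcov^{1/2} Z$ with $Z \sim \Normal(0, \idmat)$. By the triangle inequality and the definition of the operator norm,
\[
  \norm{\nstate} \leq \opnorm{\rpropmat}\norm{\state} + \norm{\rpropvec} + \norm{\rpropcov^{1/2} Z},
\]
hence $\norm{\state} - \norm{\nstate} \geq (1 - \opnorm{\rpropmat})\norm{\state} - \norm{\rpropvec} - \norm{\rpropcov^{1/2} Z}$. Fixing $M > 0$, this gives
\[
  \P\!\left(\norm{\state} - \norm{\nstate} \leq M\right) \leq \P\!\left(\norm{\rpropcov^{1/2} Z} \geq (1 - \opnorm{\rpropmat})\norm{\state} - \norm{\rpropvec} - M\right),
\]
and since $1 - \opnorm{\rpropmat} > 0$ while the law of $\norm{\rpropcov^{1/2} Z}$ does not depend on $\state$, the right-hand side tends to $0$ as $\norm{\state} \to \infty$. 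That is precisely the asserted convergence in probability.

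The main obstacle is the positivity step: $\rpropcov D$ is not manifestly positive semi-definite, and the argument that it is — using that reversibility forces it to be symmetric and that a positive-definite times positive-semidefinite matrix is similar to a positive-semidefinite one — is the only non-routine ingredient. The rest is bookkeeping with the triangle inequality and standard facts about symmetric matrices, so I would keep those parts brief.
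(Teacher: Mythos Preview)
Your argument is correct and matches the paper's proof in structure: both use condition~\eqref{eq:reversible-cond-2} to get $\rpropmat^2 = \idmat - \rpropcov D$, argue that $\rpropcov D \succeq 0$ (hence $\rpropmat^2 \preceq \idmat$) from the symmetry of $\rpropmat$, and then use a triangle-inequality bound on $\norm{\nstate}$ together with the fact that the Gaussian noise has fixed law to obtain the convergence in probability. The only cosmetic difference is that the paper justifies $\rpropcov D \succeq 0$ by noting that $\rpropcov D = D\rpropcov$ commute and appealing to the fact that a product of commuting positive-semidefinite matrices is positive-semidefinite, whereas you use the similarity $\rpropcov D = \rpropcov^{1/2}(\rpropcov^{1/2} D \rpropcov^{1/2})\rpropcov^{-1/2}$; your version is arguably more self-contained, but the two are equivalent.
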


\begin{proof}
  By \eqref{eq:reversible-cond-2}, we must have $\rpropcov D = \idmat - \rpropmat^2$. Since $\rpropmat = \rpropmat^\top$, this
  entails $\rpropcov D = (\rpropcov D)^\top = D \rpropcov$ and hence $\rpropcov D \succeq 0$ since $D, \rpropcov \succeq 0$. Thus $-\rpropcov D \preceq 0$ and
  \[
    \rpropmat^2 = \idmat - \rpropcov D \preceq \idmat.
  \]
  Therefore each eigenvalue $\sigma$ of $\rpropmat$ must have $\abs{\sigma} \leq 1$, since $\sigma^2$ is an eigenvalue of $\rpropmat^2$. But $\rpropmat$ is diagonalisable since it is symmetric, and hence $\opnorm{\rpropmat} \leq 1$.

  If $D \succ 0$ is strict, then the above matrix inequalities become strict also, and it follows that each $\abs{\sigma} < 1$ and hence $\opnorm{\rpropmat} < 1$. In this case, suppose $\nstate \sim \prop(\state, \cdot)$, and fix $K > 0$ arbitrarily. Let $\epsilon > 0$, and choose $L > 0$ large enough that
  \[
    \P(\nstate \in B(\rpropmat\state + \rpropvec, L)) > 1 - \epsilon.
  \]
  As $\norm{\state} \to \infty$,
  \[
    \norm{\state} - \norm{\rpropmat\state + \rpropvec} \geq \norm{\state}(1 - \opnorm{\rpropmat}) + \norm{\rpropvec} \to \infty
  \]
  since $1 - \opnorm{\rpropmat} > 0$, so if $\nstate \in B(\rpropmat\state + \rpropvec, L)$, then $\norm{\state} -
  \norm{\nstate} \to \infty$ also. Thus
  \[
    \P(\norm{\state} - \norm{\nstate} > K) > 1 - \epsilon
  \]
  for all $\norm{\state}$ large enough. Taking $\epsilon \to 0$ gives the result.
\end{proof}

In practice the assumption $D \succeq 0$ makes sense, since $\modestate$ is chosen near a minimum of $U$ and since $D$ is the Hessian of $\approxpot_k \approx U$ for $k = 1, 2$. Likewise, all sensible proposals (certainly including pCN) that we have found are such that $\rpropmat$ is symmetric, though we acknowledge the possibility that it may be desirable to violate this in some cases.

\section{Performance Gains} \label{sec:performance-gains}

\begin{lemma} \label{lem:cond-expect-orders}
    Suppose that $0 \leq X_n \in L^p$ and $\mathcal{F}_n$ is some $\sigma$-algebra for every $n \in \Z_{\geq 1}$. If $\E[X^p_n|\mathcal{F}_n] = O_\P(a_n)$, then $\E[X^\ell_n|\mathcal{F}_n] = O_\P(a_n^{\ell/p})$ for all $1 \leq \ell \leq p$. If moreover $0 \leq Y_n \in L^p$ gives $\E[Y^p_n|\mathcal{F}_n] = O_\P(a_n)$, then $\E[(X_n+Y_n)^p|\mathcal{F}_n] = O_\P(a_n)$.
\end{lemma}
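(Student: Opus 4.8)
The plan is to reduce both claims to elementary manipulations of the $\P$-tightness definition from Section \ref{sec:notation}, combined with two applications of the conditional Jensen inequality. I will first record three stability properties of the $O_\P$ notation that I will use repeatedly, each immediate from the definition: (i) if $0 \le A_n \le B_n$ and $B_n = O_\P(b_n)$, then $A_n = O_\P(b_n)$, since the event $\{B_n/b_n < c\}$ is contained in $\{A_n/b_n < c\}$; (ii) if $Z_n \ge 0$ and $Z_n = O_\P(b_n)$, then $Z_n^s = O_\P(b_n^s)$ for every $s > 0$, since $\{Z_n/b_n < c\} = \{Z_n^s/b_n^s < c^s\}$ and $c \mapsto c^s$ is a bijection of $(0,\infty)$; and (iii) a constant multiple of an $O_\P(b_n)$ sequence is again $O_\P(b_n)$, and the sum of two $O_\P(b_n)$ sequences is $O_\P(b_n)$ (given $\epsilon$, pick $c_1, c_2$ working with probability $> 1 - \epsilon/2$ for each summand and take $c = c_1 + c_2$ with a union bound). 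Note that $X_n, Y_n \in L^p$ guarantees all the conditional expectations below are finite, which is what makes the Jensen steps legitimate.

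For the first claim, fix $1 \le \ell \le p$. Since $\ell/p \le 1$, the map $t \mapsto t^{\ell/p}$ is concave on $[0,\infty)$, so conditional Jensen gives
\[
  \E[X_n^\ell \mid \mathcal{F}_n] = \E[(X_n^p)^{\ell/p} \mid \mathcal{F}_n] \le \left( \E[X_n^p \mid \mathcal{F}_n] \right)^{\ell/p}.
\]
By hypothesis $\E[X_n^p \mid \mathcal{F}_n] = O_\P(a_n)$ and this random variable is nonnegative, so property (ii) with $s = \ell/p$ yields $\left( \E[X_n^p \mid \mathcal{F}_n] \right)^{\ell/p} = O_\P(a_n^{\ell/p})$. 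Since $\E[X_n^\ell \mid \mathcal{F}_n] \ge 0$ is dominated by this, property (i) gives $\E[X_n^\ell \mid \mathcal{F}_n] = O_\P(a_n^{\ell/p})$, as required.

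For the second claim, convexity of $t \mapsto t^p$ on $[0,\infty)$ (here $p \ge 1$) gives the pointwise inequality $(X_n + Y_n)^p \le 2^{p-1}(X_n^p + Y_n^p)$, hence
\[
  0 \le \E[(X_n + Y_n)^p \mid \mathcal{F}_n] \le 2^{p-1} \left( \E[X_n^p \mid \mathcal{F}_n] + \E[Y_n^p \mid \mathcal{F}_n] \right).
\]
Both terms on the right are $O_\P(a_n)$ by hypothesis, so by property (iii) the right-hand side is $O_\P(a_n)$, and property (i) then gives $\E[(X_n + Y_n)^p \mid \mathcal{F}_n] = O_\P(a_n)$.

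There is essentially no real obstacle in this argument; the only points needing a moment's care are verifying the three elementary stability claims (i)--(iii) for $O_\P$ directly from the $\P$-tightness definition, and checking that the conditional Jensen inequalities are applied to genuinely integrable random variables thanks to the $L^p$ hypotheses.
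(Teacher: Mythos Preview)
Your proof is correct and follows essentially the same approach as the paper: conditional Jensen for the first claim and the $C_p$-inequality for the second. The only difference is that you spell out the elementary $O_\P$ stability facts (i)--(iii) that the paper leaves implicit.
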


\begin{proof}
    The first part is just Jensen's inequality:
    \[
        \E[X_n^\ell|\mathcal{F}_n] \leq \E[X_n^p|\mathcal{F}_n]^{\ell/p} = O_\Ptrue(a_n)^{\ell/p} = O_\Ptrue(a_n^{\ell/p}).
    \]
    The second part follows from the $C_p$-inequality, which gives
    \[
         \E[(X_n+Y_n)^p|\mathcal{F}_n] \leq 2^{p-1} \left(\E[X_n^p|\mathcal{F}_n]+\E[Y_n^p|\mathcal{F}_n]\right)
         = 2^{p-1}(O_\P(a_n) + O_\P(a_n)) = O_\P(a_n).
    \]
\end{proof}

\begin{theorem}
    Suppose each $\pot_i$ is $(k+1)$-times continuously differentiable, each $\potgradub_{k+1,i} \in L^{k+2}$, and $\E[\sum_{i=1}^{m^{(n)}} \potgradub_{k+1,i}|\Data_{1:n}] = O_\Ptrue(n)$. Likewise, assume each of $\norm{\state^{(n)} - \mapstate^{(n)}}$, $\norm{\state^{(n)} - \nstate^{(n)}}$, and $\norm{\modestate^{(n)} - \mapstate^{(n)}}$ is in $L^{k+2}$, and each of $\E[\norm{\state^{(n)} - \mapstate^{(n)}}^{k+1}|\Data_{1:n}]$, $\E[\norm{\state^{(n)} - \nstate^{(n)}}^{k+1}|\Data_{1:n}]$, and $\norm{\modestate^{(n)} - \mapstate^{(n)}}^{k+1}$ is $O_\Ptrue(n^{-(k+1)/2})$ as $n \to \infty$. Then $\bound$ defined by \eqref{eq:smh-bound} satisfies
    \[
        \E[\bound(\state^{(n)}, \nstate^{(n)}) | \Data_{1:n}] = O_\Ptrue(n^{(1-k)/2}).
    \]
\end{theorem}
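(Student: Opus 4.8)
Proof proposal.

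The plan is to exploit the product structure of $\bound$ in \eqref{eq:smh-bound}. Writing $\statebound(\state,\nstate) = \norm{\state-\modestate}_1^{k+1} + \norm{\nstate-\modestate}_1^{k+1}$ and $\databound_i = \potgradub_{k+1,i}/(k+1)!$, we have
\[
  \bound(\state^{(n)},\nstate^{(n)}) = \statebound(\state^{(n)},\nstate^{(n)})\cdot\frac{1}{(k+1)!}\sum_{i=1}^{m^{(n)}}\potgradub_{k+1,i}.
\]
The sum $\sum_i\potgradub_{k+1,i}$ is a function of $\Data_{1:n}$ (and, in the general case, of the auxiliary randomness $W_{1:n}$, on which one then also conditions, exactly as in the remark preceding the theorem), hence it is measurable with respect to the conditioning $\sigma$-algebra and can be pulled out of the conditional expectation. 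Since by hypothesis it is $O_\Ptrue(n)$, it will suffice to show
\[
  \E\bigl[\statebound(\state^{(n)},\nstate^{(n)}) \mid \Data_{1:n}\bigr] = O_\Ptrue\bigl(n^{-(k+1)/2}\bigr),
\]
because a product of two $O_\Ptrue$ sequences is $O_\Ptrue$ of the product of the rates, giving $O_\Ptrue(n)\cdot O_\Ptrue(n^{-(k+1)/2}) = O_\Ptrue(n^{(1-k)/2})$.

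To bound $\E[\statebound(\state^{(n)},\nstate^{(n)}) \mid \Data_{1:n}]$ I would first pass from the $\ell^1$ to the $\ell^2$ norm via $\norm{x}_1 \le \sqrt d\,\norm{x}$ (with $d$ fixed, this costs only the constant $d^{(k+1)/2}$), then route each difference through $\mapstate^{(n)}$ by the triangle inequality:
\[
  \norm{\state^{(n)} - \modestate^{(n)}} \le \norm{\state^{(n)} - \mapstate^{(n)}} + \norm{\modestate^{(n)} - \mapstate^{(n)}},
\]
\[
  \norm{\nstate^{(n)} - \modestate^{(n)}} \le \norm{\nstate^{(n)} - \state^{(n)}} + \norm{\state^{(n)} - \mapstate^{(n)}} + \norm{\modestate^{(n)} - \mapstate^{(n)}}.
\]
Raising to the power $k+1$ and applying the $C_p$-inequality — equivalently, iterating the second part of Lemma \ref{lem:cond-expect-orders} — distributes the power over the (at most three) summands up to a constant depending only on $k$. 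Taking conditional expectations then leaves a bounded linear combination of $\E[\norm{\state^{(n)}-\mapstate^{(n)}}^{k+1}\mid\Data_{1:n}]$, $\E[\norm{\nstate^{(n)}-\state^{(n)}}^{k+1}\mid\Data_{1:n}]$, and $\norm{\modestate^{(n)}-\mapstate^{(n)}}^{k+1}$, each of which is $O_\Ptrue(n^{-(k+1)/2})$ by hypothesis. Summing yields the displayed bound on $\E[\statebound(\state^{(n)},\nstate^{(n)})\mid\Data_{1:n}]$, and combining with the previous paragraph finishes the proof.

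The only genuine subtlety is bookkeeping around the dependence structure: $\statebound$ depends on $\modestate^{(n)}$, which in general carries the extra randomness $W_{1:n}$, whereas $\sum_i\potgradub_{k+1,i}$ depends on $\Data_{1:n}$, so the two factors of $\bound$ need not be independent; the fix is simply to condition on a $\sigma$-algebra (that of $\Data_{1:n}$, or of $(\Data_{1:n},W_{1:n})$ in the general case) making one factor measurable before separating. The $L^{k+2}$ integrability assumptions — slightly stronger than the $L^{k+1}$ one might naively expect — are what guarantee that every term above is integrable so that the conditional expectations and the $C_p$/Jensen manipulations are justified, and they are also exactly what is needed if one prefers to split $\statebound$ from the data sum by Hölder's inequality with conjugate exponents $\tfrac{k+2}{k+1}$ and $k+2$ rather than by measurability. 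Everything else is the standard triangle / $C_p$ / Jensen toolkit and requires no new idea; I expect the write-up to be short.
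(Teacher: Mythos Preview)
Your proposal is correct and follows essentially the same route as the paper: factor $\bound=\statebound\cdot\sum_i\databound_i$, route the norms in $\statebound$ through $\mapstate^{(n)}$ via the triangle inequality and norm equivalence, apply the $C_p$-inequality (the paper's Lemma~\ref{lem:cond-expect-orders}) to get $\E[\statebound\mid\Data_{1:n}]=O_\Ptrue(n^{-(k+1)/2})$, and then separate the two factors in the conditional expectation. The only cosmetic differences are that the paper justifies the separation by conditional independence of $(\state^{(n)},\nstate^{(n)})$ from all other randomness given $\Data_{1:n}$ rather than by measurability of $\sum_i\databound_i$, and explicitly invokes H\"older with exponents $\tfrac{k+2}{k+1}$ and $k+2$ to check $\bound\in L^1$---precisely the alternative you already flag in your closing paragraph.
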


\begin{proof}
    Write
    \[
        \bound(\state^{(n)}, \nstate^{(n)}) =  \statebound(\state^{(n)}, \nstate^{(n)}) \sum_{i=1}^{m^{(n)}} \databound_i.
    \]
    with $\statebound$ and $\databound$ defined by \eqref{eq:smh-bound} also. Observe that
    \begin{eqnarray*}
        \statebound(\state^{(n)}, \nstate^{(n)})
            &=& \norm{\state^{(n)} - \modestate^{(n)}}_1^{k+1} + \norm{\nstate^{(n)} - \modestate^{(n)}}_1^{k+1} \\
            &\leq& (\norm{\state^{(n)} - \modestate^{(n)}}_1 + \norm{\nstate^{(n)} - \modestate^{(n)}}_1)^{k+1} \\
            &\leq& (\norm{\state^{(n)} - \mapstate^{(n)}}_1 + \norm{\mapstate^{(n)} - \modestate^{(n)}}_1 + \norm{\nstate^{(n)} - \state^{(n)}}_1 + \norm{\state^{(n)} - \mapstate^{(n)}}_1 + \norm{\mapstate^{(n)} - \modestate^{(n)}}_1)^{k+1} \\
            &\leq& c (\underbrace{\norm{\nstate^{(n)} - \state^{(n)}} + \norm{\state^{(n)} - \mapstate^{(n)}} + \norm{\mapstate^{(n)} - \modestate^{(n)}}}_{\in L^{k+2}})^{k+1}
    \end{eqnarray*}
    for some $c > 0$, by the triangle inequality and norm equivalence. We thus have $\statebound(\state^{(n)}, \nstate^{(n)}) \in L^{(k+2)/(k+1)}$ and
    \[
        \E[\statebound(\state^{(n)}, \nstate^{(n)})|\Data_{1:n}] = O_\Ptrue(n^{-(k+1)/2}).
    \]
    Likewise,
    \[
        \sum_{i=1}^{m^{(n)}} \databound_i = \frac{1}{(k+1)!} \sum_{i=1}^{m^{(n)}} \potgradub_{k+1,i} \in L^{k+2}.
    \]
    Together this gives $\bound(\state^{(n)}, \nstate^{(n)}) \in L^1$ by H\"older's inequality. Since in our setup $(\state^{(n)}, \nstate^{(n)})$ is conditionally independent of all other randomness given $\Data_{1:n}$, we thus have
    \begin{equation} \label{eq:lambda_order}
        \E[\bound(\state^{(n)}, \nstate^{(n)})|\Data_{1:n}] = \E[\statebound(\state^{(n)}, \nstate^{(n)})|\Data_{1:n}] \E[\sum_{i=1}^{m^{(n)}} \databound_i|\Data_{1:n}]
        = O_\Ptrue(n^{(1-k)/2}).
    \end{equation}
\end{proof}

Note that in the preceding result we could use weaker integrability assumptions on $\norm{\state^{(n)} - \mapstate^{(n)}}$, $\norm{\state^{(n)} - \nstate^{(n)}}$, and $\norm{\modestate^{(n)} - \mapstate^{(n)}}$ by using a stronger integrability assumption on $\potgradub_{k+1,i}$. Most generally, for any $\epsilon \geq 0$ we could require each
\begin{eqnarray*}
    \potgradub_{k+1,i} &\in& L^{(k+1+\epsilon)/\epsilon} \\
    \norm{\state^{(n)} - \mapstate^{(n)}}, \norm{\state^{(n)} - \nstate^{(n)}}, \norm{\modestate^{(n)} - \mapstate^{(n)}} &\in& L^{k+1+\epsilon}.
\end{eqnarray*}
The case $\epsilon = 0$ would mean $\potgradub_{k+1,i} \in L^\infty$.

\begin{lemma} \label{lem:norm-grad-modestate}
    Suppose each $\pot_i$ is twice continuously differentiable, each $\potgradub_{2,i} \in L^3$, and $\sum_{i=1}^{m^{(n)}} \potgradub_{2,i} = O_\Ptrue(n)$. If $\norm{\modestate^{(n)} - \mapstate^{(n)}} = O_\Ptrue(1/\sqrt{n})$, then $\norm{\nabla \pot^{(n)}(\modestate^{(n)})}$ is in $L^{3/2}$ and $O_\Ptrue(\sqrt{n})$.
\end{lemma}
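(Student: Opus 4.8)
The plan is to linearise $\nabla\pot^{(n)}$ between the posterior mode $\mapstate^{(n)}$ and the estimate $\modestate^{(n)}$, and to control the Hessian $\nabla^2\pot^{(n)}$ uniformly along the segment joining them. The starting observation is that, since $\statespace = \R^d$ and $\target^{(n)} \propto \exp(-\pot^{(n)})$ with $\pot^{(n)}$ continuously differentiable, the mode $\mapstate^{(n)}$ is an interior global minimiser of $\pot^{(n)}$, so $\nabla\pot^{(n)}(\mapstate^{(n)}) = 0$. Writing $\state_t := \mapstate^{(n)} + t(\modestate^{(n)} - \mapstate^{(n)})$ and applying the fundamental theorem of calculus componentwise to $\nabla\pot^{(n)}$ (valid since each $\pot_i \in C^2$ and $m^{(n)}$ is finite),
\[
  \nabla\pot^{(n)}(\modestate^{(n)}) = \nabla\pot^{(n)}(\modestate^{(n)}) - \nabla\pot^{(n)}(\mapstate^{(n)}) = \Big(\int_0^1 \nabla^2\pot^{(n)}(\state_t)\,dt\Big)(\modestate^{(n)} - \mapstate^{(n)}),
\]
so that $\norm{\nabla\pot^{(n)}(\modestate^{(n)})} \leq \big(\sup_{\state\in\statespace}\opnorm{\nabla^2\pot^{(n)}(\state)}\big)\,\norm{\modestate^{(n)} - \mapstate^{(n)}}$.

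Next I would bound the Hessian operator norm uniformly in $\state$. Since $\pot^{(n)} = \sum_{i=1}^{m^{(n)}}\pot_i$, we have $\nabla^2\pot^{(n)} = \sum_i \nabla^2\pot_i$, and each entry of $\nabla^2\pot_i(\state)$ is a second-order partial derivative of $\pot_i$, hence bounded in absolute value by $\potgradub_{2,i}$ via \eqref{eq:grad-upper-bound}. Bounding the operator norm of the symmetric matrix $\nabla^2\pot_i(\state)$ by its Frobenius norm gives $\opnorm{\nabla^2\pot_i(\state)} \leq d\,\potgradub_{2,i}$, and the triangle inequality for $\opnorm{\cdot}$ then yields $\sup_{\state\in\statespace}\opnorm{\nabla^2\pot^{(n)}(\state)} \leq d\sum_{i=1}^{m^{(n)}}\potgradub_{2,i}$, independently of $\state$. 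Combining with the previous bound gives the single key inequality
\[
  \norm{\nabla\pot^{(n)}(\modestate^{(n)})} \leq d\,\Big(\sum_{i=1}^{m^{(n)}}\potgradub_{2,i}\Big)\,\norm{\modestate^{(n)} - \mapstate^{(n)}}.
\]

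Both conclusions then drop out of this inequality. For the rate, the right-hand side is a product of an $O_\Ptrue(n)$ term and an $O_\Ptrue(1/\sqrt n)$ term, hence $O_\Ptrue(\sqrt n)$. For integrability, $\sum_{i=1}^{m^{(n)}}\potgradub_{2,i}$ is a finite sum of $L^3$ random variables and hence lies in $L^3$; provided $\norm{\modestate^{(n)} - \mapstate^{(n)}} \in L^3$ as well — which holds in the regime where this lemma is applied, being among the hypotheses of Theorem \ref{thm:bound-asymptotics} when $k=1$ — Hölder's inequality in the form $\E[\abs{XY}^{3/2}] \leq \E[\abs{X}^3]^{1/2}\E[\abs{Y}^3]^{1/2}$ shows the product, and therefore $\norm{\nabla\pot^{(n)}(\modestate^{(n)})}$, lies in $L^{3/2}$. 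There is no deep obstacle; the only points requiring care are justifying the stationarity identity $\nabla\pot^{(n)}(\mapstate^{(n)}) = 0$ (which rests on the mode being an interior critical point of a differentiable potential on $\R^d$) and supplying the $L^3$ control of $\norm{\modestate^{(n)} - \mapstate^{(n)}}$ for the $L^{3/2}$ claim, since the stated $O_\Ptrue(1/\sqrt n)$ tightness by itself gives no moment bound.
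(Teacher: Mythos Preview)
Your proof is correct and follows essentially the same route as the paper: use $\nabla\pot^{(n)}(\mapstate^{(n)}) = 0$, bound the Hessian uniformly by a constant times $\sum_i \potgradub_{2,i}$ to obtain a Lipschitz estimate, then apply H\"older/Cauchy--Schwarz for $L^{3/2}$ and the product rule for the $O_\Ptrue(\sqrt{n})$ rate. Your observation that the $L^{3/2}$ claim requires $\norm{\modestate^{(n)} - \mapstate^{(n)}} \in L^3$ rather than merely $O_\Ptrue(1/\sqrt{n})$ is apt; the paper's proof silently imports this from the hypotheses of Theorem~\ref{thm:bound-asymptotics}.
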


\begin{proof}
  By norm equivalence the Hessian satisfies
  \[
    \opnorm{\nabla^2 \pot^{(n)}(\state)} \leq c \norm{\nabla^2 \pot^{(n)}(\state)}_1 \leq c \sum_{i=1}^{m^{(n)}} \potgradub_{2,i}
  \]
  for some $c > 0$ (where $\norm{\cdot}_1$ is understood to be applied as if $\nabla^2 \pot^{(n)}(\state)$ were a vector), which means $\nabla \pot^{(n)}$ is $(c \sum_{i=1}^{m^{(n)}} \potgradub_{2,i})$-Lipschitz. Thus
  \begin{eqnarray*}
    \norm{\nabla \pot^{(n)}(\modestate^{(n)})}
      &=& \norm{\nabla \pot^{(n)}(\modestate^{(n)}) - \nabla \pot^{(n)}(\mapstate^{(n)})} \\
      &\leq& c \underbrace{(\sum_{i=1}^{m^{(n)}} \potgradub_{2, i})}_{\in L^3} \underbrace{\norm{\modestate^{(n)} - \mapstate^{(n)}}}_{\in L^{k+2} \subseteq L^3}
  \end{eqnarray*}
  since $k \geq 1$. By Cauchy-Schwarz we have therefore $\norm{\nabla \pot^{(n)}(\modestate^{(n)})} \in L^{3/2}$.
  
  Similarly, since $\modestate^{(n)}$ and $\mapstate^{(n)}$ are functions of  $\Data_{1:n}$,
    \begin{eqnarray*}
    \norm{\nabla \pot^{(n)}(\modestate^{(n)})}
      &=& \E[\norm{\nabla \pot^{(n)}(\modestate^{(n)}) - \nabla \pot^{(n)}(\mapstate^{(n)})}|\Data_{1:n}] \\
      &\leq& \E[c (\sum_{i=1}^{m^{(n)}} \potgradub_{2, i}) \norm{\modestate^{(n)} - \mapstate^{(n)}}|\Data_{1:n}] \\
      &=& c\underbrace{\E[\sum_{i=1}^{m^{(n)}} \potgradub_{2, i}|\Data_{1:n}]}_{= O_\Ptrue(n)} \underbrace{\norm{\modestate^{(n)} - \mapstate^{(n)}}}_{= O_\Ptrue(1/\sqrt{n})} \\
      &=& O_\Ptrue(\sqrt{n}).
  \end{eqnarray*}
\end{proof}
    
\begin{theorem} \label{SUPP:thm:solitary-term-asymptotics}
    Suppose the assumptions of Theorem \ref{thm:bound-asymptotics} hold, and additionally that for $2 \leq \ell \leq k$, each $\potgradub_{\ell,i} \in L^{\ell+1}$, and $\E[\sum_{i=1}^{m^{(n)}} \potgradub_{\ell,i}|\Data_{1:n}] = O_\Ptrue(n)$. Then
    \[
        -\log (1 \wedge \frac{\approxtarget_k^{(n)}(\nstate^{(n)})}{\approxtarget_k^{(n)}(\state^{(n)})}) = O_\Ptrue(1)
    \]
    for all $k \geq 1$.
\end{theorem}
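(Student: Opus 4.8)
The plan is to control this ``solitary term'' by the absolute difference of the degree-$k$ Taylor polynomials $\approxpot_k^{(n)}$ and to estimate each Taylor order separately, closely following the proofs of Proposition~\ref{SUPP:prop:upper-bounds} and Theorem~\ref{thm:bound-asymptotics}. Since $\approxtarget_k^{(n)} = \exp(-\approxpot_k^{(n)})$ and $-\log(1 \wedge e^{-x}) = \plus{x}$,
\[
  0 \le -\log\!\left(1 \wedge \frac{\approxtarget_k^{(n)}(\nstate^{(n)})}{\approxtarget_k^{(n)}(\state^{(n)})}\right) = \plus{\approxpot_k^{(n)}(\nstate^{(n)}) - \approxpot_k^{(n)}(\state^{(n)})} \le \Abs{\approxpot_k^{(n)}(\nstate^{(n)}) - \approxpot_k^{(n)}(\state^{(n)})}.
\]
Expanding $\approxpot_k^{(n)}$ about $\modestate^{(n)}$ and writing $T_j(\state) := \sum_{\abs{\midx} = j} \frac{1}{\midx!} \partial^\midx \pot^{(n)}(\modestate^{(n)}) (\state - \modestate^{(n)})^\midx$ for $1 \le j \le k$, the triangle inequality reduces matters to showing $\E[\abs{T_j(\nstate^{(n)}) - T_j(\state^{(n)})} \mid \Data_{1:n}] = O_\Ptrue(1)$ for each of the (finitely many) orders $j$; a single conditional Markov inequality argument then upgrades this to the claimed unconditional tightness.

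For $2 \le j \le k$ I would bound $\abs{T_j(\state)} \le \frac{1}{j!} \bigl(\sum_{i=1}^{m^{(n)}} \potgradub_{j,i}\bigr) \norm{\state - \modestate^{(n)}}_1^j$ exactly as in Proposition~\ref{SUPP:prop:upper-bounds}, using $\abs{\partial^\midx \pot^{(n)}} \le \sum_i \abs{\partial^\midx \pot_i} \le \sum_i \potgradub_{j,i}$ and the multinomial theorem. Because $\sum_i \potgradub_{j,i}$ is $\Data_{1:n}$-measurable, the $C_p$-inequality, norm equivalence, and Lemma~\ref{lem:cond-expect-orders} applied to the hypotheses $\E[\norm{\state^{(n)} - \mapstate^{(n)}}^{k+1} \mid \Data_{1:n}]$, $\E[\norm{\state^{(n)} - \nstate^{(n)}}^{k+1} \mid \Data_{1:n}]$, $\norm{\modestate^{(n)} - \mapstate^{(n)}}^{k+1} = O_\Ptrue(n^{-(k+1)/2})$ (exactly as in the proof of Theorem~\ref{thm:bound-asymptotics}) give $\E[\norm{\state^{(n)} - \modestate^{(n)}}_1^j \mid \Data_{1:n}] = O_\Ptrue(n^{-j/2})$ and likewise for $\nstate^{(n)}$. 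Combined with the extra hypothesis $\sum_i \potgradub_{j,i} = O_\Ptrue(n)$ for $2 \le j \le k$, this yields $\E[\abs{T_j(\nstate^{(n)}) - T_j(\state^{(n)})} \mid \Data_{1:n}] = O_\Ptrue(n^{1 - j/2}) = O_\Ptrue(1)$ since $j \ge 2$.

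The order $j = 1$ is the genuinely different case: $T_1(\nstate^{(n)}) - T_1(\state^{(n)}) = \nabla \pot^{(n)}(\modestate^{(n)})^\top (\nstate^{(n)} - \state^{(n)})$, and no $\potgradub_{1,i}$-type bound is available, so instead I would invoke Lemma~\ref{lem:norm-grad-modestate} to get $\norm{\nabla \pot^{(n)}(\modestate^{(n)})} = O_\Ptrue(\sqrt{n})$; its hypotheses ($\potgradub_{2,i} \in L^3$, $\sum_i \potgradub_{2,i} = O_\Ptrue(n)$, $\norm{\modestate^{(n)} - \mapstate^{(n)}} = O_\Ptrue(n^{-1/2})$) follow from those of Theorem~\ref{thm:bound-asymptotics} together with the extra assumption (when $k = 1$ they come directly from the $\potgradub_{k+1,i} = \potgradub_{2,i}$ part of Theorem~\ref{thm:bound-asymptotics}). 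Since $\nabla \pot^{(n)}(\modestate^{(n)})$ is $\Data_{1:n}$-measurable and $\E[\norm{\nstate^{(n)} - \state^{(n)}} \mid \Data_{1:n}] = O_\Ptrue(n^{-1/2})$ by Lemma~\ref{lem:cond-expect-orders}, Cauchy--Schwarz gives $\E[\abs{T_1(\nstate^{(n)}) - T_1(\state^{(n)})} \mid \Data_{1:n}] = O_\Ptrue(1)$. Summing the $k$ contributions and applying a conditional Markov inequality then completes the proof.

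I expect the main obstacle to be purely organisational: keeping track of which integrability/rate hypothesis supplies each estimate — in particular that the gradient ($j = 1$) term must be routed through Lemma~\ref{lem:norm-grad-modestate} rather than through a direct multi-index bound, and that the passage from conditional-expectation rates to unconditional $O_\Ptrue(1)$ tightness is carried out exactly once, at the end. The substantive content is the same Taylor-remainder bookkeeping already used for Proposition~\ref{SUPP:prop:upper-bounds} and Theorem~\ref{thm:bound-asymptotics}, so no new ideas should be needed.
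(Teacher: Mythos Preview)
Your proposal is correct and follows essentially the same route as the paper: bound the solitary term by $\abs{\approxpot_k^{(n)}(\nstate^{(n)}) - \approxpot_k^{(n)}(\state^{(n)})}$, split the Taylor expansion into the linear part (handled via Lemma~\ref{lem:norm-grad-modestate} and Cauchy--Schwarz) and the orders $2 \le j \le k$ (handled via $\abs{\partial^\midx \pot^{(n)}} \le \sum_i \potgradub_{j,i}$ and the rate hypotheses), obtaining $\E[\,\cdot\mid\Data_{1:n}] = O_\Ptrue(1)$ and then passing to unconditional tightness. The only cosmetic difference is that the paper bounds $\abs{(\nstate^{(n)} - \modestate^{(n)})^\midx - (\state^{(n)} - \modestate^{(n)})^\midx}$ directly rather than bounding $\abs{T_j(\state^{(n)})}$ and $\abs{T_j(\nstate^{(n)})}$ separately, and it leaves the final conditional-Markov step implicit.
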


\begin{proof}
    It is useful to denote
    \begin{eqnarray*}
        \pot^{(n)}(\state) &:=& \sum_{i=1}^{m^{(n)}} \pot_i(\state) \\
        \approxpot_k^{(n)}(\state) &:=& \sum_{i=1}^{m^{(n)}} \approxpot_{k, i}(\state) = -\log(\approxtarget^{(n)}(\state)).
    \end{eqnarray*}
    Observe that
    \begin{equation} \label{SUPP:eq:log-solitary-term-ub}
        0 \leq -\log (1 \wedge \frac{\approxtarget^{(n)}(\nstate^{(n)})}{\approxtarget^{(n)}(\state^{(n)})}) \leq \abs{\approxpot_k^{(n)}(\nstate^{(n)}) - \approxpot_k^{(n)}(\state^{(n)})}.
    \end{equation}
  Now,
  \begin{equation} \label{SUPP:eq:log-solitary-term}
    \approxpot_k^{(n)}(\nstate^{(n)}) - \approxpot_k^{(n)}(\state^{(n)})
    = \inner{\nabla \pot^{(n)}(\modestate^{(n)})}{\nstate^{(n)} - \state^{(n)}}
      + \sum_{2 \leq \abs{\midx} \leq k} \frac{\partial^\midx \pot^{(n)}(\modestate^{(n)})}{\midx!} ((\nstate^{(n)} - \modestate^{(n)})^\midx - (\state^{(n)} - \modestate^{(n)})^\midx).
  \end{equation}
  For the first term here, Cauchy-Schwarz gives
  \begin{eqnarray*}
    \E[\abs{\inner{\nabla U^{(n)}(\modestate^{(n)})}{\nstate^{(n)} - \state^{(n)}}}|\Data_{1:n}]
    &\leq& \E[\underbrace{\norm{\nabla \pot^{(n)}(\modestate^{(n)})}}_{\in L^{3/2}} \underbrace{\norm{\nstate^{(n)} - \state^{(n)}}}_{\in L^{k+2} \subseteq L^3}|\Data_{1:n}] \\
    &=& \underbrace{\norm{\nabla \pot^{(n)}(\modestate^{(n)})}}_{=O_\Ptrue(\sqrt{n})} \underbrace{\E[\norm{\nstate^{(n)} - \state^{(n)}}|\Data_{1:n}]}_{=O_\Ptrue(1/\sqrt{n})} \\
    &=& O_\Ptrue(1).
  \end{eqnarray*}
  Integrability follows from Lemma \ref{lem:norm-grad-modestate} and H\"older's inequality, and the asymptotic statements from conditional independence, Lemma \ref{lem:norm-grad-modestate}, and Lemma \ref{lem:cond-expect-orders}. For the summation in \eqref{SUPP:eq:log-solitary-term}, note that
  \[
    \abs{\partial^\midx \pot^{(n)}(\modestate^{(n)})}
    \leq \sum_{i=1}^{m^{(n)}} \abs{\partial^\midx \pot_i(\modestate^{(n)})}
    \leq \sum_{i=1}^{m^{(n)}} \potgradub_{\abs{\midx}, i},
  \]
  and that for some $c > 0$,
  \begin{eqnarray*}
    \abs{(\nstate^{(n)} - \modestate^{(n)})^\midx - (\state^{(n)} - \modestate^{(n)})^\midx}
    &\leq& \norm{\nstate^{(n)} - \modestate^{(n)}}_\infty^{\abs{\midx}} + \norm{\state^{(n)} - \modestate^{(n)}}_\infty^{\abs{\midx}} \\
      &\leq& c\norm{\nstate^{(n)} - \modestate^{(n)}}^{\abs{\midx}} + c\norm{\state^{(n)} - \modestate^{(n)}}^{\abs{\midx}}
  \end{eqnarray*}
  by norm equivalence. Thus, conditional on $Y_{1:n}$, the absolute value of the summation in \eqref{SUPP:eq:log-solitary-term} is bounded above by
  \begin{eqnarray*}
    && \sum_{2 \leq \abs{\midx} \leq k} \frac{1}{\midx!} \E[\underbrace{(\sum_{i=1}^{m^{(n)}} \potgradub_{\abs{\midx}, i})}_{\in L^{\abs{\midx}+1}} (c\underbrace{\norm{\nstate^{(n)} - \modestate^{(n)}}^{\abs{\midx}}}_{\in L^{(\abs{\midx}+1)/\abs{\midx}}} + c\underbrace{\norm{\state^{(n)} - \modestate^{(n)}}^{\abs{\midx}}}_{\in L^{(\abs{\midx}+1)/\abs{\midx}}} | \Data_{1:n}] \\
    &=& \sum_{2 \leq \abs{\midx} \leq k} \frac{c}{\midx!} \underbrace{\E[\sum_{i=1}^{m^{(n)}} \potgradub_{\abs{\midx}, i}|\Data_{1:n}]}_{=O_\Ptrue(n)}
    (\underbrace{\E[\norm{\nstate^{(n)} - \modestate^{(n)}}^{\abs{\midx}}|\Data_{1:n}]}_{=O_\Ptrue(n^{-\abs{\midx}/2})} + \underbrace{\E[\norm{\state^{(n)} - \modestate^{(n)}}^{\abs{\midx}} | \Data_{1:n}]}_{=O_\Ptrue(n^{-\abs{\midx}/2})}) \\
    &=& O_\Ptrue(1).
  \end{eqnarray*}
  Again, integrability follows from H\"older's inequality. The second line holds since $\modestate^{(n)} \equiv \modestate^{(n)}(\Data_{1:n})$ and since $(\state^{(n)},\nstate^{(n)})$ is conditionally independent of all other randomness given $\Data_{1:n}$. Finally, the asymptotics follow from the law of large numbers and Lemma \ref{lem:cond-expect-orders} (noting that each $\abs{\midx} \geq 2$).
  
  Inspection of \eqref{SUPP:eq:log-solitary-term} now shows that \eqref{SUPP:eq:log-solitary-term} is $O_\Ptrue(1)$ as required.
\end{proof}

\subsection{Sufficient Conditions}

We are interested in sufficient conditions that guarantee the convergence rate assumptions in Theorem \ref{thm:bound-asymptotics} will hold. For simplicity we assume throughout that the likelihood of a data point $p(\data|\state)$ admits a density w.r.t. Lebesgue measure and that $\Ptrue$ also admits a Lebesgue density denoted $\ptrue(\data)$.

\subsubsection{Concentration Around the Mode} \label{sec:concentration-around-mode}

We first consider the assumption
\[
     \E[\norm{\state^{(n)} - \mapstate^{(n)}}^{k+1}|\Data_{1:n}] = O_{\Ptrue}(n^{-(k+1)/2}).
\]
Intuitively, this says that the distance of $\state^{(n)}$ from the mode is $O(1/\sqrt{n})$, and hence connects directly with standard concentration results on Bayesian posteriors. To establish this rigorously, it is enough to show that for some $\convstate \in \statespace$ both
\begin{eqnarray}
    \E[\norm{\state^{(n)} - \convstate}^{k+1}|\Data_{1:n}] &=& O_{\Ptrue}(n^{-(k+1)/2}) \label{eq:concentration-around-conv} \\
    \E[\norm{\mapstate^{(n)} - \convstate}^{k+1}|\Data_{1:n}] &=& O_{\Ptrue}(n^{-(k+1)/2}), \label{eq:map-dist-from-conv-alt}
\end{eqnarray}
which entails the result by Lemma \ref{lem:cond-expect-orders} and the triangle inequality. Note that $\mapstate^{(n)} \equiv \mapstate^{(n)}(\Data_{1:n})$ is deterministic function of the data, so that \eqref{eq:map-dist-from-conv-alt} may be written more simply as
\begin{equation} \label{eq:map-dist-from-conv}
    \sqrt{n}(\mapstate^{(n)} - \convstate) = O_\Ptrue(1).
\end{equation}
We give sufficient conditions for \eqref{eq:concentration-around-conv} and \eqref{eq:map-dist-from-conv} now.

By Proposition \ref{prop:concentration-around-conv} below, \eqref{eq:concentration-around-conv} holds as soon as we show that
\begin{equation} \label{eq:tail-integral-around-conv-negligible}
    \E[\norm{\sqrt{n}(\state^{(n)} - \convstate)}^{k+1} \ind(\norm{\sqrt{n}(\state^{(n)} - \convstate)} > M_n)|\Data_{1:n}] \Pto{\Ptrue} 0, \quad\quad \text{for all $M_n \to \infty$}.
\end{equation}
This condition is a consequence of standard assumptions used to prove the Bernstein-von Mises theorem (BvM): in particular, it is \citep[(10.9)]{vaart1998asymptotic} when the model is well-specified (i.e. $\ptrue = p(\data|\truestate)$ for some $\truestate \in \state$), and \citep[(2.16)]{kleijn2012bernstein} in the misspecified case. In both cases
\[
    \convstate = \argmin_{\state \in \statespace} \KL{\ptrue(\data)}{p(\data|\state)},
\]
where $\KL{\cdot}{\cdot}$ denotes the Kullback-Leibler divergence. The key assumption required for \eqref{eq:tail-integral-around-conv-negligible} is then the existence of certain test sequences $\phi_n \equiv \phi_n(\Data_{1:n})$ with $0 \leq \phi_n \leq 1$ such that, whenever $\epsilon > 0$, both
\begin{equation} \label{eq:test-sequences}
    \int \phi_n(\data_{1:n}) \prod_{i=1}^n \ptrue(\data_i) d\data_{1:n} \to 0
    \quad\quad\text{and}\quad\quad
    \sup_{\norm{\state - \convstate} \geq \epsilon} \int (1 - \phi_n(\data_{1:n})) \prod_{i=1}^n \frac{p(\data_i|\state)}{p(\data_i|\convstate)} \ptrue(\data_i) d\data_{1:n} \to 0,
\end{equation}
Note that in the well-specified case these conditions say that $\phi_n$ is uniformly consistent for testing the hypothesis $H_0 : \state = \truestate$ versus $H_1 : \norm{\state - \truestate} \geq \epsilon$. Since $\phi_n$ may have arbitrary form, this requirement does not seem arduous. Sufficient conditions are given by \citep[Lemma 10.4, Lemma 10.6]{vaart1998asymptotic} for the well-specified case, and \citep[Theorem 3.2]{kleijn2012bernstein} for the misspecified case.

In addition to \eqref{eq:test-sequences}, we require in both the well-specified and misspecified cases that the prior $p(\state)$ be continuous and positive at $\convstate$ and satisfy
\[
    \int \norm{\state}^{k+1} p(\state) d\state < \infty.
\]
There are additionally some mild smoothness and regularity conditions imposed on the likelihood, which are naturally stronger in the misspecified case than in the well-specified one. In the well-specified case we require $p(\data|\state)$ is differentiable in quadratic mean at $\convstate$ \citep[(7.1)]{vaart1998asymptotic}. In the misspecified case the conditions are more complicated. We omit repeating these for brevity and instead refer the reader to the statements of Lemma 2.1 and Theorem 3.1 in \citep{kleijn2012bernstein}.

\begin{lemma} \label{lem:OpMn-implies-Op1}
    Suppose a sequence of random variables $X_n$ is $O_\P(M_n)$ for every sequence $M_n \to \infty$. Then $X_n = O_\P(1)$.
\end{lemma}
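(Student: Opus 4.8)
The plan is to argue by contradiction. Suppose $X_n$ is not $O_\P(1)$; unwinding the definition of $\P$-tightness (Section~\ref{sec:notation}), this means there is some $\epsilon_0 > 0$ such that for every $c > 0$ there exists an index $n$ with $\P(\abs{X_n} \geq c) \geq \epsilon_0$. The goal is then to manufacture from this data a single sequence $M_n \to \infty$ along which $X_n/M_n$ fails to be $\P$-tight, which contradicts the hypothesis applied to that particular $M_n$.

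The first step would be to observe that the witnessing indices must be unbounded: for each fixed $n$, $\P(\abs{X_n} \geq c) \to 0$ as $c \to \infty$, so any finite collection of indices can witness $\P(\abs{X_n} \geq c) \geq \epsilon_0$ only for $c$ below some finite threshold. Hence I can recursively extract a strictly increasing subsequence $n_1 < n_2 < \cdots$ together with reals $1 \le c_1 < c_2 < \cdots$ with $c_k \to \infty$ such that $\P(\abs{X_{n_k}} \geq c_k) \geq \epsilon_0$ for every $k$.

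The second step is to build the bad normalising sequence: set $M_n := \sqrt{c_k}$ for $n_k \leq n < n_{k+1}$ and $M_n := 1$ for $n < n_1$. Since $c_k \to \infty$ this sequence diverges, yet $M_{n_k} = \sqrt{c_k}$, so that
\[
  \P\!\left(\Abs{\frac{X_{n_k}}{M_{n_k}}} \geq \sqrt{c_k}\right) = \P(\abs{X_{n_k}} \geq c_k) \geq \epsilon_0 .
\]
Because $\sqrt{c_k} \to \infty$, for any fixed $c > 0$ we have $\sqrt{c_k} > c$ for all large $k$, whence $\P(\abs{X_{n_k}/M_{n_k}} \geq c) \geq \epsilon_0$ for all large $k$; thus $X_n/M_n$ is not $\P$-tight, i.e.\ $X_n \neq O_\P(M_n)$, contradicting the assumption for this $M_n \to \infty$.

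I expect the only slightly delicate point to be the extraction in the second paragraph — justifying that the witnessing indices can be taken strictly increasing and paired with a divergent sequence $c_k$ — which rests on the elementary fact that $\P(\abs{X_n} \geq c) \to 0$ in $c$ for each fixed $n$. Everything else is routine bookkeeping around the definition of $O_\P$.
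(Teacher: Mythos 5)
Your proof is correct and follows the same overall strategy as the paper's: argue by contradiction, extract a strictly increasing subsequence of witnessing indices paired with diverging thresholds, and build from them a divergent sequence $M_n$ that the hypothesis then rules out. In fact your final step is tighter than the paper's: the paper takes $M_{n_k} = k$ against $\P(\abs{X_{n_k}} > k) \geq \epsilon$, which only yields $\P(\abs{X_{n_k}/M_{n_k}} > 1) \geq \epsilon$ and hence only excludes the constant $c = 1$ in the definition of $O_\P(M_n)$, whereas your choice $M_{n_k} = \sqrt{c_k}$ gives $\P(\abs{X_{n_k}/M_{n_k}} \geq \sqrt{c_k}) \geq \epsilon_0$ with $\sqrt{c_k} \to \infty$, which defeats every candidate constant $c$ and so genuinely negates tightness of $X_n/M_n$.
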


\begin{proof}
    Suppose $X_n \neq O_\P(1)$. Then, for some $\epsilon > 0$, for every $c > 0$ we have $\P(\abs{X_n} > c) \geq \epsilon$ for infinitely many $X_n$. This allows us to choose a subsequence $X_{n_k}$ such that $\P(\abs{X_{n_k}} > k) \geq \epsilon$ for each $k \in \Z_{\geq 1}$. Let
    \[
        M_n := \begin{cases}
            k & \text{if $n = n_k$ for some (necessarily unique) $k$} \\
            n & \text{otherwise}.
        \end{cases}
    \]
    Then $M_n \to \infty$ but $\P(\abs{X_n} > M_n) \geq \epsilon$ occurs for infinitely many $n$ and hence $X_n \neq O_\P(M_n)$.
\end{proof}

\begin{proposition} \label{prop:concentration-around-conv}
    Suppose that for some $\convstate \in \statespace$ and $\ell \geq 0$,
    \[
        \E[\norm{\sqrt{n}(\state^{(n)} - \convstate)}^\ell \ind(\norm{\sqrt{n}(\state^{(n)} - \convstate)} > M_n)|\Data_{1:n}] \Pto{\Ptrue} 0
    \]
    whenever $M_n \to \infty$. Then
    \[
        \E[\norm{\state^{(n)} - \convstate}^\ell|\Data_{1:n}] = O_{\Ptrue}(n^{-\ell/2}).
    \]
\end{proposition}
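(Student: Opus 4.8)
The plan is to recast the claim multiplicatively. Writing $Z_n := \norm{\sqrt n(\state^{(n)} - \convstate)}$, the conclusion $\E[\norm{\state^{(n)} - \convstate}^\ell \mid \Data_{1:n}] = O_\Ptrue(n^{-\ell/2})$ is, after multiplying by $n^{\ell/2}$, precisely the statement that the data-measurable random variables $\E[Z_n^\ell \mid \Data_{1:n}]$ form a $\Ptrue$-tight sequence, i.e. are $O_\Ptrue(1)$. The case $\ell = 0$ is trivial (the conditional expectation is then identically $1$), so I would assume $\ell > 0$. To exploit the hypothesis, which is quantified over \emph{every} sequence $M_n \to \infty$, the natural device is Lemma \ref{lem:OpMn-implies-Op1}: it suffices to show $\E[Z_n^\ell \mid \Data_{1:n}] = O_\Ptrue(M_n)$ for a single but arbitrary sequence $M_n \to \infty$, and the Lemma then upgrades this to $O_\Ptrue(1)$.

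So fix $M_n \to \infty$. The key idea is to interpose a second, more slowly diverging truncation level $b_n := M_n^{1/(2\ell)}$, chosen so that $b_n \to \infty$ — so the hypothesis applies with $b_n$ in place of $M_n$ — while still $b_n^\ell = M_n^{1/2} = o(M_n)$. Splitting the conditional expectation over $\set{Z_n \le b_n}$ and its complement and using $Z_n \ge 0$,
\[
  \E[Z_n^\ell \mid \Data_{1:n}]
    \;\le\; b_n^\ell \;+\; \E\!\left[Z_n^\ell\,\ind(Z_n > b_n)\,\middle|\,\Data_{1:n}\right],
\]
where the last term is $o_\Ptrue(1)$ by hypothesis. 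Dividing by $M_n$: the first term is $b_n^\ell/M_n = M_n^{-1/2} \to 0$ deterministically, and the second is $o_\Ptrue(1)/M_n = o_\Ptrue(1)$ since $M_n \to \infty$; hence $\E[Z_n^\ell \mid \Data_{1:n}]/M_n \Pto{\Ptrue} 0$, which in particular gives $\E[Z_n^\ell \mid \Data_{1:n}] = O_\Ptrue(M_n)$. Lemma \ref{lem:OpMn-implies-Op1} then finishes the proof.

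I do not anticipate a real obstacle: the argument is a routine truncation-and-tightness manipulation. The only points needing a little care are the calibration of $b_n$ — it must diverge (so the hypothesis is applicable) yet grow slowly enough that $b_n^\ell$ is negligible after normalising by $M_n$ — and the preliminary observation that $\E[Z_n^\ell \mid \Data_{1:n}]$ is a.s. finite, which follows because it is bounded by $b_n^\ell$ plus the a.s.-finite quantity appearing in the hypothesis, so that all the manipulations above are legitimate.
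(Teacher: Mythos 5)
Your proof is correct and follows essentially the same route as the paper: truncate at a diverging level, bound the truncated part deterministically, kill the tail using the hypothesis, and invoke Lemma \ref{lem:OpMn-implies-Op1}. The only (harmless) difference is your recalibration $b_n = M_n^{1/(2\ell)}$, which makes the lemma apply verbatim with bound $O_\Ptrue(M_n)$, whereas the paper truncates at $M_n$ itself, obtains $O_\Ptrue(M_n^\ell)$, and implicitly reindexes the sequence before applying the lemma.
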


\begin{proof}
    For $M_n \to \infty$, our assumption lets us write
    \begin{eqnarray*}
        n^{\ell/2} \E[\norm{\state^{(n)} - \convstate}^\ell|\Data_{1:n}]
            &=& \E[\norm{\sqrt{n}(\state^{(n)} - \convstate)}^\ell \ind(\norm{\sqrt{n}(\state^{(n)} - \convstate)} \leq M_n)|\Data_{1:n}] + o_\Ptrue(1) \\
            &\leq& M_n^\ell + o_\Ptrue(1) \\
            &=& O_\Ptrue(M_n^\ell).
    \end{eqnarray*}
    Since $M_n$ was arbitrary, Lemma \ref{lem:OpMn-implies-Op1} entails the left-hand side is $O_\Ptrue(1)$, so that
    \[
        \E[\norm{\state^{(n)} - \convstate}^\ell|\Data_{1:n}] = O_\Ptrue(n^{-\ell/2}).
    \]
\end{proof}

It remains to give conditions for \eqref{eq:map-dist-from-conv}. Our discussion here is fairly standard. Recall that for $\mlestate^{(n)}$ the maximum likelihood estimator,
\[
    \sqrt{n}(\mlestate^{(n)} - \convstate) = O_\Ptrue(1)
\]
often holds under under mild smoothness assumptions. We show here that effectively those same assumptions are also sufficient to guarantee a similar result for $\mapstate^{(n)}$.

In the following we define
\[
    \Lobj_n(\state) := \frac{1}{n} \sum_{i=1}^n \log p(\Data_i|\state).
\]
Note that by definition
\[
    \mlestate^{(n)} = \sup_{\state \in \statespace} \Lobj_n(\state).
\]
Our first result here shows that if both the MAP and the MLE are consistent and the prior is well-behaved, then the MAP is a \emph{near maximiser} of $\Lobj_n$ in the sense that \eqref{eq:near-optimizer}. Combined with mild smoothness assumptions on the likelihood, \eqref{eq:near-optimizer} is a standard condition used to show results such as \eqref{eq:map-dist-from-conv}. See for instance \citep[Theorem 5.23]{vaart1998asymptotic} for a detailed statement.

\begin{proposition}
    Suppose for some $\convstate \in \statespace$ that $\mapstate^{(n)}, \mlestate^{(n)} \Pto{\Ptrue} \convstate$ and that the prior $p(\state)$ is continuous and positive at $\convstate$, then
    \begin{equation} \label{eq:near-optimizer}
        \Lobj_n(\mapstate^{(n)}) \geq \Lobj_n(\mlestate^{(n)}) - o_\Ptrue(1/n).
    \end{equation}
\end{proposition}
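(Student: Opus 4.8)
The plan is to use the defining optimality of $\mapstate^{(n)}$ directly. Writing the (unnormalised) log-posterior as $\state \mapsto \log p(\state) + n\Lobj_n(\state)$ (up to an additive constant independent of $\state$), the fact that $\mapstate^{(n)}$ is its maximiser yields, in particular,
\[
  \log p(\mapstate^{(n)}) + n\Lobj_n(\mapstate^{(n)}) \;\geq\; \log p(\mlestate^{(n)}) + n\Lobj_n(\mlestate^{(n)}).
\]
Rearranging and dividing by $n$ gives
\[
  \Lobj_n(\mapstate^{(n)}) \;\geq\; \Lobj_n(\mlestate^{(n)}) - \tfrac{1}{n}\bigl(\log p(\mapstate^{(n)}) - \log p(\mlestate^{(n)})\bigr),
\]
so it suffices to show that the bracketed quantity is $o_\Ptrue(1)$: dividing an $o_\Ptrue(1)$ sequence by $n$ produces an $o_\Ptrue(1/n)$ sequence, and the sign in $-o_\Ptrue(1/n)$ is immaterial since the $o_\Ptrue$ class is symmetric.

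To control the bracketed term, I would invoke the continuous mapping theorem. Since $p$ is continuous and strictly positive at $\convstate$, the map $\log \circ\, p$ is continuous at $\convstate$; combined with the hypotheses $\mapstate^{(n)} \Pto{\Ptrue} \convstate$ and $\mlestate^{(n)} \Pto{\Ptrue} \convstate$, this gives $\log p(\mapstate^{(n)}) \Pto{\Ptrue} \log p(\convstate)$ and $\log p(\mlestate^{(n)}) \Pto{\Ptrue} \log p(\convstate)$, with $\log p(\convstate)$ finite. Hence the difference converges to $0$ in probability, i.e.\ is $o_\Ptrue(1)$, completing the argument.

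There is no substantial obstacle here; the only points requiring care are (i) ensuring $\log p$ is well-defined and continuous in a neighbourhood of $\convstate$ — which is precisely why positivity of the prior at $\convstate$ is assumed, and also guarantees $\log p(\mapstate^{(n)})$ is almost surely finite for $n$ large — and (ii) the elementary fact that $X_n = o_\Ptrue(1)$ implies $X_n/n = o_\Ptrue(1/n)$, which is immediate since $(X_n/n)/(1/n) = X_n \Pto{\Ptrue} 0$. Note also that consistency of the MLE is used here only through $\mlestate^{(n)} \Pto{\Ptrue} \convstate$; no rate is needed at this stage.
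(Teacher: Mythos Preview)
Your proof is correct and follows essentially the same approach as the paper: use the maximality of $\mapstate^{(n)}$ for the log-posterior to obtain $\Lobj_n(\mapstate^{(n)}) \geq \Lobj_n(\mlestate^{(n)}) + \tfrac{1}{n}\log\bigl(p(\mlestate^{(n)})/p(\mapstate^{(n)})\bigr)$, then argue the correction term is $o_\Ptrue(1/n)$ via continuity and positivity of the prior at $\convstate$ together with consistency of both estimators. The only cosmetic difference is that the paper phrases the last step as $p(\mlestate^{(n)})/p(\mapstate^{(n)}) \Pto{\Ptrue} 1$, whereas you take logs first and apply the continuous mapping theorem to each term separately; these are equivalent.
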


\begin{proof}
    Observe that by definition of the MAP,
    \[
        \Lobj_n(\mlestate^{(n)}) + \frac{1}{n} \log p(\mlestate^{(n)})
            \leq \Lobj_n(\mapstate^{(n)}) + \frac{1}{n} \log p(\mapstate^{(n)}).
    \]
    We can rewrite this inequality as
    \[
        \Lobj_n(\mapstate^{(n)}) \geq \Lobj_n(\mlestate^{(n)}) + \frac{1}{n} \log \frac{p(\mlestate^{(n)})}{p(\mapstate^{(n)})}.
    \]
    The second term on the right-hand side is $o_\Ptrue(1/n)$, since our assumption on the prior gives
    \[
        \frac{p(\mlestate^{(n)})}{p(\mapstate^{(n)})} \Pto{\Ptrue} 1.
    \]
\end{proof}

We next consider how to show that the MAP is indeed consistent, as the vast majority of such results in this area only consider the MLE. However, assuming the prior is not pathological, arguments for the consistency of the MLE ought to apply also for the MAP, since the MAP optimises the objective function
\[
    \Lobj_n(\state) + \frac{1}{n} \log p(\state),
\]
which is asymptotically equivalent to $\Lobj_n(\state)$ as $n \to \infty$ whenever $p(\state) > 0$. By way of example, we show that \citep[Theorem 5.7]{vaart1998asymptotic}, which can be used to show the consistency of the MLE, also applies to the MAP. For this, we assume that
\begin{equation} \label{eq:log-likelihood-integrability}
    \int \abs{\log p(\data|\convstate)} \ptrue(\data) d\data < \infty,
\end{equation}
and define
\[
    \Lobj(\state) := \int \log p(\data|\state) \ptrue(\data) d\data.
\]

\begin{proposition}
    Suppose that \eqref{eq:log-likelihood-integrability} holds, that
    \[
        \sup_{\state \in \statespace} \abs{\Lobj_n(\state) - \Lobj(\state)} \Pto{\Ptrue} 0,
    \]
    and that for some $\epsilon > 0$ and $\convstate \in \statespace$
    \begin{equation} \label{eq:map-consistent-cond}
        \sup_{\norm{\state - \convstate} \geq \epsilon} \Lobj(\state) < \Lobj(\convstate).
    \end{equation}
    Further, suppose the prior $p(\state)$ is continuous and positive at $\convstate$, and that
    $\sup_{\state \in \statespace} p(\state) < \infty$. Then both $\mlestate^{(n)}, \mapstate^{(n)} \Pto{\Ptrue} \convstate$.
\end{proposition}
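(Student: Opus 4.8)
The plan is to subsume both consistency claims under the classical argmax--consistency theorem for $M$-estimators, \citep[Theorem 5.7]{vaart1998asymptotic}. That result requires three ingredients for an estimator $\widehat\state_n$ to satisfy $\widehat\state_n \Pto{\Ptrue} \convstate$: the uniform convergence $\sup_\state \abs{\Lobj_n(\state) - \Lobj(\state)} \Pto{\Ptrue} 0$, the well-separation inequality \eqref{eq:map-consistent-cond}, and the \emph{near-maximiser} property $\Lobj_n(\widehat\state_n) \geq \Lobj_n(\convstate) - o_\Ptrue(1)$. The first two are assumed directly, and \eqref{eq:log-likelihood-integrability} guarantees $\Lobj(\convstate)$ is finite so that everything is well-defined. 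Hence the whole argument reduces to verifying the near-maximiser property for each of $\mlestate^{(n)}$ and $\mapstate^{(n)}$.

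For the MLE this is immediate: by definition $\Lobj_n(\mlestate^{(n)}) = \sup_\state \Lobj_n(\state) \geq \Lobj_n(\convstate)$, so the near-maximiser condition holds with zero error, and Theorem 5.7 gives $\mlestate^{(n)} \Pto{\Ptrue} \convstate$. For the MAP the subtlety --- and the one point I would flag as the crux --- is to transfer the near-maximiser property from the penalised objective $\widetilde M_n(\state) := \Lobj_n(\state) + \tfrac1n \log p(\state)$, which the MAP maximises by construction, to $\Lobj_n$ itself, \emph{without} invoking the preceding proposition, since that one presupposes the very consistency we are trying to establish. The observation is that although $\tfrac1n \log p(\state)$ need not vanish uniformly in $\state$ (it can run off to $-\infty$ in the tails), the hypothesis $\sup_\state p(\state) < \infty$ gives the uniform \emph{upper} bound $\tfrac1n \log p(\state) \leq c_n := \tfrac1n \log \sup_\state p(\state) \to 0$, and this is precisely the direction needed.

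Concretely: $\widetilde M_n(\mapstate^{(n)}) = \sup_\state \widetilde M_n(\state) \geq \widetilde M_n(\convstate) = \Lobj_n(\convstate) + \tfrac1n \log p(\convstate)$, where $\tfrac1n \log p(\convstate) \to 0$ because $p(\convstate) \in (0, \infty)$ by continuity and positivity of the prior at $\convstate$; and $\Lobj_n(\mapstate^{(n)}) = \widetilde M_n(\mapstate^{(n)}) - \tfrac1n \log p(\mapstate^{(n)}) \geq \widetilde M_n(\mapstate^{(n)}) - c_n$. Combining the two,
\[
  \Lobj_n(\mapstate^{(n)}) \geq \Lobj_n(\convstate) + \tfrac1n \log p(\convstate) - c_n \geq \Lobj_n(\convstate) - o(1),
\]
a deterministic $o(1)$, which is exactly the required near-maximiser condition. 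Applying Theorem 5.7 with $\widehat\state_n = \mapstate^{(n)}$ then yields $\mapstate^{(n)} \Pto{\Ptrue} \convstate$, completing the proof. The remainder is bookkeeping; I would not expect any real difficulty beyond the one-sided prior bound just highlighted, which is precisely why the assumption $\sup_\state p(\state) < \infty$ is imposed.
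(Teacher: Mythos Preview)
Your proof is correct and relies on the same key tool as the paper, namely \citep[Theorem 5.7]{vaart1998asymptotic}, but you establish the near-maximiser property for $\mapstate^{(n)}$ by a more direct route. The paper first proves $\mlestate^{(n)} \Pto{\Ptrue} \convstate$, then compares $\mapstate^{(n)}$ against $\mlestate^{(n)}$: from the sandwich
\[
  \Lobj_n(\mlestate^{(n)}) + \tfrac1n \log p(\mlestate^{(n)}) \leq \Lobj_n(\mapstate^{(n)}) + \tfrac1n \log p(\mapstate^{(n)}) \leq \Lobj_n(\mlestate^{(n)}) + \tfrac1n \log p(\mapstate^{(n)})
\]
it deduces $p(\mlestate^{(n)}) \leq p(\mapstate^{(n)})$, and then uses consistency of the MLE together with continuity and positivity of the prior at $\convstate$ to lower-bound $p(\mapstate^{(n)})$ away from zero, giving $\tfrac1n \log p(\mapstate^{(n)}) = O_\Ptrue(1/n)$. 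You instead compare $\mapstate^{(n)}$ directly against the fixed point $\convstate$ and use only the one-sided bound $\tfrac1n \log p(\mapstate^{(n)}) \leq \tfrac1n \log \sup_\state p(\state) \to 0$. Your argument is shorter and shows that MAP consistency does not logically depend on MLE consistency; the paper's argument yields the sharper side information $\tfrac1n \log p(\mapstate^{(n)}) = O_\Ptrue(1/n)$ rather than merely $o(1)$, though that is not needed for the proposition itself.
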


\begin{proof}
    For each $\state \in \statespace$ we have $\Lobj_n(\state) \Pto{\Ptrue} \Lobj(\state)$ as $n \to \infty$ by the law of large numbers, and thus $\mlestate^{(n)} \Pto{\Ptrue} \convstate$ by \citep[Theorem 5.7]{vaart1998asymptotic}. Since $p(\state)$ is continuous and positive at $\convstate$, this yields that
    \begin{equation} \label{eq:prior-mle-lb}
        \Ptrue(p(\mlestate^{(n)}) > c) \to 1
    \end{equation}
    for some $c > 0$, as well as
    \[
        \frac{1}{n} \log p(\mlestate^{(n)}) = O_\Ptrue(1/n).
    \]
    Now, by maximality
    \[
        \Lobj_n(\mlestate^{(n)}) + \frac{1}{n} \log p(\mlestate^{(n)})
            \leq \Lobj_n(\mapstate^{(n)}) + \frac{1}{n} \log p(\mapstate^{(n)})
            \leq \Lobj_n(\mlestate^{(n)}) + \frac{1}{n} \log p(\mapstate^{(n)}).
    \]
    Observe that it implies that $p(\mlestate^{(n)}) \leq p(\mapstate^{(n)})$. Together with \eqref{eq:prior-mle-lb} and our boundedness assumption on the prior, this gives
    \[
        \frac{1}{n} \log p(\mapstate^{(n)}) = O_\Ptrue(1/n).
    \]
    We can thus write
    \[
        \Lobj_n(\mapstate^{(n)}) \geq \Lobj_n(\mlestate^{(n)}) + O_\Ptrue(1/n).
    \]
    The result now follows from \citep[Theorem 5.7]{vaart1998asymptotic}.
\end{proof}

Observe that by negating \eqref{eq:map-consistent-cond} and adding the constant $\int \ptrue(\data) \log \ptrue(\data) d\data$ to both sides, we see it is equivalent to the perhaps more intuitive condition
\[
    \inf_{\norm{\state - \convstate} \geq \epsilon} \KL{\ptrue(\data)}{p(\data|\state)}
        >  \KL{\ptrue(\data)}{p(\data|\convstate)}.
\]

\subsubsection{Scaling of the Proposal} \label{sec:prop-scale}

We now consider the assumption
\begin{equation} \label{eq:proposal-concentration}
    \E[\norm{\state^{(n)} - \nstate^{(n)}}^{k+1}|\Data_{1:n}] = O_\Ptrue(n^{-(k+1)/2}).
\end{equation}
Intuitively this holds if we scale our proposal like $1/\sqrt{n}$. We consider here proposals based on a noise distribution $\xi^{(n)} \iid \Normal(0, \idmat)$, but generalisations are possible. We immediately obtain \eqref{eq:proposal-concentration} for instance with the scaled random walk proposal \eqref{eq:scaled-random-walk-prop}, for which
\[
    \nstate^{(n)} = \state^{(n)} + \frac{\sigma}{\sqrt{n}} \xi^{(n)}.
\]
Similarly, the $\approxtarget_1$-reversible proposal defined by \eqref{eq:scaled-first-order-prop} has
\[
    \nstate^{(n)} = \state^{(n)} - \frac{1}{2n} \nabla \pot^{(n)}(\modestate^{(n)}) + \frac{\sigma}{\sqrt{n}} \xi^{(n)},
\]
with $\xi^{(n)} \iid \Normal(0, \idmat)$. If the conditions of Lemma \ref{lem:norm-grad-modestate} hold, then the second term is $O_\Ptrue(1/\sqrt{n})$ and \eqref{eq:proposal-concentration} follows.

More generally we can consider trying to match the covariance of our noise to the covariance of our target. Intuitively, under usual circumstances, $[\nabla^2 \pot^{(n)}(\modestate^{(n)})]^{-1}$ is approximately proportional to the inverse observed Fisher information at $\convstate$, and hence preconditioning $\xi^{(n)}$ by $S^{(n)}$ such that
\[
    S^{(n)} {S^{(n)}}^\top = [\nabla^2 \pot^{(n)}(\modestate^{(n)})]^{-1}
\]
matches our proposal to the characteristics of the target. Such an $S^{(n)}$ can be computed for instance via a Cholesky decomposition.

Under usual circumstances this achieves a correctly scaled proposal. In particular, if
\begin{eqnarray}
    \modestate^{(n)} &\Pto{\Ptrue}& \convstate \label{eq:modestate-consistent} \\
    \frac{1}{n} \partial_j \partial_k \pot^{(n)}(\convstate) &\Pto{\Ptrue}& \mathcal{I}_{j,k} \label{eq:average-potentials-converge}
\end{eqnarray}
for some constants $\mathcal{I}_{j,k}$, then Proposition \ref{prop:observed-information-root} below entails $\opnorm{{S^{(n)}}} = O_\Ptrue(1/\sqrt{n})$. Thus \eqref{eq:proposal-concentration} holds for the preconditioned random walk proposal \eqref{proposalpreconditionedRW} for which
\[
    \nstate^{(n)} = \state^{(n)} + S^{(n)} \xi^{(n)},
\]
since
\begin{equation} \label{eq:precond-noise-bound}
    \norm{S^{(n)} \xi^{(n)}} \leq \opnorm{S^{(n)}} \norm{\xi^{(n)}} = O_\Ptrue(1/\sqrt{n}). 
\end{equation}
The same is also true a pCN proposal. In this case
\[
    \nstate^{(n)} - \state^{(n)} = (\sqrt{\rho} - 1)(\state^{(n)} - \modestate^{(n)}) + (\sqrt{\rho} - 1)([\nabla^2 \pot^{(n)}(\modestate^{(n)})]^{-1} \nabla \pot^{(n)}(\modestate^{(n)})) + \sqrt{1-\rho} S^{(n)} \xi^{(n)}.
\]
Note that here the first term satisfies
\[
    E[\norm{\state^{(n)} - \modestate^{(n)}}^3|\Data_{1:n}] = O_\Ptrue(n^{-3/2}),
\]
while the remaining two terms are $O_\Ptrue(1/\sqrt{n})$ by Lemma \ref{lem:norm-grad-modestate} and \eqref{eq:precond-noise-bound}. This gives \ref{eq:proposal-concentration} by Lemma \ref{lem:cond-expect-orders}.

Condition \eqref{eq:modestate-consistent} holds for instance under the assumptions of Theorem \ref{thm:bound-asymptotics} and provided concentration around $\convstate$ of the kind described in Section \ref{sec:concentration-around-mode} occurs. Condition \eqref{eq:average-potentials-converge} will also often hold in practice. For instance, if
\[
    \pot^{(n)}(\state) = - \log p(\state) -\sum_{i=1}^n \log p(\Data_i|\state),
\]
and if the prior is positive at $\convstate$, then for all $1 \leq j, k \leq d$ the law of large numbers gives
\begin{eqnarray*}
    \frac{1}{n} \partial_j \partial_k \pot^{(n)}(\convstate) &=&  -\frac{1}{n} \partial_j \partial_k \log p(\convstate) - \frac{1}{n} \sum_{i=1}^n \partial_j \partial_k \log p(\Data_i|\convstate)\\
        &\Pto{\Ptrue}& -\int \partial_j \partial_k \log p(\data|\convstate) \ptrue(\data) d\data
\end{eqnarray*}
when the derivatives and the integral exists. More generally our model may be specified conditional on i.i.d. covariates $X_i$ so that
\[
    \pot^{(n)}(\state) = - \log p(\state) -\sum_{i=1}^n \log p(\Data_i|\state, X_i) + \log p(X_i),
\]
in which case the same argument still applies. (Note that here abuse notation by considering our data $\Data_i \equiv (X_i, \Data_i)$, where the right-hand $\Data_i$ are response variables.)

\begin{proposition} \label{prop:observed-information-root}
  Suppose for some $\convstate \in \statespace$ we have $\modestate^{(n)} \Pto{\Ptrue} \convstate$ and
  \[
    \frac{1}{n} \partial_j \partial_k \pot^{(n)}(\convstate) \Pto{\Ptrue} \mathcal{I}_{jk}
  \]
  for all $1 \leq j,k \leq d$. Suppose moreover that each $\potgradub_{3,i} \in L^1$ and each $\nabla^2 \pot^{(n)}(\modestate^{(n)}) \succ 0$. If $[\nabla^2 \pot^{(n)}(\modestate^{(n)})]^{-1} = S^{(n)} {S^{(n)}}^\top$ for some $S^{(n)} \in \R^{d\times d}$, then
  \[
    \opnorm{S^{(n)}} = O_\Ptrue(1/\sqrt{n}).
  \]
\end{proposition}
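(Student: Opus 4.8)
The plan is to reduce the statement to a lower bound on the smallest eigenvalue of the observed information matrix $\nabla^2\pot^{(n)}(\modestate^{(n)})$. First I would note that, since $S^{(n)}{S^{(n)}}^\top = [\nabla^2\pot^{(n)}(\modestate^{(n)})]^{-1}$ is symmetric positive-definite,
\[
  \opnorm{S^{(n)}}^2 = \opnorm{S^{(n)}{S^{(n)}}^\top} = \opnorm{[\nabla^2\pot^{(n)}(\modestate^{(n)})]^{-1}} = \frac{1}{\lambda_{\min}(\nabla^2\pot^{(n)}(\modestate^{(n)}))} = \frac{1}{n\,\lambda_{\min}(n^{-1}\nabla^2\pot^{(n)}(\modestate^{(n)}))}.
\]
Hence it suffices to show $\lambda_{\min}(n^{-1}\nabla^2\pot^{(n)}(\modestate^{(n)}))^{-1} = O_\Ptrue(1)$, and for this it is enough to prove $n^{-1}\nabla^2\pot^{(n)}(\modestate^{(n)}) \Pto{\Ptrue} \mathcal{I} := (\mathcal{I}_{jk})$ together with $\mathcal{I} \succ 0$.

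Second, I would obtain the convergence $n^{-1}\nabla^2\pot^{(n)}(\modestate^{(n)}) \Pto{\Ptrue} \mathcal{I}$ by transporting the assumed convergence at $\convstate$ to the point $\modestate^{(n)}$. Write $n^{-1}\nabla^2\pot^{(n)}(\modestate^{(n)}) = n^{-1}\nabla^2\pot^{(n)}(\convstate) + n^{-1}(\nabla^2\pot^{(n)}(\modestate^{(n)}) - \nabla^2\pot^{(n)}(\convstate))$. The first term converges to $\mathcal{I}$ in probability by hypothesis, entrywise convergence being equivalent to operator-norm convergence in the fixed dimension $d$. For the second term, the third-derivative bounds control the Lipschitz constant of the Hessian: for $\abs{\midx} = 3$ one has $\abs{\partial^\midx \pot^{(n)}(\state)} \leq \sum_{i=1}^{m^{(n)}} \abs{\partial^\midx \pot_i(\state)} \leq \sum_{i=1}^{m^{(n)}} \potgradub_{3,i}$, so by the mean value theorem and norm equivalence
\[
  \opnorm{\nabla^2\pot^{(n)}(\modestate^{(n)}) - \nabla^2\pot^{(n)}(\convstate)} \leq c\Big(\sum_{i=1}^{m^{(n)}} \potgradub_{3,i}\Big)\norm{\modestate^{(n)} - \convstate}
\]
for a constant $c$ depending only on $d$. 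As in the discussion following Theorem \ref{thm:bound-asymptotics}, the law of large numbers gives $n^{-1}\sum_{i=1}^{m^{(n)}}\potgradub_{3,i} = O_\Ptrue(1)$ (any prior factor contributing a term that is $o_\Ptrue(1)$ after rescaling), while $\norm{\modestate^{(n)} - \convstate} = o_\Ptrue(1)$ by assumption; so the rescaled second term is $o_\Ptrue(1)$ and the claimed convergence follows.

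Third, since $\lambda_{\min}(\cdot)$ is continuous (indeed Lipschitz) on symmetric matrices, the continuous mapping theorem gives $\lambda_{\min}(n^{-1}\nabla^2\pot^{(n)}(\modestate^{(n)})) \Pto{\Ptrue} \lambda_{\min}(\mathcal{I})$. Each $\nabla^2\pot^{(n)}(\modestate^{(n)}) \succ 0$ forces $\mathcal{I} \succeq 0$, and $\mathcal{I}$ — the limiting observed information at the pseudo-true value $\convstate = \argmin_\state \KL{\ptrue(\data)}{p(\data|\state)}$ — is positive-definite under the standing regularity assumptions that make $\convstate$ a well-separated minimiser and underpin the Bernstein--von Mises results of Section \ref{sec:concentration-around-mode}. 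Hence $\lambda_{\min}(\mathcal{I}) > 0$, so $x \mapsto 1/x$ is continuous at $\lambda_{\min}(\mathcal{I})$ and $\lambda_{\min}(n^{-1}\nabla^2\pot^{(n)}(\modestate^{(n)}))^{-1} \Pto{\Ptrue} \lambda_{\min}(\mathcal{I})^{-1} < \infty$, which is in particular $O_\Ptrue(1)$; plugging into the first display yields $\opnorm{S^{(n)}}^2 = O_\Ptrue(1/n)$, i.e. $\opnorm{S^{(n)}} = O_\Ptrue(1/\sqrt{n})$.

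The main point to be careful about is the passage from $\mathcal{I} \succeq 0$ (all that the convergence of the positive-definite matrices $n^{-1}\nabla^2\pot^{(n)}(\modestate^{(n)})$ supplies on its own) to $\mathcal{I} \succ 0$ — equivalently, a uniform-in-$n$ lower bound on $\lambda_{\min}(n^{-1}\nabla^2\pot^{(n)}(\modestate^{(n)}))$ — since the asserted $1/\sqrt{n}$ rate genuinely fails if $\mathcal{I}$ is allowed to be singular; this must therefore be read as, or be derived from, the nondegeneracy of the information matrix at $\convstate$. Everything else — the Lipschitz estimate on the Hessian and the law-of-large-numbers control of $\sum_{i}\potgradub_{3,i}$ — is routine bookkeeping.
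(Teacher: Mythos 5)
Your proof is correct and follows essentially the same route as the paper's: a Lipschitz bound on the Hessian obtained from the third-derivative envelopes $\potgradub_{3,i}$ transports the assumed convergence at $\convstate$ to $\modestate^{(n)}$ (the paper phrases this step as stochastic equicontinuity established via Markov's inequality, you as an $O_\Ptrue(1)\cdot o_\Ptrue(1)$ product bound, which amounts to the same thing), followed by the continuous mapping theorem and the identity $\opnorm{S^{(n)}}^2 = \opnorm{[\nabla^2\pot^{(n)}(\modestate^{(n)})]^{-1}}$. You are also right to flag that the argument requires $\mathcal{I} \succ 0$: the paper's proof uses this implicitly by writing $\opnorm{\mathcal{I}^{-1}}$ even though invertibility of $\mathcal{I}$ is not listed among the proposition's hypotheses, so your explicit caveat is if anything more careful than the original.
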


\begin{proof}
    Suppose $\abs{\midx} = 2$. Note that since for each $i$ and $\state$
    \[
        \norm{\nabla \partial^\midx \pot_i(\state)}
            \leq c \norm{\nabla \partial^\midx \pot_i(\state)}_1
            = c \sum_{j=1}^d \abs{\partial_j \partial^\midx U_i(\state)}
            \leq c d \potgradub_{3,i},
    \]
    for some $c > 0$ by norm equivalence, it follows that $\partial^\midx U_i$ is $cd\potgradub_{3,i}$-Lipschitz. Consequently for each $\state$
    \[
        \Abs{\frac{1}{n} \partial^\midx \pot^{(n)}(\state) - \frac{1}{n} \partial^\midx \pot^{(n)}(\convstate)}
            \leq \frac{1}{n} \sum_{i=1}^{m^{(n)}} \abs{\partial^\midx \pot_i(\state) - \partial^\midx \pot_i(\convstate)}
        \leq \frac{1}{n} (\sum_{i=1}^{m^{(n)}} \potgradub_{3,i}) cd\norm{\state - \convstate}.
    \]
    Thus given $K, \eta > 0$
    \begin{eqnarray*}
        \P\left(\sup_{\norm{\state - \convstate} < K} \Abs{\frac{1}{n} \partial^\midx \pot^{(n)}(\state) - \frac{1}{n} \partial^\midx \pot^{(n)}(\convstate)} > \eta \right)
        &\leq& \P\left(\frac{1}{n} \sum_{i=1}^{m^{(n)}} \potgradub_{3,i} > \eta c^{-1}d^{-1} K^{-1}\right), \\
        &\leq& \frac{\E[\potgradub_{3,i}]}{\eta c^{-1}d^{-1}K^{-1}},
    \end{eqnarray*}
    by Markov's inequality. It is clear that given any $\eta > 0$ the right-hand side can be made arbitrarily small by taking $K \to 0$, which yields $n^{-1} \partial^{\midx} \pot^{(n)}(\state)$ is stochastic equicontinuous at $\convstate$, and consequently that
    \[
        \frac{1}{n} \partial^{\midx} \pot^{(n)}(\modestate^{(n)}) - \frac{1}{n} \partial^{\midx} \pot^{(n)}(\convstate) \Pto{\Ptrue} 0,
    \]
    see \citep[page 139]{pollard2012convergence}.
    
    Define the matrix $\mathcal{I} \in \R^{d\times d}$ by the constants $\mathcal{I}_{jk}$. We thus have
    \[
        \frac{1}{n} \nabla^2 \pot^{(n)}(\modestate^{(n)}) \Pto{\Ptrue} \mathcal{I}
    \]
    since it converges element-wise. Thus by the continuous mapping theorem
    \[
        n \opnorm{\nabla^2 \pot^{(n)}(\modestate^{(n)})^{-1}} \Pto{\Ptrue} \opnorm{\mathcal{I}^{-1}},
    \]
    from which it follows that
    \[
        \opnorm{[\nabla^2 \pot^{(n)}(\modestate^{(n)})]^{-1}} = O_\Ptrue(1/n).
    \]
    It is a standard result from linear algebra that
    \[
        \opnorm{[\nabla^2 \pot^{(n)}(\modestate^{(n)})]^{-1}} = \opnorm{S^{(n)}}^2,
    \]
    which gives the result.
\end{proof}

\section{Applications}\label{Supp:Applications}

We give here the results of applying our method to a logistic regression and a robust linear regression example. In both cases we write our covariates as $x_i$ and responses as $\data_i$, and our target is the posterior
\[
    \target(\state) = p(\state|x_{1:n}, \data_{1:n})
        \propto p(\state) \prod_{i=1}^n p(\data_i|\state, x_i).
\]

\subsection{Logistic Regression} \label{sec:logistic-regression-bounds}

In this case we have $x_i \in \R^d$, $\data_i \in \set{0, 1}$, and
\[
  p(y_i | \state, x_i) = \Bernoulli(y_i | \frac{1}{1 + \exp(-\state^\top x_i)}).
\]
For simplicity we assume a flat prior $p(\state) \equiv 1$, which allows factorising $\target$ like \eqref{eq:base-factorisation} with $m = n$ and $\basetarget_i(\state) = p(\data_i|\state, x_i)$. It is then easy to show that
\[
  \pot_i(\state) = -\log \widetilde{\target}_i(\state) = \log(1 + \exp(\state^T x_i)) - \data_i \state^\top x_i.
\]
We require upper bounds $\potgradub_{k+1,i}$ of the form \eqref{eq:grad-upper-bound} for these terms. For this we let $\sigma(z) = 1/(1+\exp(-z))$ and note the identity $\sigma'(z) = \sigma(z)(1 - \sigma(z))$, which entails
\[
    \partial_j \sigma(\state^\top x_i)
        = -x_{ij} (\sigma(\state^\top x_i) - \sigma(\state^\top x_i)^2).
\]
We then have
\begin{eqnarray*}
    \partial_j U_i(\state) &=& x_{ij}(\sigma(\state^\top x_i) - y_i) \\
    \partial_k \partial_j U_i(\state) &=& x_{ij} x_{ik} (\sigma(\state^\top x_i) - \sigma(\state^\top x_i)^2) \\
    \partial_\ell \partial_k \partial_j U_i(\state) &=& x_{ij} x_{ik} (x_{i\ell} (\sigma(\state^\top x_i) - \sigma(\state^\top x_i)^2) - 2 \sigma(\state^\top x_i) x_{i\ell} (\sigma(\state^\top x_i) - \sigma(\state^\top x_i)^2)) \\
    &=& x_{ij} x_{ik} x_{i\ell} (\sigma(\state^\top x_i) - \sigma(\state^\top x_i)^2) (1 -  2 \sigma(\state^\top x_i)).
\end{eqnarray*}
It is possible to show that (whether $y_i = 0$ or $y_1 = 1$)
\begin{eqnarray*}
    \sup_{t \in \R} \abs{\sigma(t) - y_i} &=& 1 \\
    \sup_{t \in \R} \abs{\sigma(t) - \sigma(t)^2} &=& \frac{1}{4} \\
    \sup_{t \in \R} \abs{(\sigma(t) - \sigma(t)^2)(1 -  2 \sigma(t))} &=& \frac{1}{6\sqrt{3}}.
\end{eqnarray*}
Thus setting
\begin{eqnarray*}
    \potgradub_{1,i} &:=& \max_{1 \leq j \leq d} \abs{x_{ij}} \\
    \potgradub_{2,i} &:=& \frac{1}{4} \max_{1 \leq j \leq d} \abs{x_{ij}}^2 \\
    \potgradub_{3,i} &:=& \frac{1}{6\sqrt{3}} \max_{1 \leq j \leq d} \abs{x_{ij}}^3
\end{eqnarray*}
satisfies \eqref{eq:grad-upper-bound}.

In Figure~\ref{fig:lr_histogram} we compare the histogram of the samples of the first coordinate $\theta_1$ to the marginal of the Gaussian approximation. This is done for $n=2048$, the smallest data size for which we saw a significant ESS improvement of SMH-2 over MH, and for larger $n$ showing the convergence of the Gaussian approximation and the posterior.

\begin{figure}[ht]

\begin{center}

\subfigure[$n=2048$]{
\includegraphics[width=0.5\columnwidth]{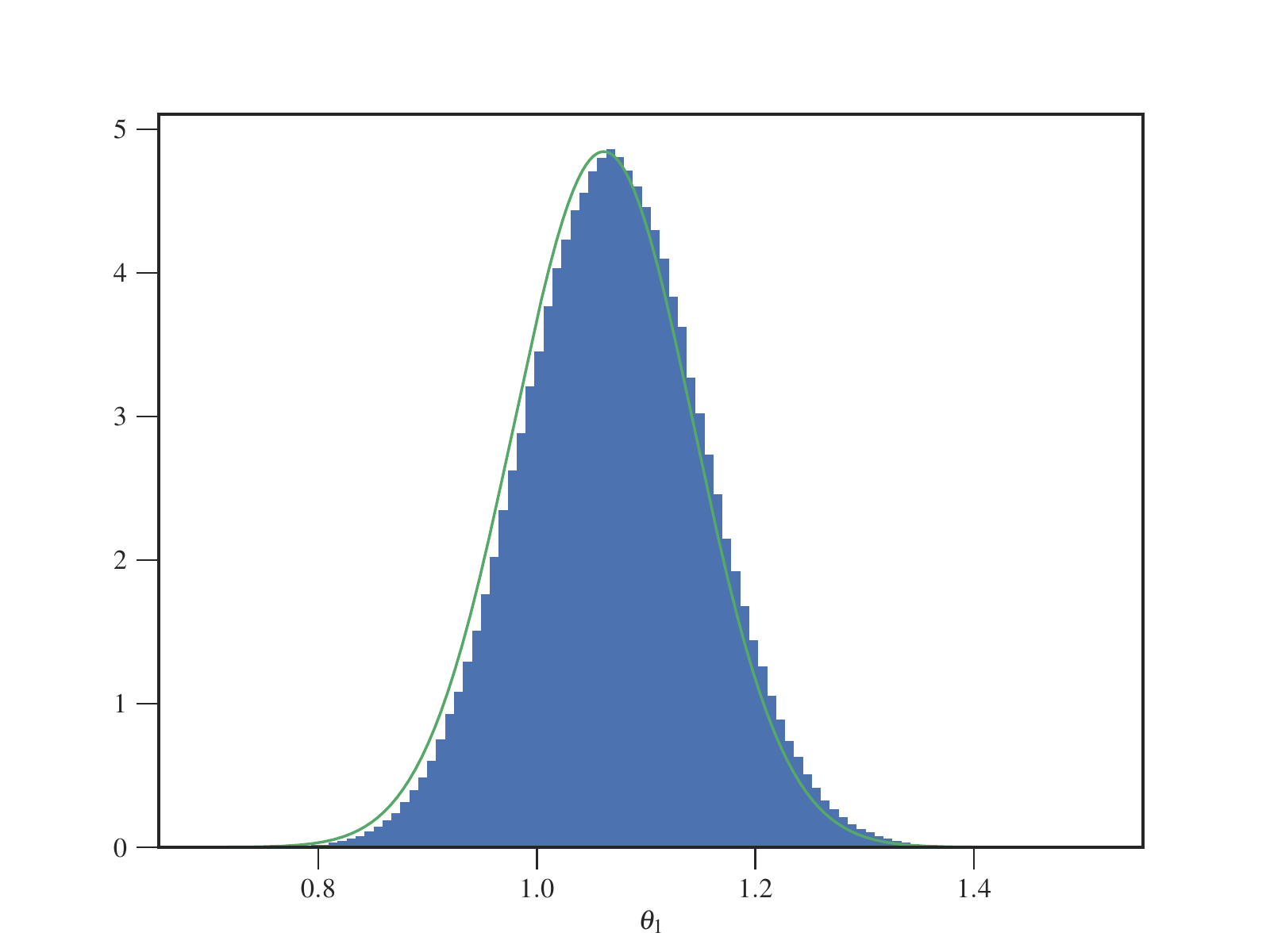}
}
\subfigure[$n=8192$]{
\includegraphics[width=0.5\columnwidth]{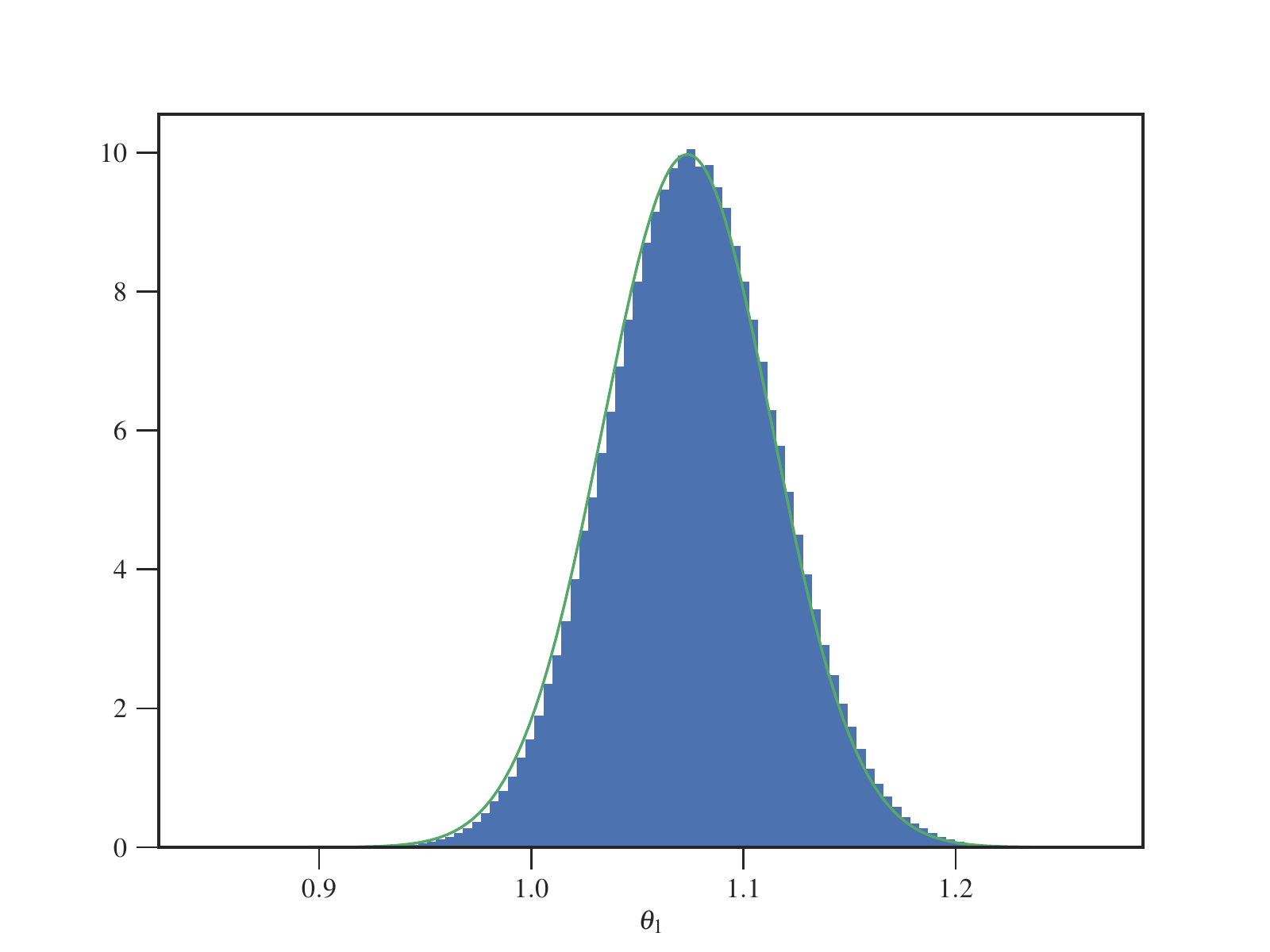}
}
\subfigure[$n=65536$]{
\includegraphics[width=0.5\columnwidth]{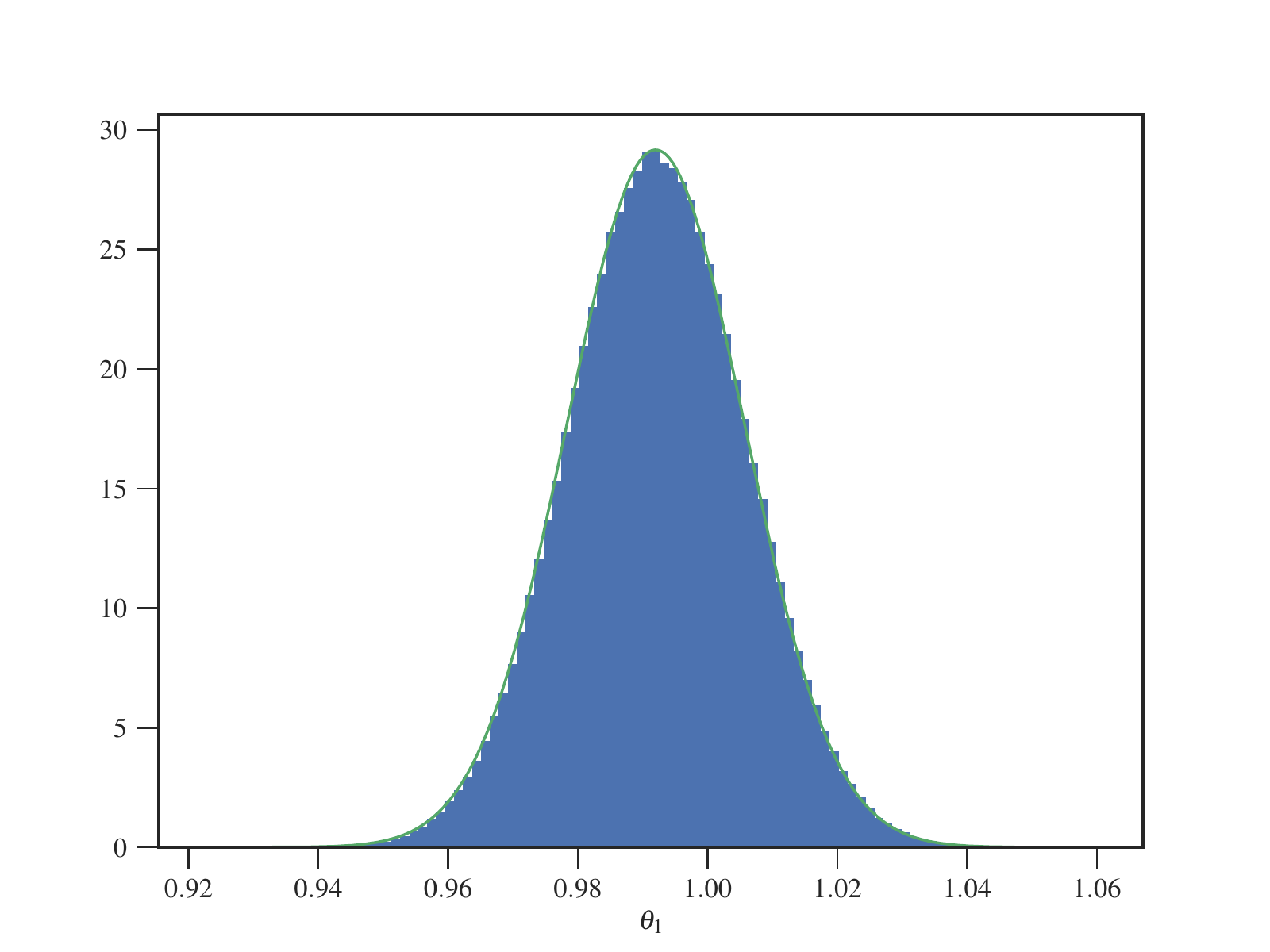}
}




\caption{Histogram of samples of first regression coefficient ($\theta_1$) versus marginal of Gaussian approximation (green lines).}
\label{fig:lr_histogram}

\end{center}

\end{figure}

In Figure~\ref{fig:rlr_nonreversible_iact_20d} we demonstrate the performance of the algorithm where $\theta$ is of dimension 20. The results are qualitatively similar to the 10-dimensional case in that SMH-2 eventually performs better than MH as the number of data increases. However for the 20-dimensional model SMH-2 yields superior performance to MH around the point at which $n$ exceeds 32768, whereas in the 10-dimensional model this happens for $n$ exceeding 2048.

\begin{figure}[ht]
\vskip 0.2in
\begin{center}
\centerline{\includegraphics[width=0.5\columnwidth]{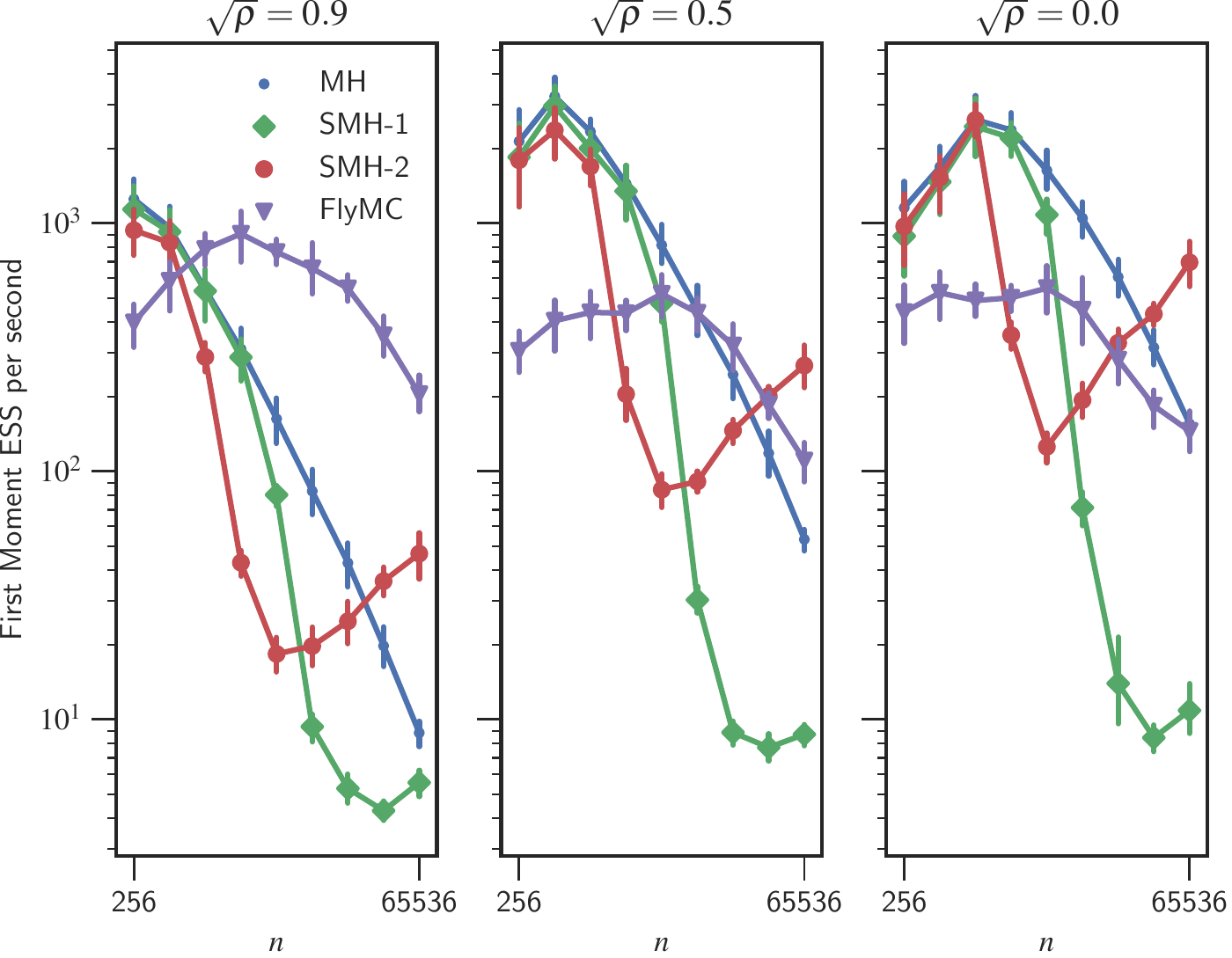}}
\caption{ESS of first regression coefficient for a logistic regression model of dimension 20.}
\label{fig:rlr_nonreversible_iact_20d}
\end{center}
\vskip -0.2in
\end{figure}

\subsection{Robust Linear Regression} \label{sec:robust-regression-bounds}

Here $x_i \in \R^d$ and $\data_i \in \R$. We use a flat prior $p(\state) \equiv 1$, and the likelihood is given by
\begin{eqnarray*}
    p(\data_i|\state, x_i) &=& \Student(y_i - \state^\top x_i \mid \nu).
\end{eqnarray*}
Here $\Student(\nu)$ denotes the Student-$t$ distribution with $\nu$ degrees of freedom that the user will specify. This gives
\[
    \pot_i(\state) = \frac{\nu+1}{2} \log \left(1 + \frac{(\data_i - \state^\top x_i)^2}{\nu} \right).
\]

To derive bounds necessary for \eqref{eq:grad-upper-bound}, let $\phi_i(\state) := \data_i - \state^T x_i$ and note that $\partial_j \phi_i(\state) = -x_{ij}$. Then we have
\begin{eqnarray*}
    \pot_i(\state) &=& \frac{\nu+1}{2} \log \left(1 + \frac{\phi_i(\state)^2}{\nu}\right) \\
    \partial_j \pot_i(\state) &=& -(\nu+1) x_{ij} \frac{\phi_i(\state)}{\nu + \phi_i(\state)^2} \\
    \partial_k \partial_j \pot_i(\state) &=& -(\nu+1) x_{ij} \frac{-x_{ik} (\nu + \phi_i(\state)^2) + 2 x_{ik} \phi_i(\state)^2}{\nu + \phi_i(\state)} \\
    &=& (\nu + 1) x_{ij} x_{ik} \frac{\nu - \phi_i(\state)^2}{(\nu + \phi_i(\state)^2)^2} \\
    \partial_\ell \partial_k \partial_j \pot_i(\state) &=& (\nu + 1) x_{ij} x_{ik} 
        \frac{2x_{i\ell} \phi_i(\state) (\nu + \phi_i(\state)^2)^2 + 4 x_{i\ell} (\nu - \phi_i(\state)^2)(\nu + \phi_i(\state)^2) \phi_i(\state)}{(\nu + \phi_i(\state)^2)^4} \\
    &=& -2 (\nu + 1) x_{ij} x_{ik} x_{i\ell} \frac{\phi_i(\state)( \phi_i(\state)^2 -3 \nu)}{(\nu + \phi_i(\state)^2)^3}
\end{eqnarray*}

In general,
\begin{eqnarray*}
    \sup_{t \in \R} \Abs{\frac{t}{\nu + t^2}} &=& \frac{1}{2\sqrt{\nu}} \\
    \sup_{t \in \R} \Abs{\frac{\nu - t^2}{(\nu + t^2)^2}} &=& \frac{1}{\nu} \\
    \sup_{t \in \R} \Abs{\frac{t(t^2 - 3\nu)}{(\nu + t^2)^3}} &=& \frac{3 + 2\sqrt{2}}{8\nu^{3/2}},
\end{eqnarray*}
so setting
\begin{eqnarray*}
    \potgradub_{1,i} &:=& \frac{\nu+1}{2\sqrt{\nu}} \max_{1\leq j \leq d} \abs{x_{ij}}\\
    \potgradub_{2,i} &:=& \frac{\nu+1}{\nu} \max_{1\leq j \leq d} \abs{x_{ij}}^2 \\
    \potgradub_{3,i} &:=& \frac{(\nu+1)(3 + 2\sqrt{2})}{4\nu^{3/2}} \max_{1\leq j \leq d} \abs{x_{ij}}^3
\end{eqnarray*}
satisfies \eqref{eq:grad-upper-bound}.

\begin{figure}[ht]
\vskip 0.2in
\begin{center}
\centerline{\includegraphics[width=0.5\columnwidth]{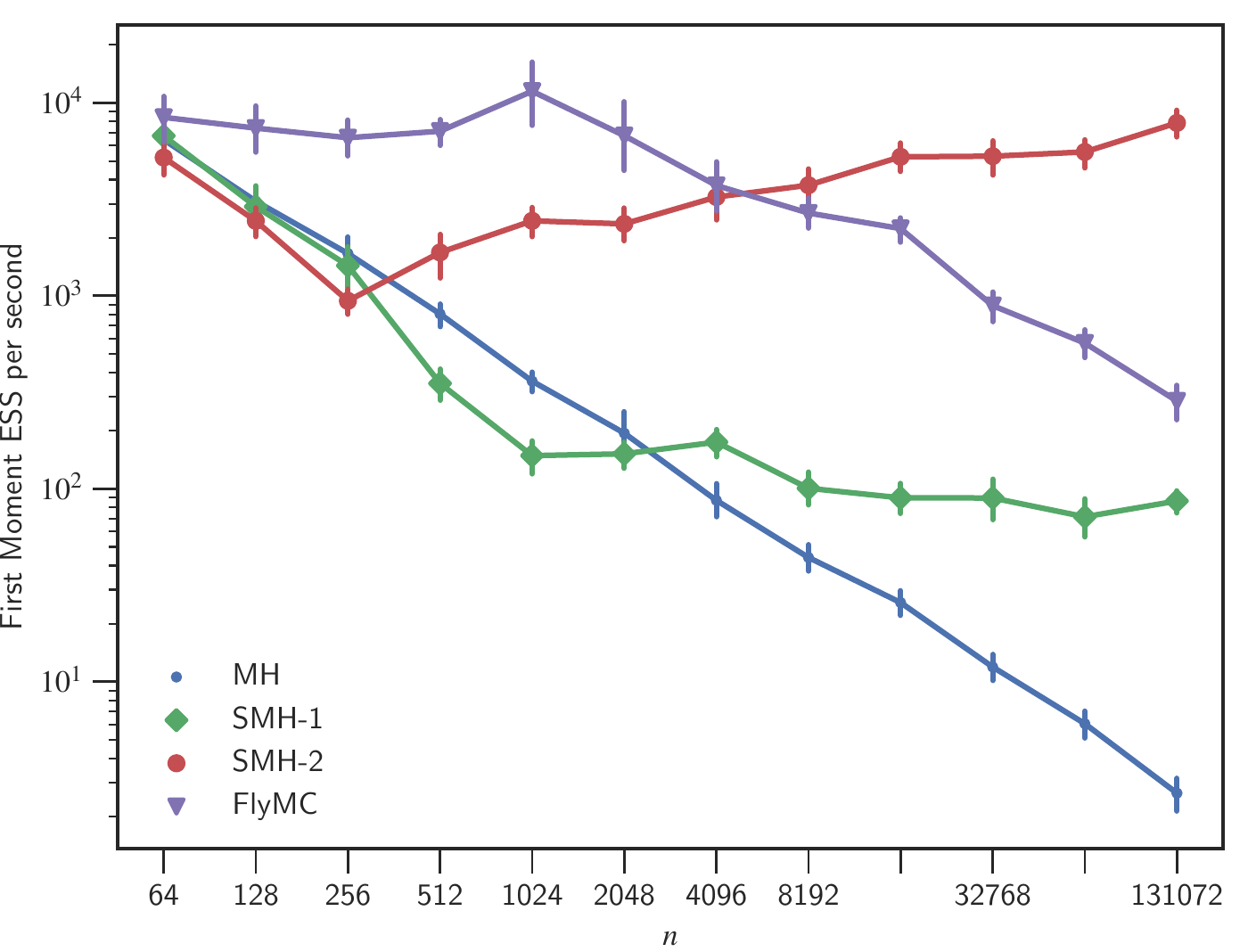}}
\caption{ESS for first regression coefficient of a robust linear regression posterior, scaled by execution time (higher is better).} 
\label{fig:rlr_nonreversible_iact}
\end{center}
\vskip -0.2in
\end{figure}

\begin{figure}[ht!]
\vskip 0.2in
\begin{center}
\centerline{\includegraphics[width=.5\columnwidth]{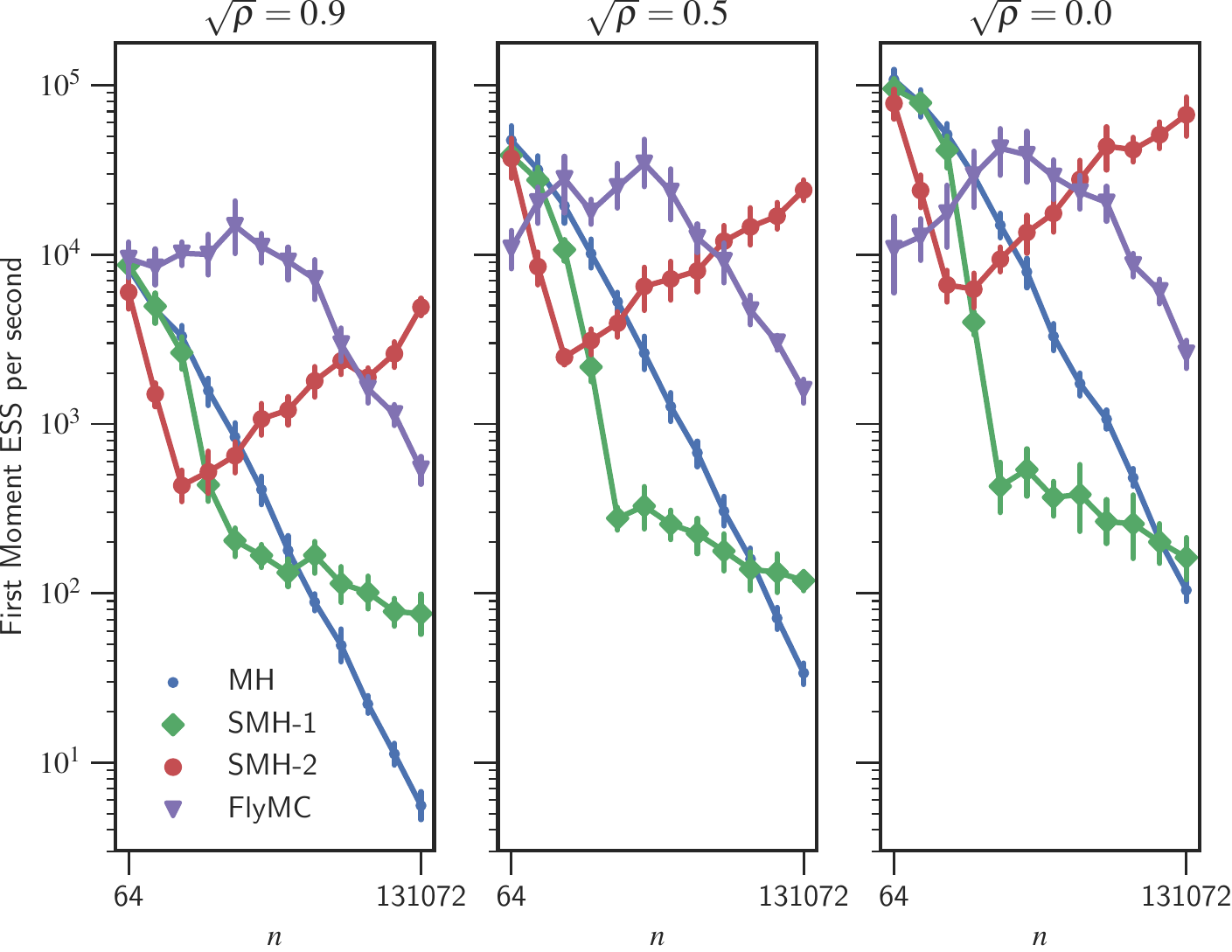}}
\caption{Effect of $\rho$ on ESS for first regression coefficient of the robust linear regression model, scaled by execution time (higher is better).}
\label{fig:rlr_reversible_iact}
\end{center}
\vskip -0.2in
\end{figure}

\begin{figure}[ht!]
\vskip 0.2in
\begin{center}
\centerline{\includegraphics[width=.5\columnwidth]{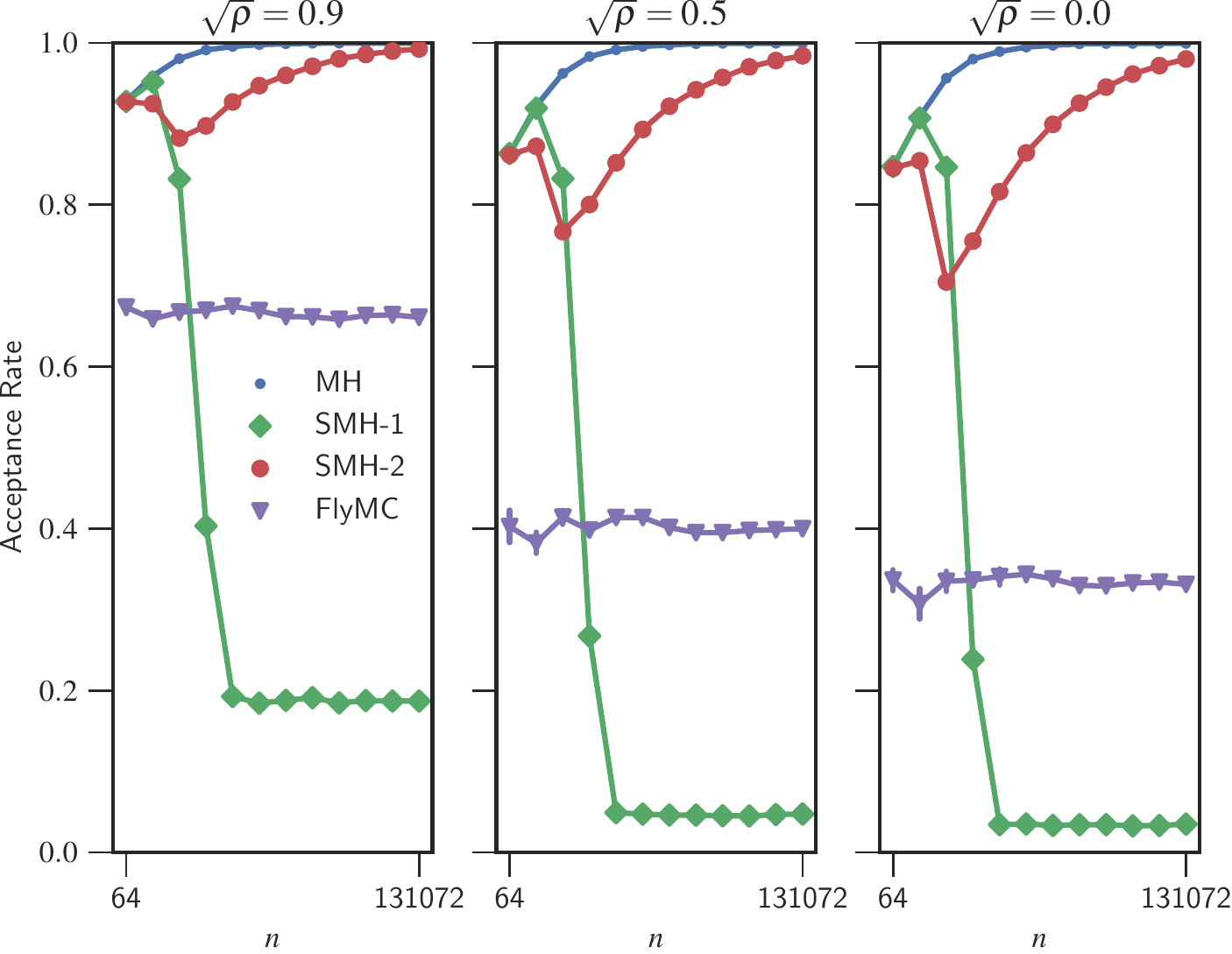}}
\caption{Acceptance rates for pCN proposals for the robust linear regression model.}
\label{fig:rlr_acceptance_rate}
\end{center}
\vskip -0.2in
\end{figure}

In Figure~\ref{fig:rlr_nonreversible_iact} we show effective sample size (ESS) per second for the robust linear regression model; this experiment mimics the conditions of Figure~\ref{fig:nonreversible_iact} in the main text, where we used a logistic regression model. The performance for this model is qualitatively similar to that for logistic regression. Figures~\ref{fig:rlr_reversible_iact} and \ref{fig:rlr_acceptance_rate} show the ESS and acceptance rate for pCN proposals as $\rho$ is varied. These mimic Figures~\ref{fig:reversible_iact} and \ref{fig:acceptance_rate} in the main text.
For these experiments we use synthetic data, taking an $n \times 10$ matrix $X$ with elements drawn independently from a standard normal distribution, and simulate $y_i = \sum_j X_{ij} + \epsilon$ where $\epsilon$ itself is drawn from a standard normal distribution. We choose as the model parameter $\nu=4.0$.

\end{document}